\renewcommand{\cite}{\citep}
\newtheorem{theorem}{Theorem}[section]
\newtheorem{claim}[theorem]{Claim}
\newtheorem{lemma}[theorem]{Lemma}
\newtheorem{proposition}[theorem]{Proposition}
\theoremstyle{definition}
\newtheorem{definition}{Definition}
\def\shownotes{0}  
\newcommand{\authnote}[2]{{ $\ll$\textsf{\footnotesize #1 notes: #2}$\gg$}}
\newcommand{\authnote}[2]{}
\newcommand\E{\mathbb{E}}
\newcommand\R{\mathbb{R}}
\newcommand{\Exp}{\mathop{\mathbb E}\displaylimits}
\newcommand{\psinorm}[2]{\|#1\|_{\psi_{#2}}}
\newcommand{\tilO}{\widetilde{O}}
\newcommand{\relu}{r}
\newcommand{\one}{\mathbf{1}}
\newcommand{\Var}{\textup{Var}}
\newcommand{\drop}{\textrm{drop}}
\title{Why are deep nets reversible: A simple theory, with implications for training}
\author{Sanjeev Arora, Yingyu Liang \& Tengyu Ma\\
Department of Computer Science\\
Princeton University\\
Princeton, NJ 08540, USA \\
\texttt{\{arora,yingyul,tengyu\}@cs.princeton.edu} 
}
\begin{document}

\maketitle

\vspace*{-0.1in}
\begin{abstract}
 Generative models for  deep learning are promising  both to
  improve understanding of the model, and yield training methods requiring fewer labeled samples.
 
 Recent works use generative model approaches to produce the deep net's input given the value of a hidden layer several levels above. 
 However, there is no accompanying \textquotedblleft proof of correctness\textquotedblright\ for the generative model, showing
 that  the feedforward deep net is the correct inference method for recovering the hidden layer given the input.  Furthermore, these models are complicated.

 The current paper takes a more {\em theoretical} tack. It presents a very simple generative model
 for RELU deep nets, with the following characteristics: 
 (i) The generative model is just the {\em reverse} of the feedforward net: if the forward transformation at a layer is $A$ then the reverse transformation is $A^T$. (This can be seen as an explanation of the old 
 {\em weight tying} idea for denoising autoencoders.)
  (ii) Its correctness can be {\em proven}
under a clean theoretical assumption: the edge weights in real-life deep nets behave like random numbers. Under this assumption ---which is experimentally tested on real-life nets like AlexNet--- it is formally proved that feed forward net is a correct inference method for  recovering the hidden layer. 

 The generative model suggests a simple modification for training: use the generative model to produce  synthetic data with labels and  include it in the training set. 
 Experiments are shown to support this theory of random-like deep nets; and that it helps the training.

\end{abstract}

\vspace{-2mm}
\section{Introduction}
\label{sec:intro}

Discriminative/generative pairs of models for classification tasks are an old theme in machine learning~\cite{JordanNg01}.
Generative model analogs for deep learning may not only cast new light on the discriminative backpropagation algorithm,   but also allow learning
with fewer labeled samples. 
 A seeming obstacle in this quest is that deep nets are successful in a variety of domains, 
and it is unlikely that problem inputs in these domains share common families of generative models. 

Some generic (i.e., not tied to specific domain) approaches to defining such models include {\em Restricted Boltzmann Machines}~\cite{FreundHaussler94,hinton2006reducing} and {\em Denoising Autoencoders}~\cite{BengioLamblin,DBLP:conf/icml/VincentLBM08}. Surprisingly, these suggest that
deep nets are {\em reversible}: the generative model is a essentially the feedforward net run in reverse. 
Further refinements include Stacked Denoising Autoencoders~\cite{vincent2010stacked}, Generalized Denoising Auto-Encoders~\cite{Bengio13} and Deep Generative Stochastic Networks~\cite{bengio2013deep}. 

In case of image recognition it is possible to work harder ---using a custom deep net to invert the feedforward net ----and {\em reproduce}  the input very well from the values of hidden layers much higher up, and in fact to generate images very different from any that were used to train the net (e.g.,~\cite{Mahendran15}).  

To explain the contribution of this paper and contrast with past work,  we need to formally define the problem. Let $x$ denote the data/input to the deep net and  $z$ denote the hidden representation (or the output labels). 
The generative model has to satisfy the following: {\bf Property (a)}: Specify a joint distribution 
of $x, z$, or at least  $p(x|z)$. {\bf Property (b)} A proof that the deep net itself is a method of computing the 
(most likely) $z$ given $x$. As explained in Section~\ref{sec:relatedwork}, past work usually fails to satisfy one of (a) and (b).

The current paper introduces a simple mathematical 
explanation for {\em why} such a model should exist for deep nets with fully connected layers. 
We propose the {\em random-like nets hypothesis}, which says  that real-life deep nets --even those obtained from standard supervised learning---are \textquotedblleft random-like,\textquotedblright\ meaning their edge weights behave like random numbers. Notice, this is distinct from saying that the edge weights actually {\em are} randomly generated or uncorrelated. Instead we mean that the weighted graph 
has bulk properties similar to those of random weighted graphs. To give an example, matrices in a host of settings are known to display properties ---specifically, eigenvalue distribution--- similar to matrices with
Gaussian entries; this so-called {\em Universality} phenomenon is a matrix analog of the Law of Large Numbers. The random-like properties of deep nets needed in this paper (see proof of Theorem~\ref{thm:single}) involve a generalized eigenvalue-like property.


If a deep net is  random-like, we can show mathematically (Section~\ref{sec:multilayer})that it has an associated simple generative model $p(x|z)$ (Property (a)) that we call the {\em shadow distribution}, and for which Property (b) also {\em automatically} holds in an approximate sense. (Our proof currently works for up to $3$ layers, though experiments show Property (b) holds for even more layers.)
Our generative model makes essential use of dropout noise and RELUs  and can be seen as providing (yet another) theoretical explanation for the efficacy of these two in modern deep nets.

Note that Properties (a) and (b) hold even for random (and hence untrained/useless) deep nets. Empirically, supervised training seems to improve the shadow distribution, and at the end the synthetic images are somewhat reasonable, albeit  cruder compared to say~\cite{Mahendran15}.

The knowledge that the deep net being sought is random-like can be used to improve training.
Namely, take a labeled data point $x$, and use the current feedforward net to compute its label $z$. Now use the shadow distribution $p(x|z)$ to compute a {\em synthetic}
data point $\tilde{x}$, label it with $z$, and add it to the training set for the next iteration. 
We call this the SHADOW method. 
Experiments reported later show that adding this to training yields 
measurable improvements over backpropagation + dropout for training fully connected layers. Furthermore, throughout training,  the prediction error on synthetic data closely tracks that on the real data, as predicted by the theory.

\vspace{-2mm}
\subsection{Related Work and Notation}
\label{sec:relatedwork}

Deep Boltzmann Machine(~\cite{HintonOT06}) is  an attempt to define a generative model in the above 
sense, and is related to the older notion of {\em autoencoder}.  For a single layer, it posits a joint distribution of the 
observed layer $x$ and the hidden layer $h$ of the form $\exp(-h^TAx)$. 
This makes it {\em reversible}, in the sense that conditional distributions $x~|~h$ and
$h~|x$ are both easy to sample from, using essentially the same distributional form and with shared parameters.
Thus a single layer net indeed satisfies properties (a) and (b).
 To get a multilayer net, the DBN stacks such units. But as pointed out in~\cite{bengiosurvey}
this does not yield a generative model per se since one cannot ensure that the marginal distributions of two adjacent layers match. If one changes  the generative model to match up the conditional probabilities, then one loses reversibility. 
Thus the model violates one of (a) or (b). We know of no prior solution to this issue.

In~(\cite{LeeNgetal09}) the RBM notion is extended to convolutional RBMs. 
In~(\cite{NairHinton12}), the theory of RBMs is extended to allow rectifier linear units, but this involves
approximating RELU's with multiple binary units, which seems inefficient. (Our generative model below will sidestep this inefficient conversion to binary and work directly with RELUs in forward and backward direction.) 
Recently, a sequence of papers define hierarchichal probabilistic models~(\cite{RanganathTCB15,KingmaMRW14,Baraniuk15}) that are plausibly reminiscent of standard deep nets.
Such models can be used to model very lifelike distributions on documents or images, and thus some of them plausibly satisfy Property (a).
But they are not accompanied by any proof that the feedforward deep net solves the inference problem of recovering the top layer
$z$, and thus don't satisfy (b).

The paper of~(\cite{ABGM14}) defines a consistent generative model satisfying (a) and (b) under some restrictive conditions: the neural net edge weights are {\em random} numbers, and the connections satisfy some conditions on degrees, sparsity of each layer etc.  However, the restrictive conditions limit its applicability to real-life nets. The current paper doesn't impose such restrictive conditions.

Finally, several works have tried to show that the deep nets can be inverted, such that the observable layer can be recovered from its representation 
at some very high hidden layer of the net~(\cite{Lee09}).
The recovery problem is solved in the recent paper~(\cite{Mahendran15}) also using a deep net. 
While very interesting, this does not define a generative model (i.e., doesn't satisfy Property (a)).
 {\em Adversarial nets}~\cite{Goodfellow14} can be used to define very good generative models for natural images, but don't attempt to satisfy Property (b). (The discriminative net there doesn't do the inference but just attempts to distinguish between real data and generated one.)

{\bf Notation: } We use $\|\cdot\|$ to denote the Euclidean norm of a vector and spectral norm of a matrix. Also, $\|x\|_{\infty}$ denotes infinity (max) norm of a vector $x$, and $\|x\|_0$ denotes the number of non-zeros.  
 The set of integers $\{1,\dots,p\}$ is denoted by $[p]$. 
 Our calculations use asymptotic notation $O(\cdot)$, which assumes that parameters such as the size of network, the total number of nodes (denoted by $N$), the sparsity parameters (e.g $k_j$'s introduced later) are all sufficiently large. (But $O(\cdot)$
 notation will not hide any impractically large constants.)
Throughout, the \textquotedblleft with high probability  event $E$ happens\textquotedblright\ means the failure probability is upperbounded by $N^{-c}$ for constant $c$, as $N$ tends to infinity. Finally, $\tilO(\cdot)$ notation hides terms that depend on $\log N$. 

\vspace*{-0.1in}
\section{Single layer generative model}
\label{sec:model}

For simplicity let's
start with a single layer neural net  $h = r(W^Tx + b),$ where $r$ is the rectifier linear function, $x \in \R^n, h \in \R^m$. When $h$ has fewer nonzero coordinates than $x$, this has to be a many-to one function, and prior work on 
generative models has tried to define a probabilistic inverse of this function.  Sometimes ---e.g., in context of denoising autoencoders--- such inverses are called {\em reconstruction}
if one thinks of all inverses as being similar.
Here we abandon the idea of reconstruction and  focus on defining {\em many} inverses 
$\tilde{x}$ of $h$. We define a {\em shadow distribution} $P_{W,\rho}$ for
$x|h$, such that a random sample $\tilde{x}$ from this distribution
satisfies Property (b), i.e., $r(W^T\tilde{x}+ b) \approx h$ where $\approx$ denotes approximate equality of vectors.
To understand the considerations in defining such an inverse, 
one must keep in mind that the ultimate goal is to extend the notion 
 to multi-level nets. Thus a generated \textquotedblleft inverse\textquotedblright~$\tilde{x}$ has to look like the {\em output} of a 1-level net in the layer below. As mentioned, this is  where previous attempts such as DBN or denoising autoencoders run into theoretical difficulties. 
%
%

From now on we use $x$ to denote both the random variable and  a specific sample from the distribution defined by the random variable. Given $h$, sampling $x$ from the distribution $P_{W,\rho}$ consists of first computing $r(\alpha Wh)$ for a scalar $\alpha$ and then randomly zeroing-out each coordinate with probability $1-\rho$. (We refer to this noise model  as \textquotedblleft dropout noise.\textquotedblright) Here  $\rho$ can be reduced to make $x$ as sparse as needed; typically $\rho$ will be small. More formally, we have (with $\odot$ denoting  entry-wise product of two vectors):
\begin{equation}
x = r(\alpha Wh)~\odot n_{\drop}\,,\label{eqn:generative_model}
\end{equation}
where $\alpha = 2/(\rho n)$ is a scaling factor, and $n_{\drop}\in \{0,1\}^n$ is a binary random vector with following probability distribution where ($\|n_{\drop}\|_0$ denotes the number of non-zeros of $n_{\drop}$),  
\begin{equation}
\Pr[n_{\drop}] = \rho^{\|n_{\drop}\|_0}.
\end{equation}

Model~\eqref{eqn:generative_model} defines the conditional probability $\Pr[x\vert h]$ (Property (a)). 
The  next informal claim
(made precise in Section~\ref{sec:singlereverse}) shows that Property (b) also holds, 
provided the net (i.e., W) is random-like, and 
$h$ is sparse and nonnegative (for precise statement see Theorem~\ref{thm:single}).

\begin{theorem}[Informal version of Theorem~\ref{thm:single}]\label{thm:informal}
	If entries of $W$ are drawn from i.i.d Gaussian prior, then for $x$ that is generated from  model~\eqref{eqn:generative_model}, there is a threshold $\theta \in \R$ such that $r(W^T x + \theta \one) \approx h$, where
	$\one$ is the all-$1$'s vector.
\end{theorem}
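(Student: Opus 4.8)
The plan is to establish the coordinatewise approximate identity $W^T x \approx h$ over the randomness of both the dropout noise $n_{\drop}$ and the Gaussian matrix $W$, and then to pick the uniform threshold $\theta$ slightly negative so that the outer rectifier erases the residual fluctuations on the zero coordinates of $h$ while essentially preserving its large coordinates. \textbf{Step 1.} Since each coordinate of $x$ equals $r(\alpha Wh)_i$ with probability $\rho$ and $0$ otherwise, independently, we have $\E_{n_{\drop}}[x]=\rho\, r(\alpha Wh)$ and hence $\E_{n_{\drop}}[W^T x]=\rho\, W^T r(\alpha Wh)$, so it suffices to analyze the vector $W^T r(\alpha Wh)\in\R^m$. \textbf{Step 2.} Writing $u=Wh$, the $k$-th coordinate is $\sum_{i=1}^n W_{ik}\,r(\alpha u_i)$; decompose $u_i = W_{ik}h_k + v_i$ with $v_i=\sum_{j\neq k}W_{ij}h_j$ independent of $W_{ik}$. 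Gaussian integration by parts (Stein's identity) in $W_{ik}$, using $r'(t)=\mathbf{1}[t>0]$ and unit-variance entries, gives $\E_{W_{ik}}[W_{ik}r(\alpha u_i)] = \alpha h_k\,\Pr_{W_{ik}}[u_i>0]$; averaging over $v_i$ and using that $u_i$ is a mean-zero Gaussian gives $\Pr[u_i>0]=\tfrac12$. Summing over $i$ and using $\alpha=2/(\rho n)$ yields $\E_W\E_{n_{\drop}}[(W^T x)_k]=\rho\cdot n\cdot\tfrac{\alpha}{2}\,h_k=h_k$. This is precisely where the rectifier and the scale $\alpha$ enter: the diagonal correlation between $W_{ik}$ and $r(\alpha u_i)$ is what regenerates $h_k$.

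\textbf{Concentration.} Next I would bound $(W^T x)_k-h_k$ in two stages. Conditioned on $W$, the terms $W_{ik}(n_{\drop})_i r(\alpha u_i)$ are independent over $i$ with variance $O(\rho\,W_{ik}^2 r(\alpha u_i)^2)$, so Bernstein's inequality concentrates $(W^T x)_k$ around $\rho(W^T r(\alpha Wh))_k$ with deviation $\tilO(\|h\|/\sqrt{\rho n})$. It then remains to show $\sum_i W_{ik}r(\alpha u_i)$ concentrates around its mean $\alpha n h_k/2$; the subtlety is that the same $W$ appears both in $x$ and in the reconstruction map, so I would decouple --- isolate the rank-one contribution of the column $W_{\cdot k}$ through a leave-one-out / Stein decomposition and bound the remaining cross terms by standard Gaussian concentration (spectral-norm estimates for $W$ restricted to $\supp(h)$ and Hanson--Wright-type bounds for the quadratic pieces). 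A union bound over the $m$ coordinates then gives $\|W^T x-h\|_\infty\le\epsilon$ with high probability, for $\epsilon=\tilO(\|h\|/\sqrt{\rho n})$; the sparsity assumption (roughly $\|h\|_0=o(\rho n)$) together with nonnegativity makes $\epsilon$ small relative to the nonzero entries of $h$.

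\textbf{Choosing the threshold.} Finally, take $\theta<0$ with $|\theta|$ a bit larger than $\epsilon$ but smaller than $\min\{h_k:h_k\neq 0\}$ (such a gap is guaranteed by the separation hypothesis of Theorem~\ref{thm:single}). For $k$ with $h_k=0$ we have $(W^T x)_k\le\epsilon$, so $(W^T x)_k+\theta\le0$ and $r$ returns $0=h_k$; for $k$ with $h_k$ large we have $(W^T x)_k+\theta=h_k+O(\epsilon)$ and $r$ acts as the identity, returning $h_k$ up to $O(\epsilon)$. Hence $r(W^T x+\theta\one)\approx h$, as claimed.

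\textbf{Main obstacle.} The heart of the argument is the second part of the concentration step: because the same $W$ is reused to generate and to invert $x$, one cannot treat the two copies as independent, and the point is to show that the only non-negligible correlation is the signal-producing one while every cross term is genuinely small --- this is exactly the ``generalized eigenvalue-like'' random-like property mentioned just before Theorem~\ref{thm:single}. For Gaussian $W$ this follows from decoupling plus concentration of Gaussian quadratic forms, but driving the error $\epsilon$ below the scale of the nonzero entries of $h$ is what forces the sparsity and nonnegativity assumptions.
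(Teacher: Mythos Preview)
Your expectation calculation via Stein's identity is cleaner than the paper's: the paper computes $\E[W_{ji}\,r(W_{ji}h_i+\eta_j)]$ by an explicit integral and Taylor expansion, obtaining $\tfrac12 h_i \pm \tilO(1/k^{3/2})$, whereas your Gaussian-integration-by-parts argument shows this equals $\tfrac12 h_i$ exactly. However, you over-complicate the concentration step. You worry that ``the same $W$ appears both in $x$ and in the reconstruction'' and propose leave-one-out decoupling plus Hanson--Wright; the paper's observation is much simpler: writing $(W^Tx)_k=\sum_j W_{jk}x_j$, each summand depends only on the $j$-th \emph{row} of $W$ (and the $j$-th dropout bit), so after conditioning on $h$ the summands are already independent across $j$, and Bernstein's inequality for sub-exponential random variables applies directly --- no decoupling is needed. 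The ``reuse'' of $W$ lives entirely \emph{within} each term $W_{jk}x_j$, not across them, so your two-stage concentration (first over $n_{\drop}$, then over $W$) and the whole decoupling apparatus are unnecessary. Finally, there is no ``separation hypothesis'' in Theorem~\ref{thm:single} guaranteeing that $\min\{h_k:h_k>0\}$ exceeds the noise level; the paper simply takes $b=-\delta\|h\|$ with $\delta=\tilO(1/\sqrt{t})$, accepts that small nonzero coordinates of $h$ may be shrunk toward zero (each incurring at most $2\delta\|h\|$ error), and bounds the total $\ell_2$ error by $4k\delta^2\|h\|^2=\tilO(k/t)\|h\|^2$.
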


Section~\ref{sec:multilayer} shows that this  1-layer generative model composes layer-wise while preserving Property (b). The main reason this works is that
the above theorem makes  minimal distributional assumptions about  $h$. 
(In contrast with RBMs where $h$ has a complicated distribution that is difficult to match to the next layer.)

\subsection{Why random-like nets hypothesis helps theory}
\label{sec:whyrandom}

 Continuing informally, we explain  why the random-like nets hypothesis ensures that the shadow distribution satisfies Property (b).
  It's helpful to first consider the {\em linear} (and deterministic) generative model sometimes used in autoencoders,  $\hat{x} = Wh$ --- also a subcase of ~\eqref{eqn:generative_model} where $\rho = 1$ and the rectifier linear $r(\cdot)$ is removed. 

Suppose the entries of $W$ are chosen from a distribution with mean $0$ and variance $1$
and are independent of $h$. 
We now show $\hat{h} = W^T\hat{x}$  is close to $h$ itself up to a proper scaling\footnote{Since RELU function $r$ satisfies  for any nonnegative $\alpha$, $r(\alpha x) = \alpha r(x)$,  readers should from now on basically ignore the positive constant scaling, without loss of generality.}. 

A simple way to show this would be to write $\hat{h} = W^TWh$, and then use the fact that for random Gaussian matrix $W$, the covariance $W^TW$ is approximately the identity matrix. However, to get better insight we work out the calculation more laboriously  so as to  
later allow an intuitive extension to the nonlinear case.
Specifically, rewrite a single coordinate  $\hat{h}_i$  as a linear combination of $\hat{x}_j$'s: 
\vspace{-.05in}
\begin{equation}
\hat{h}_i = \sum_{j=1}^n W_{ji}\hat{x}_j. \label{eqn:hi}
\end{equation}
By the definition of $\hat{x}_j$, each term on the RHS of~(\ref{eqn:hi}) is equal to 
\vspace{-.05in}
\begin{equation}
W_{ji}\hat{x}_j = W_{ji}\sum_{\ell=1}^m W_{j\ell}h_{\ell} = \underbrace{W_{ji}^2h_i}_{\textrm{signal:}\mu_j} + \underbrace{W_{ji}\sum_{\ell\neq j}W_{j\ell}h_{\ell}}_{\textrm{noise:}\eta_j}\,.\label{eqn:wjihatxj}
\end{equation}
We split the sum this way to highlight that the first subgroup of terms reinforce each other since 
 $\mu_j = W_{ji}^2 h_i$ is always nonnegative. Thus  $\sum_j \mu_j$ accumulates into a large multiple of $h_i$ and becomes
 the main \textquotedblleft signal\textquotedblright~on the the RHS of~\eqref{eqn:hi}. On the other hand, the noise term $\eta_j$, though it would typically dominate $\mu_j$ in~\eqref{eqn:wjihatxj} in magnitude, has mean zero. Moreover, $\eta_j$ and $\eta_{t}$ for different $j,t$ should be rather independent since they depend on different set of $W_{ij}$'s (though they both depend on $h$). When we sum equation~\eqref{eqn:wjihatxj} over $j$, the term  
 $\sum_j \eta_j$ attenuates due to such cancellation. 
\vspace{-.1in}
\begin{equation}
\hat{h}_i = \sum_{j=1}^n \mu_j + \sum_{j=1}^n \eta_j \approx nh_i + R\label{eqn:bias-noise}
\end{equation}
where we used that $\sum_{j}W_{ji}^2  \approx n$, and $R = \sum_j \eta_j$. Since $R$ is a sum of random-ish numbers, due to the averaging effect, it scales as $\sqrt{nm}$. For large $n$ (that is $\gg m$), the signal dominates the noise and we obtain $\hat{h}_i \approx nh_i$. 
This argument can be made rigorous to give  the proposition below, whose  proof appears in Section~\ref{app:complet-one-layer}.

\begin{proposition}[Linear generative model]
	Suppose entries $W$ are i.i.d Gaussian. Let $n > m$, and $h \in \{0,1\}^m$, and $\hat{x}$ is generated from the deterministic linear model $x = Wh$, then with high probability,  the recovered hidden variable 
$\hat{h} = W^Tx $ satisfies that $\|\hat{h} - h\|_{\infty} \le \tilde{O}\left(\sqrt{m/n}\right)$
\end{proposition}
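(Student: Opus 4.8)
The plan is to prove the $\ell_\infty$ bound coordinate by coordinate and then take a union bound over the $m$ coordinates of $\hat h$. Throughout I use the normalization implicit in the informal discussion above (the RELU-scaling footnote lets us ignore global positive constants): rescale $W$ so that $W_{ji}\sim\mathcal{N}(0,1/n)$, equivalently replace $\hat h$ by $\tfrac1n W^{\t}x$. Fix a coordinate $i\in[m]$ and, expanding $\hat x=Wh$ and grouping terms as in the derivation of~\eqref{eqn:wjihatxj}, write
\[
\hat h_i \;=\; \Big(\sum_{j=1}^n W_{ji}^2\Big)h_i \;+\; R_i, \qquad R_i \;:=\; \sum_{j=1}^n W_{ji}\,u_j, \quad u_j \;:=\; \sum_{\ell\neq i} W_{j\ell}\,h_\ell .
\]
By the triangle inequality $|\hat h_i - h_i| \le |h_i|\cdot\big|\sum_j W_{ji}^2 - 1\big| + |R_i|$, so it suffices to control the ``signal'' prefactor and the ``noise'' term $R_i$ separately.

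For the signal prefactor, $\sum_{j=1}^n W_{ji}^2$ is $\tfrac1n$ times a $\chi^2$ random variable with $n$ degrees of freedom, so standard $\chi^2$ (or Bernstein) concentration puts it within $1\pm\tilde{O}(1/\sqrt n)$ with high probability; since $h_i\in\{0,1\}$ this contributes at most $\tilde{O}(1/\sqrt n)\le\tilde{O}(\sqrt{m/n})$. For the noise term, note that $h$ is a fixed vector, so all randomness is in $W$. Condition on the columns $\{W_{\cdot\ell}:\ell\neq i\}$; then $u=(u_1,\dots,u_n)$ is fixed and, because column $i$ of $W$ is independent of the others, $R_i=\langle W_{\cdot i},u\rangle$ is Gaussian with mean $0$ and variance $\tfrac1n\|u\|^2$. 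It remains to bound $\|u\|^2$: the $u_j$ are i.i.d.\ $\mathcal{N}\!\big(0,\tfrac1n\sum_{\ell\neq i}h_\ell^2\big)$ with $\sum_{\ell\neq i}h_\ell^2\le\|h\|_0\le m$, so another $\chi^2$ concentration gives $\|u\|^2\le\tilde{O}(m)$ with high probability. On that event $R_i$ is sub-Gaussian with variance proxy $\tilde{O}(m/n)$, hence $|R_i|\le\tilde{O}(\sqrt{m/n})$ with high probability (combining the two layers of randomness through the conditioning keeps the total failure probability polynomially small). A union bound over $i\in[m]$ multiplies the failure probability by $m=N^{O(1)}$, still $N^{-c}$ for a suitable constant, and absorbs the $\sqrt{\log N}$ factors from the per-coordinate tails into $\tilde{O}(\cdot)$, giving $\|\hat h - h\|_\infty\le\tilde{O}(\sqrt{m/n})$.

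I do not anticipate a real obstacle here — this is a second-moment plus concentration computation — but the one point needing care is the two-stage conditioning for $R_i$: since $R_i$ mixes entries of the $i$-th column of $W$ with entries of the other columns, it is not a sum of independent terms, and one must first condition on columns $\ell\neq i$ to make $R_i$ a linear (hence Gaussian) function of $W_{\cdot i}$, control $\|u\|^2$ on a high-probability event, and only then apply Gaussian tail bounds. It is also worth noting why the laborious per-coordinate route is preferable to simply writing $\hat h=W^{\t}Wh$ and invoking $\|W^{\t}W-I\|\lesssim\sqrt{m/n}$: that only yields $\|\hat h-h\|_2\le\sqrt{m/n}\,\|h\|_2\le m/\sqrt n$, i.e.\ a much weaker $\ell_\infty$ bound, and moreover the decomposition into ``signal'' and ``noise'' above is exactly the template that Section~\ref{sec:singlereverse} reuses to handle the nonlinear RELU generative model.
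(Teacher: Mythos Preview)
Your proof is correct and follows exactly the signal/noise decomposition the paper sets up in~\eqref{eqn:wjihatxj}--\eqref{eqn:bias-noise}; the paper points to the appendix for a formal proof but does not actually write out the linear case separately there, so your argument is precisely the rigorization the paper leaves to the reader. The one mild difference in technique is your two-stage conditioning (fix columns $\ell\neq i$ so that $R_i$ becomes conditionally Gaussian, then control $\|u\|^2$); the paper's formal proof of the nonlinear analogue (Lemma~\ref{lem:beforerelu}, Lemma~\ref{lem:concentration}) instead observes that the summands $W_{ji}x_j$ are independent across $j$ and applies a Bernstein inequality for sub-exponential variables. Both routes are equally valid for the linear case, and your remark about why the coordinatewise argument is preferred over the spectral bound on $W^{\t}W-I$ is exactly the point the paper is making.
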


The biggest problem with this simple linear generative model is that signal  dominates noise in ~\eqref{eqn:bias-noise} only when  $n \gg m$. That is, the feed-forward direction must always reduce dimensionality by a large
factor, say $10$. This is unrealistic, and fixed in the nonlinear model,   with RELU gates playing an important \textquotedblleft denoising\textquotedblright role.

\vspace{-2mm}
\subsection{\bf Formal proof of Single-layer Reversibility}
\label{sec:singlereverse}

Now we give a formal version of Theorem~\ref{thm:informal}. We begin by introducing a succinct notation for model~\eqref{eqn:generative_model}, which will be helpful for later sections as well. 
Let $t = \rho n$ be the expected number of non-zeros in the vector $n_{\drop}$, and let $s_t(\cdot)$ be the random function  that drops coordinates with probability $1-\rho$, that is, $s_t(z) =  z \odot n_{\drop}$. Then we rewrite model~\eqref{eqn:generative_model} as 
\begin{equation}
x = s_t(r(\alpha Wh)) \label{eqn:succint}
\end{equation}
We make no assumptions on how large  $m$ and $n$ are except to require that $k < t$. Since $t$ is the (expected) number of non-zeros in vector $x$, it is roughly ---up to logarithmic factor--- the amount of \textquotedblleft information\textquotedblright in $x$. Therefore the assumption that $k < t$ is in accord with the usual \textquotedblleft bottleneck\textquotedblright\  intuition, which says that the representation at higher levels has fewer bits of information.

The random-like net hypothesis here  means that the entries of $W$ independently satisfy
$W_{ij}\sim \mathcal{N}(0,1)$.
\begin{equation}
W_{ij}\sim \mathcal{N}(0,1) \label{eqn:prior}
\end{equation}
Also we assume the hidden variable $h$ comes from any distribution $D_h$ that is supported on nonnegative $k$-sparse vectors for some $k$,  where none of the nonzero coordinates are very dominant.
(Allowing a few large coordinates wouldn't kill the theory but the notation and math gets hairier.) 
Technically, when $h\sim D_h$, 
\begin{eqnarray}
h \in \mathbb{R}_{\ge 0}^n, \quad |h|_0 \le k \,\textrm{ and }\, |h|_{\infty} \le \beta\cdot \|h\|\quad \textrm{almost surely} \label{eqn:hdistribution}
\end{eqnarray}
where $\beta = O(\sqrt{(\log k)/k}\ )$. The last assumption essentially says that all the coordinates of $h$ shouldn't be too much larger than the average (which is $1/\sqrt{k} \cdot \|h\|$). 
As mentioned earlier the weak assumption on $D_h$ is the key to  layerwise composability.
\begin{theorem}[Reversibility]\label{thm:single}
	Suppose $t = \rho n$ and $k$ satisfy that $k < t < k^2$. 
	For $(1-n^{-5})$ measure of $W$'s, there exists offset vector $b$, such that the following holds:
	when $h\sim D_h$ and $\Pr[x\vert h]$  is specified by model~\eqref{eqn:generative_model}, then with high probability over the choice of $(h,x)$, 
	\begin{equation}
\|r(W^Tx+b) - h\|^2 \le \widetilde{O}(k/t)\cdot \|h\|^2,.\label{eqn:reversible}
	\end{equation}
	%
\end{theorem}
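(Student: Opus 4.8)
The plan is to analyze the feed‑forward map coordinate‑by‑coordinate, conditionally on the hidden vector $h$, exploiting that $W$ is Gaussian to pin down the conditional mean \emph{exactly} and then to control the fluctuations by a Bernstein‑type bound. Write $g := W^T x\in\R^n$, so $g_i=\sum_{j=1}^n W_{ji}x_j$, and take the offset to be $b=-\theta\one$ for a single scalar $\theta>0$ calibrated to the typical size of $g_i$ at a coordinate where $h_i=0$; thus $r(W^Tx+b)_i=r(g_i-\theta)$. The error splits as $\|r(W^Tx+b)-h\|^2=\sum_{i:\,h_i>0}\big(r(g_i-\theta)-h_i\big)^2+\sum_{i:\,h_i=0}r(g_i-\theta)^2$, so the strategy is: (1) show $g_i$ concentrates around $h_i$ with a confidence radius $\delta:=\tilO(\|h\|/\sqrt t)$ \emph{simultaneously for all} $i\in[n]$; (2) choose $\theta$ of order $\delta$ so the second (long) sum vanishes with high probability while each of the at most $k$ terms of the first sum is only $O(\theta^2+\delta^2)=\tilO(\|h\|^2/t)$, yielding $\tilO(k/t)\|h\|^2$; (3) convert "high probability over $(W,h,x)$'' into the stated "$(1-n^{-5})$ of $W$'s, then high probability over $(h,x)$''.

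For step (1), fix $h$ and $i$. Conditioned on $h$, the summands $Y_j:=W_{ji}x_j=W_{ji}\,r\big(\alpha(Wh)_j\big)\,(n_{\drop})_j$ are independent across $j$ (disjoint entries of $W$, disjoint dropout bits). Write $(Wh)_j=W_{ji}h_i+u_j$ with $u_j:=\sum_{\ell\ne i}W_{j\ell}h_\ell$ independent of $W_{ji}$. Since $W_{ji}\sim\mathcal N(0,1)$ and $(Wh)_j$ is a centered Gaussian, Gaussian integration by parts (Stein's identity), together with $\alpha=2/(\rho n)=2/t$, gives
\[
\E[Y_j\mid h]=\rho\,\E\big[W_{ji}\,r(\alpha W_{ji}h_i+\alpha u_j)\mid h\big]=\rho\,\alpha h_i\,\Pr[(Wh)_j>0]=\tfrac{\rho\alpha}{2}h_i=\tfrac{h_i}{n},
\]
so $\E[g_i\mid h]=h_i$ \emph{exactly}, for every $i$ (both $h_i>0$ and $h_i=0$). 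Since $r$ is $1$‑Lipschitz, $\|r(\alpha(Wh)_j)\|_{\psi_2}\le\alpha\|h\|$, hence each $Y_j$ is sub‑exponential with $\|Y_j-\E[Y_j\mid h]\|_{\psi_1}=O(\alpha\|h\|)=O(\|h\|/t)$; moreover $\Var(g_i\mid h)=\sum_j\Var(Y_j\mid h)\le\sum_j\rho\,\E[W_{ji}^2(\alpha(Wh)_j)^2\mid h]=\tfrac4t\big(\|h\|^2+2h_i^2\big)=O(\|h\|^2/t)$, where the flatness assumption $h_i^2\le\beta^2\|h\|^2$ from~\eqref{eqn:hdistribution} keeps this uniform in $i$. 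Bernstein's inequality for independent sub‑exponentials then gives $\Pr[\,|g_i-h_i|>\delta\mid h\,]\le n^{-10}$ with $\delta=C\|h\|\sqrt{\log n}/\sqrt t$, $C$ a large constant (this uses $t=\tilOmega(1)$, i.e.\ $t$ at least polylogarithmic, so the variance term dominates the max term in Bernstein); a union bound over $i\in[n]$ yields $\|g-h\|_\infty\le\delta$ with probability $\ge1-n^{-9}$ over $W$ and the dropout noise, for each fixed $h\in\supp(D_h)$.

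Given step (1), set $\theta:=2\delta$. On the event $\|g-h\|_\infty\le\delta$: every coordinate with $h_i=0$ has $g_i-\theta\le\delta-2\delta<0$, so $r(g_i-\theta)=0$ and the second sum is $0$; every one of the $\le k$ coordinates with $h_i>0$ has $|r(g_i-\theta)-h_i|\le|g_i-h_i|+\theta\le3\delta$, contributing at most $9\delta^2$. Hence $\|r(W^Tx+b)-h\|^2\le 9k\delta^2=\tilO(k/t)\|h\|^2$, establishing~\eqref{eqn:reversible} with probability $\ge1-n^{-9}$ over the joint draw of $(W,h,x)$. Finally, applying Markov's inequality to $W\mapsto\Pr_{h,x\mid W}[\text{\eqref{eqn:reversible} fails}]$ shows that a $(1-n^{-4})\ge(1-n^{-5})$ fraction of $W$ satisfy $\Pr_{h,x\mid W}[\text{\eqref{eqn:reversible} fails}]\le n^{-5}$, which is exactly the claimed form (the fixed offset $b=-\theta\one$ works for all such $W$). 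The extra hypothesis $t<k^2$ is not needed for this single‑layer bound; it is used only when verifying that the generated $x$ again obeys the flatness condition~\eqref{eqn:hdistribution} required to apply the theorem to the next layer.

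The main obstacle is step (1). Here $g_i$ is a genuinely nonlinear function of the Gaussian matrix $W$ (each summand multiplies one Gaussian entry by a rectified Gaussian combination of a whole row), thinned by dropout, and the argument must both (a) make the conditional mean \emph{equal} $h_i$ — this is where the specific scaling $\alpha=2/(\rho n)$ and the symmetry of the RELU's argument about $0$ are essential, and where a crude "$W^TW\approx I$'' heuristic would be too lossy — and (b) produce a tail whose radius $\delta$ is of order $\|h\|/\sqrt t$ rather than $\|h\|\sqrt n/\sqrt t$, i.e.\ \emph{dimension‑free}, so that summing per‑coordinate errors over only the $k$ active coordinates (and annihilating the other $\approx n$ via the offset) gives $\tilO(k/t)$ and not $\tilO(n/t)$. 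Carefully tracking the sub‑exponential parameters (variance versus maximum summand) and the union bound over all $n$ coordinates is the technical heart, and the flatness hypothesis on $D_h$ enters precisely to keep $\Var(g_i\mid h)=O(\|h\|^2/t)$ uniformly in $i$.
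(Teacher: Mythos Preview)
Your proposal is correct and follows essentially the same architecture as the paper's proof: establish $\|W^Tx-h\|_\infty\le\delta=\tilO(\|h\|/\sqrt t)$ coordinate-wise (the paper's Lemma~\ref{lem:beforerelu}), then choose a uniform offset $b=-\Theta(\delta)\one$ so that RELU kills all coordinates off the support of $h$ while perturbing the on-support coordinates by at most $O(\delta)$, and finally convert the joint high-probability bound into the ``for most $W$'' form by Markov.

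The one genuine difference worth recording is your computation of the conditional mean. The paper proves a standalone lemma (Lemma~\ref{lem:exp}) showing $\E[w\,r(wh+\xi)]=\tfrac h2\pm\tilO(1/\sigma^3)$ via a Taylor expansion of the integral $\int_0^{wh}\phi(y)(wh-y)\,dy$, and this residual bias term is what forces them to invoke the hypothesis $t<k^2$ (so that $1/k^{3/2}$ is dominated by $\sqrt{k/t}$). Your use of Gaussian integration by parts is cleaner: since $(W_{ji},(Wh)_j)$ is jointly Gaussian with $\mathrm{Cov}(W_{ji},(Wh)_j)=h_i$, Stein's lemma gives $\E[W_{ji}\,r(\alpha(Wh)_j)\mid h]=\alpha h_i\Pr[(Wh)_j>0]=\alpha h_i/2$ \emph{exactly}, so $\E[g_i\mid h]=h_i$ with zero bias. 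You are therefore right that $t<k^2$ is not needed for the single-layer statement.

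One small point to make precise: the version of Bernstein you invoke must use the actual variance $V=\sum_j\Var(Y_j\mid h)=O(\|h\|^2/t)$ together with the per-term sub-exponential scale $M=O(\|h\|/t)$. If instead you plug into the $\psi_1$-norm-only form (the paper's Theorem~\ref{thm:berstein-psi1}), the proxy $n\bar v=\sum_j\|Y_j\|_{\psi_1}^2$ picks up a factor $n$ rather than $t$, since the dropout does not shrink the Orlicz norm. The paper handles this by first conditioning on the dropout support $T$ (so there are only $|T|\approx t$ nonzero summands) and then applying $\psi_1$-Bernstein; your route via the true variance is equally valid but deserves an explicit reference to a variance-plus-$\psi_1$ Bernstein inequality.
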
 

The next theorem says that the generative model predicts that the trained net should be stable to dropout. 

\begin{theorem}[Dropout Robustness]\label{thm:dropout}
	Under the condition of Theorem~\ref{thm:single}, suppose we further drop 1/2 fraction of values of $x$ randomly and obtain $x^{\textrm{drop}}$, then there exists some offset vector $b'$, such that with high probability over the randomness of $(h,x^{\drop})$, 
we have 
\begin{equation}
\|r(2W^Tx^{\textrm{drop}} + b')- h\|^2 \le \tilO(k/t)\cdot \|h\|^2.
\end{equation}

\end{theorem}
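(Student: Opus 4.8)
The plan is to prove Theorem~\ref{thm:dropout} by \emph{reducing} it to Theorem~\ref{thm:single}, exploiting the fact that an extra round of independent dropout applied to a sample of the generative model~\eqref{eqn:generative_model} is, up to a deterministic rescaling, again a sample of the \emph{same} model but with a smaller retention probability. Concretely, write $x = s_t(r(\alpha W h))$ as in~\eqref{eqn:succint}, where $\alpha = 2/t$ and $t = \rho n$. By definition $x^{\drop} = x \odot n'$, where $n' \in \{0,1\}^n$ is an independent dropout mask retaining each coordinate with probability $1/2$. First I would observe that the coordinate-wise product of the two independent dropout masks (retention probabilities $\rho$ and $1/2$) is again a coordinate-wise-independent dropout mask, with retention probability $\rho/2$; hence $x^{\drop}$ is distributed exactly as $s_{t'}(r(\alpha W h))$ with $t' = t/2$.

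Next I would absorb the scaling using positive homogeneity of the rectifier, $r(cz) = c\, r(z)$ for $c \ge 0$: if $\alpha' := 2/(\rho' n) = 2\alpha$ with $\rho' = \rho/2$, then $r(\alpha W h) = \tfrac12 r(\alpha' W h)$, so
\begin{equation}
x^{\drop} \;\stackrel{d}{=}\; \tfrac12\, x', \qquad\text{where}\qquad x' := s_{t'}(r(\alpha' W h))
\end{equation}
is \emph{exactly} a sample from the generative model~\eqref{eqn:generative_model} with retention parameter $\rho' = \rho/2$ (expected sparsity $t' = t/2$), for the same $h \sim D_h$. I would then check that the pair $(t',k)$ still meets the hypothesis of Theorem~\ref{thm:single}: since $t' = t/2 < t < k^2$ holds automatically, the only thing to verify is $k < t' = t/2$, which follows from a harmless strengthening of the hypothesis $k<t<k^2$ to, say, $2k < t < k^2$ (or one simply reads ``the condition of Theorem~\ref{thm:single}'' as including this). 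Applying Theorem~\ref{thm:single} with $t'$ in place of $t$: for $(1-n^{-5})$ measure of $W$ there is an offset $b''$ such that $\|r(W^T x' + b'') - h\|^2 \le \tilO(k/t')\,\|h\|^2$ with high probability over $(h,x')$.

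Finally I would translate the bound back. Because $2x^{\drop}$ and $x'$ have the same joint law with $h$, I can take $b' = b''$ and replace $W^T x'$ by $2W^T x^{\drop}$; and $\tilO(k/t') = \tilO(2k/t) = \tilO(k/t)$, so Theorem~\ref{thm:single} delivers exactly $\|r(2 W^T x^{\drop} + b') - h\|^2 \le \tilO(k/t)\,\|h\|^2$ with high probability over the randomness of $(h, x^{\drop})$, which is literally the randomness of $(h, n, n')$, equivalently of $(h, x')$. The main obstacle here is not any deep difficulty---all the analytic content already lives in Theorem~\ref{thm:single}---but rather careful bookkeeping: making the scaling factors line up so that the factor $2$ in front of $W^T x^{\drop}$ in the statement is precisely the one produced by $\alpha' = 2\alpha$ and $x' = 2x^{\drop}$; handling the boundary condition $k < t/2$; and confirming that the ``with high probability over $(h,x^{\drop})$'' event is literally the event appearing in Theorem~\ref{thm:single}, using that the two independent dropout masks fuse into a single one.
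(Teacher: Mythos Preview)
Your proposal is correct and takes essentially the same approach as the paper: reduce to Theorem~\ref{thm:single} by observing that $x^{\drop}$ has the same law as a sample from the generative model with $t$ replaced by $t/2$, then invoke the single-layer result. The paper's proof is a terse two-liner that does not spell out the $\alpha' = 2\alpha$ rescaling or the $k<t/2$ boundary condition; your more careful bookkeeping (using positive homogeneity of the RELU to match scalings and explain the factor $2$ in $2W^T x^{\drop}$) is a welcome clarification, not a different argument.
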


To parse the theorems, we note that the error is on the order of the sparsity ratio $k/t$ (up to logarithmic factors), which is necessary since information theoretically the generative direction must increase the amount of information so that good inference is possible. In other words, our theorem shows that under our generative model, feedforward calculation (with or without dropout) can estimate the hidden variable up to a small error when $k \ll t$. Note that this is 
different from usual notions in generative models such as MAP or MLE inference. We get direct guarantees on the estimation error which neither MAP or MLE can guarantee\footnote{Under the situation where MLE (or MAP) is close to the true $h$, our neural-net inference algorithm is also guaranteed to be the close to both MLE (or MAP) and the true $h$.}. 


Furthermore, in Theorem~\ref{thm:dropout}, we need to choose a scaling of factor 2 to compensate the fact that  half of the signal $x$ was dropped. Moreover, we remark that an interesting feature of our theorems is that we show an almost uniform offset vector $b$ (that is, $b$ has almost the same entries across coordinate) is enough. This matches the observation  that in the trained Alex net~\cite{ImageNet}, for most of the layers the offset vectors are almost uniform\footnote{Concretely, for the simple network we trained using implementation of~\cite{jia2014caffe}, for 5 out of 7 hidden layers, the offset vector is almost uniform with mean to standard deviation ratio larger than 5. For layer 1, the ratio is about 1.5 and for layer 3 the offset vector is almost 0.}. In our experiments (Section~\ref{sec:exp}), we also found restricting the bias terms in RELU gates to all be some constant makes almost no change to the performance. 

We devote the rest of the sections to sketching a proof of Theorems~\ref{thm:single} and~\ref{thm:dropout}. 
The main intermediate step of the proof is the following lemma, which will be proved at the end of the section: 

\begin{lemma}\label{lem:beforerelu}
	Under the same setting as Theorem~\ref{thm:single}, with high probability over the choice of $h$ and $x$, we have that for $\delta = \widetilde{O}(1/\sqrt{t})$, 
	\begin{equation}
	\|W^Tx - h \|_{\infty} \le \delta \|h\|\label{eqn:infinitenorm}\,.
	\end{equation}
\end{lemma}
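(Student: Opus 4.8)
\textbf{Proof plan for Lemma~\ref{lem:beforerelu}.}
The plan is to fix one coordinate $i$ of $h$, prove $|(W^Tx)_i-h_i|\le\delta\|h\|$ with all but $n^{-c}$ probability (for a constant $c$ we may take as large as we like, which only affects the hidden constant in $\delta=\tilO(1/\sqrt t)$), and then union bound over the $\le N$ coordinates. We condition on $h$ being any value in the support of $D_h$, so that the constraints~\eqref{eqn:hdistribution} hold, and treat only $W$ (with i.i.d.\ $\N(0,1)$ entries) and the dropout vector $n_{\drop}$ as random; since the bound then holds for every admissible $h$, it holds for $h\sim D_h$, and the ``$(1-n^{-5})$ measure of $W$'' formulation of Theorem~\ref{thm:single} follows by Fubini. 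With $\alpha=2/t$, model~\eqref{eqn:succint} gives
\begin{equation}
(W^Tx)_i=\sum_{j=1}^{n}Z_j,\qquad Z_j:=W_{ji}\,(n_{\drop})_j\,r\!\big(\alpha(Wh)_j\big),\qquad (Wh)_j=W_{ji}h_i+v_j,
\end{equation}
where $v_j:=\sum_{\ell\neq i}W_{j\ell}h_\ell\sim\N(0,\sigma^2)$ is independent of $W_{ji}$, with $\sigma^2=\|h\|^2-h_i^2=\Theta(\|h\|^2)$ by~\eqref{eqn:hdistribution}. This split does two things at once: it isolates the self-correlated contribution of $W_{ji}$ (the ``signal'' of the heuristic in Section~\ref{sec:whyrandom}), and it makes manifest that $Z_1,\dots,Z_n$ are \emph{mutually independent}, since $Z_j$ depends only on the $j$-th row of $W$ and on $(n_{\drop})_j$.

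First I would compute the mean exactly. Conditioning on $v_j$ and $(n_{\drop})_j$ and applying Gaussian integration by parts (Stein's identity) to the $1$-Lipschitz map $g\mapsto r(\alpha(gh_i+v_j))$, whose a.e.\ derivative is $\alpha h_i\,\mathbf 1[gh_i+v_j>0]$, gives
\begin{equation}
\E_{W_{ji}}\!\big[W_{ji}\,r(\alpha(W_{ji}h_i+v_j))\big]=\alpha h_i\,\Pr_{W_{ji}}[\,W_{ji}h_i+v_j>0\,],
\end{equation}
which after averaging over the symmetric $v_j$ is $\alpha h_i/2$ (and $0$ when $h_i=0$); multiplying by $\rho=\E[(n_{\drop})_j]$ yields $\E[Z_j]=\rho\alpha h_i/2=h_i/n$, hence $\E[(W^Tx)_i]=h_i$. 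Next, for the fluctuation I would bound two quantities: the variance proxy, using $r(y)^2\le y^2$, $\E[W_{ji}^2(W_{ji}h_i+v_j)^2]=3h_i^2+\sigma^2=O(\|h\|^2)$ and the cancellation $n\rho\alpha^2=4/t$, to get $\sum_j\Var(Z_j)\le n\rho\alpha^2\cdot O(\|h\|^2)=O(\|h\|^2/t)$; and the per-summand tail, noting that $Z_j$ is a product of the sub-Gaussian $W_{ji}$ (of $\psi_2$-norm $O(1)$) with the sub-Gaussian $r(\alpha(W_{ji}h_i+v_j))$ (of $\psi_2$-norm $O(\alpha\|h\|)$), so $\psinorm{Z_j}{1}=O(\alpha\|h\|)=O(\|h\|/t)$. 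A Bernstein inequality for independent sub-exponential variables, applied with $\tau=C\sqrt{\log n}\cdot\|h\|/\sqrt t$, then bounds the failure probability by $2\exp(-c'\min(\tau^2/\sum_j\Var(Z_j),\,\tau/\max_j\psinorm{Z_j}{1}))=2\exp(-\Omega(C^2\log n))$, because the first term of the minimum is $\Omega(C^2\log n)$ and the second is $\Omega(C\sqrt{t\log n})\gg\log n$ (here $t\ge k\gg\log n$ in the asymptotic regime of the paper). Taking $C$ large and union bounding over the coordinates of $h$ gives $\|W^Tx-h\|_\infty\le C\sqrt{\log n}\cdot\|h\|/\sqrt t=\tilO(1/\sqrt t)\cdot\|h\|$, as claimed.

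The conceptual crux --- and the main obstacle --- is precisely that the generative map and the inference map share the matrix $W$, so $(W^Tx)_i$ contains a self-correlated term of size $\approx\alpha\sum_j W_{ji}^2h_i$ that does \emph{not} average away; the whole content of the lemma is that this surviving term is exactly the target $h_i$ rather than spurious, which is what the decomposition $(Wh)_j=W_{ji}h_i+v_j$ together with the Stein computation makes precise. The remaining work is quantitative rather than conceptual: the summands $Z_j$ are products of sub-Gaussians and hence only sub-exponential, so Hoeffding is insufficient and one needs a Bernstein bound with a \emph{sharp} variance proxy --- it is exactly the scaling $\alpha=2/t$ in the generative model that forces $\sum_j\Var(Z_j)=O(\|h\|^2/t)$ rather than something larger, and this is what pins the error at $\tilO(\|h\|/\sqrt t)$ and is where the assumption that $t$ (equivalently $k$) is large enters.
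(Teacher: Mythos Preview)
Your proposal is correct and follows the same overall architecture as the paper's proof --- fix a coordinate, write $(W^Tx)_i$ as a sum of row-wise independent terms, compute the mean, and apply a Bernstein-type inequality for sub-exponential summands, then union bound --- but two sub-steps differ in a way worth noting.

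First, for the mean you use Stein's identity to get $\E[W_{ji}\,r(\alpha(W_{ji}h_i+v_j))]=\alpha h_i\Pr[W_{ji}h_i+v_j>0]=\alpha h_i/2$ \emph{exactly}, hence $\E[(W^Tx)_i]=h_i$. The paper instead proves an auxiliary lemma (its Lemma~\ref{lem:exp}) via a fourth-order Taylor expansion of the residual integral $G(w)=w\int_0^{wh}\phi(y)(wh-y)\,dy$, obtaining only $\E[W_{ji}x_j\mid h]=\tfrac{1}{2}h_i\pm\tilO(1/k^{3/2})$ and then carrying this bias term through the rest of the argument. Your Stein computation shows that in fact $\E[G(w)]=0$ identically, so the paper's bias term is an artifact of the method; your route is shorter and removes one source of error.

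Second, the paper conditions on the dropout support $T$ before applying Bernstein, so that the sum has $|T|\approx t$ nonzero terms and the $\psi_1^2$-sum in its Theorem~\ref{thm:berstein-psi1} is automatically $O(t\cdot\alpha^2\|h\|^2)=O(\|h\|^2/t)$. You instead keep $(n_{\drop})_j$ random inside $Z_j$ and invoke Bernstein with the \emph{actual} variance $\sum_j\Var(Z_j)=O(n\rho\alpha^2\|h\|^2)=O(\|h\|^2/t)$; this is legitimate because the $Z_j$ satisfy the Bernstein moment condition $\E|Z_j|^p\le \tfrac{p!}{2}\sigma_j^2 K^{p-2}$ with $\sigma_j^2\asymp\rho(\alpha\|h\|)^2$ and $K\asymp\alpha\|h\|$, but note that this is a different (variance-based) form of Bernstein than the $\psi_1^2$-sum version the paper cites, so you should state which version you are using. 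Both routes arrive at the same deviation scale $\tilO(\|h\|/\sqrt t)$.
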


Observe that since $h$ is $k$-sparse, the average absolute value of the non-zero coordinates of $h$ is $\tau = \|h\|/\sqrt{k}$. Therefore the entry-wise error $\delta\|h\|$ on RHS of~\eqref{eqn:infinitenorm} satisfies $\delta\|h\|= \epsilon \tau$ with $\epsilon = \widetilde{O}(\sqrt{k/t})$. 
This means that $W^Th$ (the hidden state before offset and RELU) estimates the non-zeros of $h$ with entry-wise relative error $\epsilon$ (in average), though the relative errors on the zero entries of $h$ are huge. Therefore, even though we get a good estimate of $h$ in $\ell_{\infty}$ norm, the $\ell_2$ norm of the difference $W^Tx -h$ could be as large as $\epsilon \tau \sqrt{n}  = \widetilde{O}(\sqrt{m/t})\|h\|$ which could be larger than $\|h\|$. 

It turns out that the offset vector $b$ and RELU have the denoising effects that reduce errors on the non-support of $h$ down to 0 and therefore drive the $\ell_2$ error down significantly. The following proof of Theorem~\ref{thm:single} (using Lemma~\ref{lem:beforerelu}) formalizes this intuition. 
	\vspace{-.1in}
\begin{proof}[Proof of Theorem~\ref{thm:single}]
		Let $\hat{h} = W^Tx$. By Lemma~\ref{lem:beforerelu}, we have that $|\hat{h}_i - h_i| \le \delta \|h\|$ for any $i$. Let $b = -\delta\|h\|$. For any $i$ with $h_i = 0$, $\hat{h}_i + b \le h_i + \delta \|h\| + b \le 0$, and therefore $r(\hat{h}_i + b ) = 0 = h_i$. On the other hand, for any $i$ with $h_i \neq 0$, we have $|\hat{h}_i + b  - h_i| \le |b| + |\hat{h}_i - h_i|\le 2\delta \|h\|$. It follows that  $|r(\hat{h}_i + b)  - r(h_i)| \le 2\delta \|h\|$. Since $h_i$ is nonnegative, we have that $|r(\hat{h}_i + b)  - h_i| \le 2\delta \|h\|$. Therefore the $\ell_2$ distance between $r(W^Tx+b) = r(\hat{h}+b)$ and $h$ can be bounded by 
		$\|r(\hat{h}+b) - h\|^2 = \sum_{i : h_i \neq 0} |r(\hat{h}_i + b)  - h_i|^2 \le 4k \delta^2 \|h\|^2$. Plugging in $\delta = \widetilde{O}(1/\sqrt{t})$ we obtain the desired result. 
\end{proof}
Theorem~\ref{thm:dropout} is a direct consequence of Theorem~\ref{thm:single}, essentially due to the dropout nature of our generative model (we sample in the generative model which connects to dropout naturally). See Section~\ref{app:complet-one-layer} for its proof. 
We conclude with a proof sketch  of Lemma~\ref{lem:beforerelu}. The complete proof can be found in Section~\ref{app:complet-one-layer}. 
	\vspace{-.1in}
\begin{proof}[Proof Sketch of Lemma~\ref{lem:beforerelu}]

	Here due to page limit, we give a high-level proof sketch that demonstrates the key intuition behind the proof, by skipping most of the tedious parts. See section~\ref{app:complet-one-layer} for full details. 
	
	By a standard Markov argument, we claim that it suffices to show that w.h.p over the choice of ($(W,h,x)$ the network is reversible, that is, 
	$
	\Pr_{x,h,W}\left[\textrm{equation~\eqref{eqn:reversible} holds}\right] \ge 1 - n^{10}\,.
	$
	Towards establishing this, we note that equation~\eqref{eqn:reversible} 
	holds for any positive scaling of $h,x$ simultaneously, since RELU has the property that $r(\beta z) = \beta \cdot r(z)$ for any $\beta \ge 0$ and any $z\in \mathbb{R}$.  Therefore WLOG we can choose a proper scaling of $h$ that is convenient to us. We assume $\|h\|_2^2 = k$. By assumption~\eqref{eqn:hdistribution}, we have that $|h|_{\infty}\le \widetilde{O}(\sqrt{\log k})$. 
	
	Define $\hat{h} = \alpha W^Tx$. Similarly to~\eqref{eqn:hi}, we fix $i$ and expand the expression for $\hat{h}_i$ by definition, 
		\vspace{-.05in}
	\begin{equation}
	\hat{h}_i  = \alpha\sum_{j=1}^n W_{ji}x_j 
	\end{equation}	
	Suppose $n_{\drop}$ has support $T$.  Then we can write $W_{ji}x_j$ as 
	\vspace{-.05in}
	\begin{align}
	W_{ji}x_j  &=  W_{ji}\relu(\sum_{\ell=1}^m W_{j\ell} h_{\ell}) \cdot n_{\drop,j}=  W_{ji}\relu(W_{ji} h_i + \eta_j)\cdot \mathbf{1}_{j\in T} \,,\label{eqn:eqn29-main}
	\end{align}
	where $\eta_j \triangleq \sum_{\ell\neq j}W_{j\ell}h_{\ell}$. Though $\relu(\cdot)$ is nonlinear, it is piece-wise linear and more importantly still Lipschitz. Therefore intuitively, RHS of~\eqref{eqn:eqn29-main} can be ``linearized" by approximating 
	$	\relu(W_{ji}h_i + \eta_j) $ by $\mathbf{1}_{\eta_j > 0}\cdot \left(W_{ji}h_i + \relu(\eta_j)\right)$. 
	Note that this approximation is not accurate only when $|\eta_j|\le |W_{ji}h_i|$, which happens with relatively small probability, since $\eta_j$ typically dominates $W_{ji}h_{i}$ in magnitude. 
	

	Then $W_{ji}x_j$ can be written approximately as $\mathbf{1}_{\eta_j > 0}\cdot \left(W_{ji}^2h_i + W_{ji}\relu(\eta_j)\right)$, where the first term corresponds to the bias and the second one corresponds to the noise (variance), similarly to the argument in the linear generative model case in Section~\ref{sec:whyrandom}. 
	
	Using the intuition above, one can formally prove (as in the full proof in Section~\ref{app:complet-one-layer}) that $\Exp[W_{ji}x_j] =\frac{1}{2} h_i \pm \widetilde{O}(1/k^{3/2})$ and $\Var[W_{ji}x_j] = O(\sqrt{k/t})$. By a concentration inequality\footnote{Note that to apply concentration inequality, independence between random variables is needed. Therefore technically, we need to condition on $h$ and $T$ before applying concentration inequality, as is done in the full proof of this lemma in Section~\ref{app:complet-one-layer}. } for the sum  $\hat{h}_i = \alpha\sum_{j=1}^n W_{ji}x_j $, one can show that 		
	with high probability ($1-n^{-10}$) we have $|\hat{h}_i -  h_i| \le \widetilde{O}(\sqrt{k/t})$.  
		Taking union bound over all $i\in [n]$, we obtain that with high probability, $\|\hat{h} - h\|_{\infty} \le \widetilde{O}(\sqrt{k/t})$.  Recall that $\|h\|$ was assumed to be equal to $k$ (WLOG). Hence we obtain that $\|\hat{h} - h\|_{\infty} \le \widetilde{O}(\sqrt{1/t})\|h\|$ as desired. 
\end{proof}

\section{Full multilayer model}
\label{sec:multilayer}
\vspace*{-0.1in}
\setlength{\textfloatsep}{15pt}
\begin{figure}\label{generative-discriminative pair}
	\label{fig:gen-disc}
	\hfill
	\subfigure[Generative model]{\includegraphics[width=5cm]{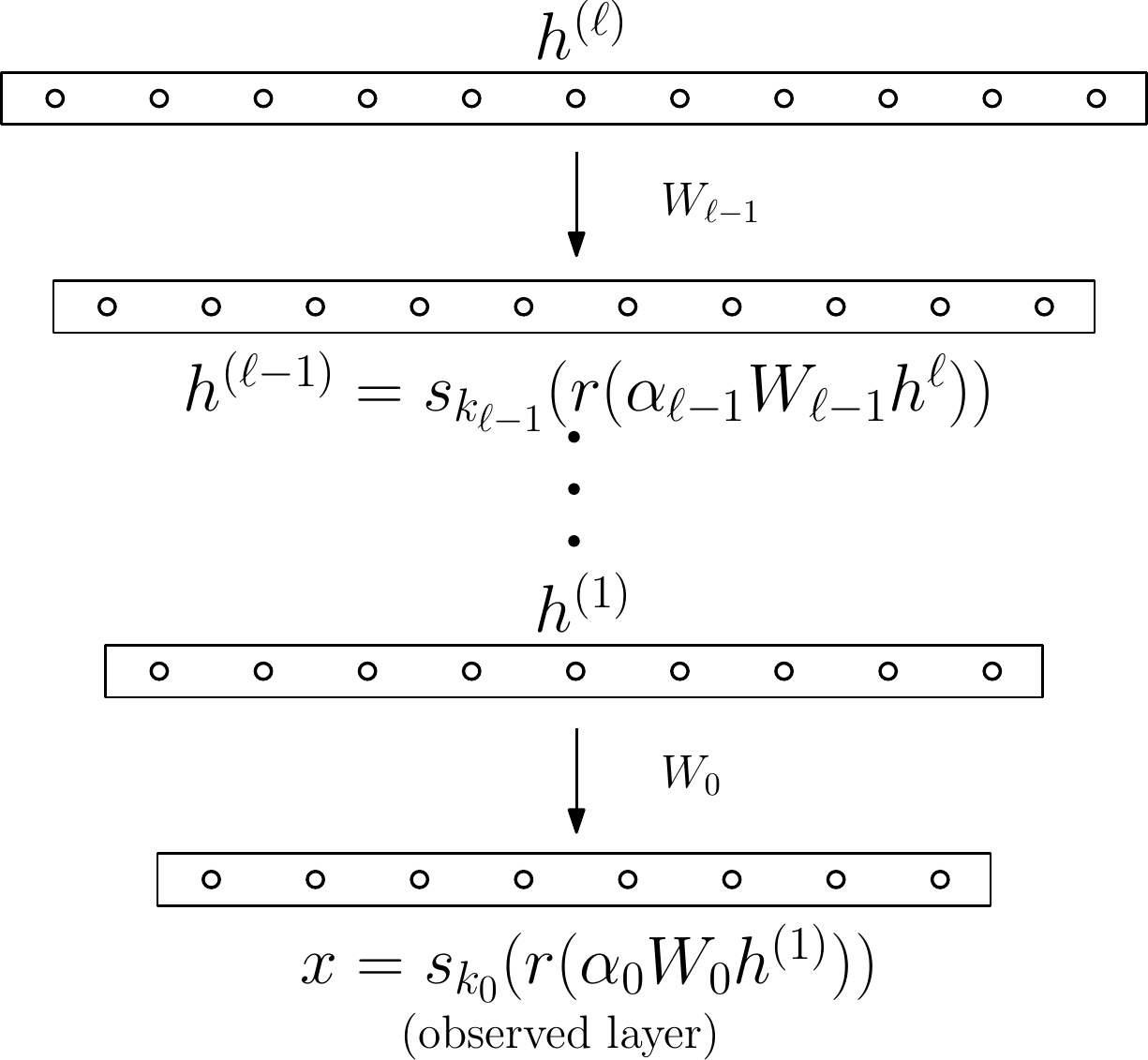}}
	\hfill
	\subfigure[Feedforward NN]{\includegraphics[width=5cm]{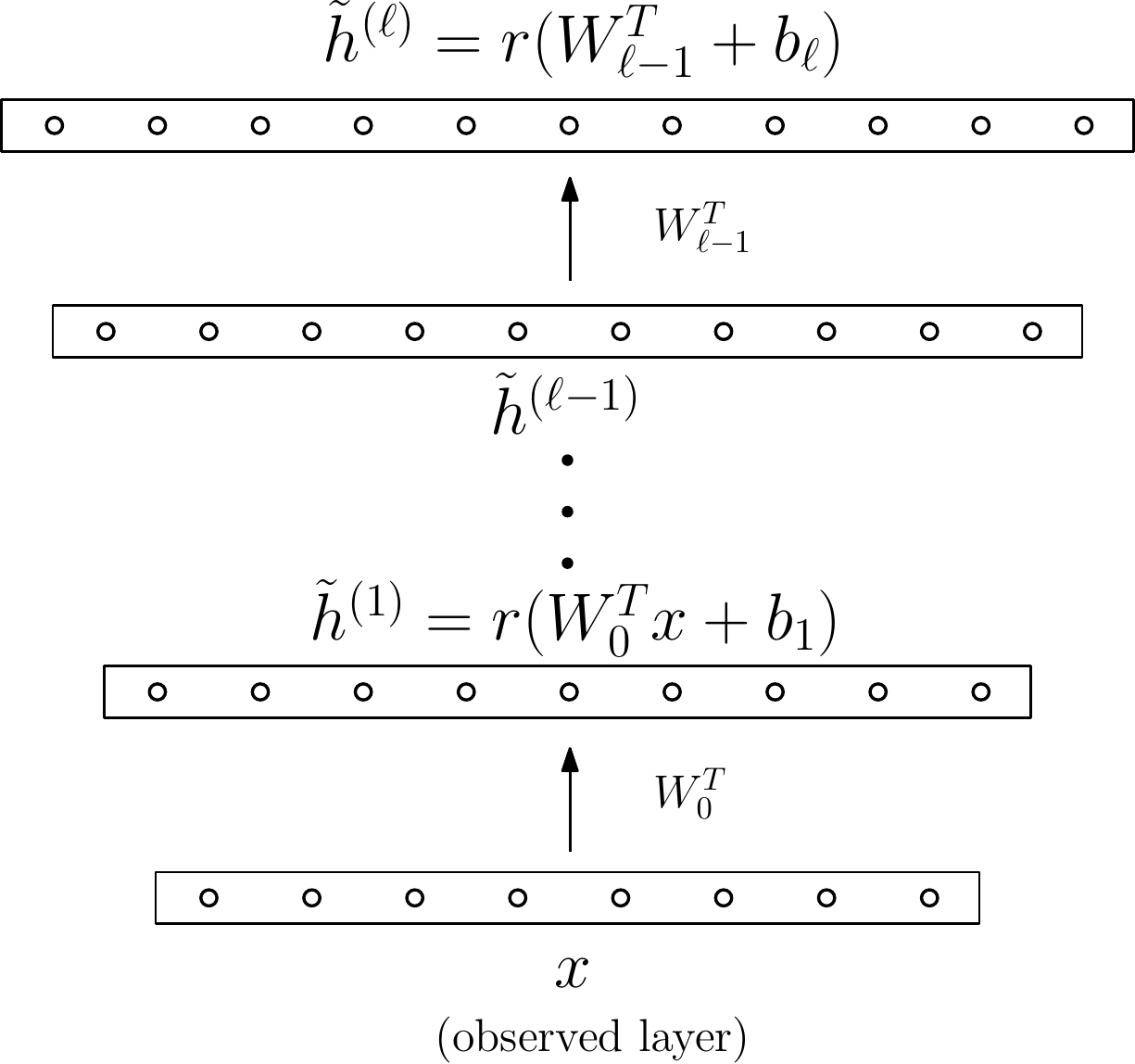}}
	\hfill
	\caption{Generative-discriminative pair: a) defines the conditional distribution $\Pr[x\vert h^{(\ell)}]$; b) defines the feed-forward function $\tilde{h}^{(\ell)} = \textup{NN}(x)$.}
	\end{figure}

We describe the multilayer generative modelfor the shadow distribution associated with an $\ell$-layer deep net with RELUs. The feed-forward net is
shown in Figure~\ref{fig:gen-disc} (b). The $j$-th layer has $n_j$ nodes, while the observable layer has $n_0$ nodes. The  corresponding generative model is in Figure~\ref{fig:gen-disc} (a).
The number of variables at each layer,  and the edge weights match exactly in the two models, but the generative model runs from top to down.

The generative model uses the hidden variables at layer $j$ to produce the hidden variables at $j-1$ using exactly the
single-layer generative analog of the corresponding layer of the deep net. 
It starts with a value $h^{(\ell)}$ of the top layer, which is generative from some arbitrary distribution $D_{\ell}$ over the set of $k_{\ell}$-sparse vectors in $\mathbb{R}^{n_{\ell}}$ (e.g., uniform $k_{\ell}$ subset as support, followed by Gaussian distribution on the support). 
Then using weight matrix $W_{\ell-1}$, it generates the hidden variable $h^{(\ell-1)}$ below using the same stochastic process as described for one layer in Section~\ref{sec:model}. Namely,  apply a random sampling function $s_{k_{\ell-1}}(\cdot)$ (or equivalently add dropout noise with  $\rho = 1-k_{\ell-1}/n^{\ell-1}$) on the vector $r(\alpha_{\ell-1}W_{\ell-1}h^{(\ell)})$, where $k_{\ell-1}$ is the target sparsity of $h^{(\ell-1)}$, and $\alpha_{\ell-1} = 2/k_{\ell-1}$ is a scaling constant. Repeating the same stochastic process (with weight $W_j$, random sampling function $s_{k_j}(\cdot)$), we generate $x$ at the bottom layer. In formula, we have 
\begin{equation}
x = s_{k_0}(r(\alpha_0W_0s_{k_1}(r(\alpha_1W_1\cdots \cdot )))\label{eqn:mult-generative}
\end{equation}



We can prove Property (b) (that the feedforward net inverts the generative model) formally for 2 layers by adapting the proof for a single layer. The proof for 3 layers works under more restricted conditions. 
 For more layers the correctness seems
intuitive too but a formal proof seems to require proving concentration inequalities for a fairly complicated and non-linear function. We have verified empirically that Property (b) holds for up to $6$ layers.  

We assume the random-like matrices $W_j$'s to have standard gaussian prior as in~\eqref{eqn:prior}, that is, 
\begin{equation}
W_j \textrm{ has i.i.d entries from }\mathcal{N}(0,1)  \label{eqn:multi-prior}
\end{equation}We also assume that the distribution $D_{\ell}$ produces $k_{\ell}$-sparse vectors with not too large entries almost surely as in~\eqref{eqn:hdistribution}, that is, 
$h^{(\ell)} \in \R_{\ge 0}^{n_{\ell}}\,, \,|h^{(\ell)}|_0 \le k_{\ell} \textrm{ and } |h^{(\ell)}|_{\infty} \le O\left(\sqrt{\log N/(k_{\ell})}\right)\|h\|\quad \textrm{a.s.}$ (Note that $N \triangleq \sum_{j}n_{j}$ is the total number of nodes in the architecture. )

Under this mathematical setup, we prove the following reversibility and dropout robustness results, which are the 2-layers analog of Theorem~\ref{thm:single} and Theorem~\ref{thm:dropout}. 
\begin{theorem}[2-Layer Reversibility and Dropout Robustness]\label{thm:twolayer}
	For $\ell = 2$, and $k_{2} < k_{1} < k_0 < k_2^2$, for 0.9 measure of the weights $(W_0,W_1)$, the following holds: There exists constant offset vector $b_0, b_1$ such that when $h^{(2)}\sim D_2$ 
	 and $\Pr[x\mid h^{(2)}]$ is specified as model~\eqref{eqn:mult-generative}, then network has reversibility and dropout robustness in the sense that the feedforward calculation (defined in Figure~\ref{generative-discriminative pair}b) gives $\tilde{h}^{(2)}$ satisfying  
	\begin{equation}
	\forall i\in [n_2], \quad \Exp\left[|\tilde{h}_i^{(2)} - h_i^{(2)}|^2\right] \le \epsilon \tau^2\label{eqn:entry-wise-errorh2}
	\end{equation}
where $\tau= \frac{1}{k_{2}}\sum_{i}h^{(2)}_i$is 
 the average of the non-zero entries of $h^{(2)}$ and $\epsilon = \widetilde{O}(k_2/k_1)$. Moreover, the network also enjoys dropout robustness as in Theorem~\ref{thm:dropout}.
\end{theorem}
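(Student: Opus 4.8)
The approach is to derive Theorem~\ref{thm:twolayer} by composing the single-layer analysis (Theorem~\ref{thm:single} and its main ingredient Lemma~\ref{lem:beforerelu}) across the two layers, treating the estimation error introduced at the bottom layer as a perturbation that is then pushed through the top feed-forward step. Write $h^{(1)} := s_{k_1}(\relu(\alpha_1 W_1 h^{(2)}))$ for the intermediate hidden layer that~\eqref{eqn:mult-generative} produces en route to $x$, so that conditioned on $h^{(1)}$ the observable layer is generated by exactly the one-layer model $x = s_{k_0}(\relu(\alpha_0 W_0 h^{(1)}))$. The first step is to check that $h^{(1)}$ is an admissible hidden variable for the single-layer theorem: it must be nonnegative, $k_1$-sparse, and free of dominant coordinates in the sense of~\eqref{eqn:hdistribution}. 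Nonnegativity is immediate from $\relu$; $k_1$-sparsity (in expectation, with concentration) is the defining property of $s_{k_1}$; and the spread bound $|h^{(1)}|_\infty \le \tilO(1/\sqrt{k_1})\,\|h^{(1)}\|$ follows since each coordinate of $W_1 h^{(2)}$ is $\mathcal N(0,\|h^{(2)}\|^2)$, so $\|\relu(\alpha_1 W_1 h^{(2)})\|_\infty = \tilO(\alpha_1\|h^{(2)}\|)$ while $\|h^{(1)}\|^2$ concentrates around $\tfrac12 k_1\alpha_1^2\|h^{(2)}\|^2$ after the dropout sampling. This admissibility is exactly why Theorem~\ref{thm:single} was proved under the weak assumption~\eqref{eqn:hdistribution}: the layer it outputs is of the same type as the layer it consumes.

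Next, apply Theorem~\ref{thm:single} at the bottom layer with weight matrix $W_0$, hidden sparsity $k_1$, and $t = k_0$ (the hypothesis $k_2 < k_1 < k_0 < k_2^2$ gives $k_1 < k_0 < k_1^2$, as required), obtaining $\tilde h^{(1)} := \relu(W_0^T x + b_0)$ with $\|\tilde h^{(1)} - h^{(1)}\|^2 \le \tilO(k_1/k_0)\,\|h^{(1)}\|^2$. What is needed for the next step is the structure, not just the size, of the error $e := \tilde h^{(1)} - h^{(1)}$ coming out of the proof of Theorem~\ref{thm:single}: because the offset $b_0$ and the RELU annihilate every coordinate outside $\supp(h^{(1)})$, the vector $e$ is supported on $\supp(h^{(1)})$ (size $\le k_1$), satisfies $\|e\|_\infty \le \tilO(1/\sqrt{k_0})\|h^{(1)}\|$, and on its support decomposes as a small deterministic bias (from the threshold $b_0$) plus a mean-zero, roughly independent noise (from the concentration error in Lemma~\ref{lem:beforerelu}).

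For the top feed-forward step, $\tilde h^{(2)} = \relu(W_1^T\tilde h^{(1)} + b_1) = \relu(W_1^T h^{(1)} + W_1^T e + b_1)$. Lemma~\ref{lem:beforerelu} applied to the first layer (weight $W_1$, hidden sparsity $k_2$, $t = k_1$, using $k_2 < k_1 < k_2^2$) controls the main term, $\|W_1^T h^{(1)} - h^{(2)}\|_\infty \le \tilO(1/\sqrt{k_1})\|h^{(2)}\|$, up to the fixed positive rescaling that RELU homogeneity lets us ignore. For the perturbation $W_1^T e$, the difficulty is that $e$ is \emph{not} independent of $W_1$: it depends on $h^{(1)}$ and hence on $W_1 h^{(2)}$, and also on $x$ and on $W_0$. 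I would resolve this by conditioning on $h^{(1)}$ and the layer-$1$ dropout pattern, then decomposing each row of $W_1$ into its (now frozen) component along $h^{(2)}$ and a component orthogonal to $h^{(2)}$ that remains a fresh Gaussian and is, conditionally, independent of $e$ (since $e$ is then a function of $W_0$ and the layer-$0$ dropout alone). This yields $(W_1^T e)_i = \frac{h^{(2)}_i\,\langle h^{(1)}, e\rangle}{\alpha_1\|h^{(2)}\|^2} + \sum_{j\in\supp(h^{(1)})}(\rho_j)_i\, e_j$, where the first summand is a multiple of $h^{(2)}$ — so it only distorts the signal direction, with magnitude bounded through $\langle h^{(1)}, e\rangle$ using the bias-plus-noise structure of $e$ and $\|h^{(1)}\|^2 \asymp \|h^{(2)}\|^2/k_1$ — and the second is, conditionally, a clean sum of independent Gaussians, hence $\tilO(\|e\|)$ in $\ell_\infty$ by a union bound over the $n_2$ coordinates. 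One then repeats the RELU-plus-offset denoising from the proof of Theorem~\ref{thm:single}: off $\supp(h^{(2)})$ every error contribution falls below $|b_1|$, so RELU sends those coordinates exactly to $0 = h^{(2)}_i$ with high probability, while on the $\le k_2$ support coordinates the entrywise error is $\tilO(1/\sqrt{k_1})\|h^{(2)}\|$; squaring and summing gives $\|\tilde h^{(2)} - h^{(2)}\|^2 \le \tilO(k_2/k_1)\|h^{(2)}\|^2$. Since $D_2$ is exchangeable over coordinates and $\tilde h^{(2)}_i = h^{(2)}_i = 0$ exactly whenever $i \notin \supp(h^{(2)})$, this upgrades to the per-coordinate bound $\E[|\tilde h^{(2)}_i - h^{(2)}_i|^2] \le \tfrac{k_2}{n_2}\,\tilO(k_2/k_1)\,\tau^2 \le \epsilon\tau^2$ with $\epsilon = \tilO(k_2/k_1)$, which is~\eqref{eqn:entry-wise-errorh2}; the ``$0.9$ measure of $(W_0,W_1)$'' statement is a union bound over the two good-weight events, each of measure $1 - n^{-5}$. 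Dropout robustness is inherited from Theorem~\ref{thm:dropout} with essentially no extra work: dropping and rescaling $x$ by $2$ only modifies the bottom-layer step, which Theorem~\ref{thm:dropout} already shows is robust, and the top-layer analysis is unchanged.

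I expect the main obstacle to be the control of the propagated error $W_1^T e$ in the third step — both because of the $W_1$-dependence of $e$ (handled by the conditioning-and-decomposition trick above) and because one must verify that this freshly introduced, then propagated, error, whose relative scale is governed by $k_1/k_0$ and by the threshold $b_0$, stays within the target budget $\tilO(\sqrt{k_2/k_1})$. This is precisely where the ordering $k_2 < k_1 < k_0 < k_2^2$ is used, and it is also the reason the same argument becomes markedly more delicate, and requires additional conditions, for three or more layers.
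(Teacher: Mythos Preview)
Your orthogonal-decomposition idea for handling $W_1^T e$ is clever, and the claim that the $\rho_j$'s remain fresh Gaussians after conditioning on $h^{(1)}$ is correct. But the approach differs from the paper's and has a real gap in the signal-direction term. The paper does \emph{not} attempt the high-probability RELU denoising at layer~2 that you sketch; it explicitly settles for the weaker per-coordinate expectation bound (see the discussion right after Theorem~\ref{thm:twolayer}). Its route is to write $\hat g_r=u^T\tilde h=u^Th+u_K^T(\tilde h_K-h_K)$ with $u$ the $r$-th column of $W_1$, then split $u_K^T(\tilde h_K-h_K)$ into bias, fluctuation, and the constant coming from the offset, and bound the second moment of the fluctuation $u_K^T(\hat h_K-\E\hat h_K)$ via a dedicated pairwise-correlation lemma showing $|\E[\hat h_i\hat h_j]-\E[\hat h_i]\E[\hat h_j]|\le\tilO(1/t)$ (Lemmas~\ref{lem:pairwise-correlation} and~\ref{lem:linearcom}). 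That correlation bound is the technical heart of the two-layer proof and is what delivers $\E[|\hat g_r-g_r|^2]\le\tilO(k_2/k_1)$ without any extra relation among $k_0,k_1,k_2$.

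Your decomposition, by contrast, isolates the $\rho$-part cleanly but leaves the term $\frac{h^{(2)}_i\,\langle h^{(1)},e\rangle}{\alpha_1\|h^{(2)}\|^2}$ to be handled ``through the bias-plus-noise structure of $e$.'' Plugging in $\|e\|\le\tilO(\sqrt{k_1/k_0})\|h^{(1)}\|$, $\|h^{(1)}\|^2\asymp\|h^{(2)}\|^2/k_1$, $\alpha_1=2/k_1$, and Cauchy--Schwarz gives a per-coordinate contribution of order $\tilO(\sqrt{k_1/k_0})$ (in the normalization $\|h^{(2)}\|^2=k_2$), against a budget of $\tilO(\sqrt{k_2/k_1})$; this fits only when $k_1^2\le k_0k_2$, which the hypothesis $k_2<k_1<k_0<k_2^2$ does not imply. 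Improving on Cauchy--Schwarz here requires showing the noise components $e_j$ are nearly uncorrelated --- but they all depend on the common $x$, so this is precisely the content of the paper's pairwise-correlation lemma, which your proposal neither invokes nor replaces. A secondary issue: your conversion to a per-coordinate bound appeals to exchangeability of $D_2$, which is nowhere assumed; the paper establishes~\eqref{eqn:entry-wise-errorh2} directly for each fixed index~$r$.
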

To parse the theorem, we note that when $k_2 \ll k_1$ and $k_1\ll k_0$, in expectation, the entry-wise difference between $\tilde{h}^{(2)}$ and $h^{(2)}$ is dominated by the average single strength of $h^{(2)}$. 

However, we note that we prove weaker results than in Theorem~\ref{thm:single} -- though the magnitudes of the error in both Theorems are on the order of the ratio of the sparsities between two layers, here only the expectation of the error is bounded, while Theorem~\ref{thm:single} is a high probability result. In general we believe high probability bounds hold for any constant layers networks, but seemingly proving that requires advanced tools for understanding concentration properties of a complex non-linear function. Just to get a sense of the difficulties of results like Theorem~\ref{thm:twolayer}, one can observe that to obtain $\tilde{h}^{(2)}$ from $h^{(2)}$ the network needs to be run twice in both directions with RELU and sampling. This makes the dependency of $\tilde{h}^{(2)}
$ on $h^{(2)}$ and $W_2$ and $W_1$ fairly complicated and non-linear. 

Finally, we extend Theorem~\ref{thm:twolayer} to three layers with stronger assumptions on the sparsity of the top layer -- We assume additionally that $\sqrt{k_3}k_2 < k_0$, which says that the top two layer is significantly sparser than the bottom layer $k_0$. We note that this assumption is still reasonable since in most of practical situations the top layer consists of the labels and therefore is indeed much sparser. Weakening the assumption and getting high probability bounds are left to future study. 

\begin{theorem}[3-layers Reversibility and Dropout Robustness, informally stated]\label{thm:threelayer}
	For $\ell = 3$, when $k_3 < k_2 < k_1 < k_0 < k_2^2$ and  $\sqrt{k_3}k_2 < k_0$, the 3-layer generative model has the same type of reversibility and dropout robustness properties as in Theorem~\ref{thm:twolayer}.
\end{theorem}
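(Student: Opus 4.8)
The plan is to reduce the three-layer statement to the two-layer Theorem~\ref{thm:twolayer} plus one extra ``robust'' single-layer inversion at the top. Write the generative chain as $h^{(3)}\to h^{(2)}\to h^{(1)}\to x$ with $h^{(j-1)}=s_{k_{j-1}}(\relu(\alpha_{j-1}W_{j-1}h^{(j)}))$ and $x=h^{(0)}$, and the feedforward net as $\tilde h^{(j)}=\relu(W_{j-1}^T\tilde h^{(j-1)}+b_{j-1})$ with $\tilde h^{(0)}=x$. By positive-homogeneity of $\relu$ we rescale so $\|h^{(3)}\|^2=k_3$ (hence $\tau_3:=\frac{1}{k_3}\sum_i h^{(3)}_i=\Theta(1)$ up to logarithmic factors, by the flatness hypothesis); the generative map then pins down $\|h^{(2)}\|^2=\Theta(\|h^{(3)}\|^2/k_2)$ and $\|h^{(1)}\|^2=\Theta(\|h^{(2)}\|^2/k_1)$, because each nonzero coordinate of $h^{(j-1)}$ has size $\Theta(\alpha_{j-1}\|h^{(j)}\|)$ and there are $\Theta(k_{j-1})$ of them. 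These scalings are used to transport error bounds between layers.

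\textbf{Step 1 (the induced top distribution is admissible).} Before invoking Theorem~\ref{thm:twolayer} I check that $D_2:=\mathrm{Law}(h^{(2)}\mid h^{(3)}\sim D_3)$ satisfies the hypotheses placed on a top layer: $h^{(2)}\ge 0$ (it is a $\relu$ output, then zeroed), $|h^{(2)}|_0=\Theta(k_2)$ with high probability (Chernoff bound on the coordinates surviving $s_{k_2}$), and flatness $|h^{(2)}|_\infty\le O(\sqrt{\log N/k_2})\,\|h^{(2)}\|$, since the nonzero entries of $\relu(\alpha_2 W_2 h^{(3)})$ are $\relu(\cdot)$ of i.i.d.\ $\N(0,\alpha_2^2\|h^{(3)}\|^2)$ variables whose maximum is $O(\sqrt{\log N})$ times their common scale $\alpha_2\|h^{(3)}\|=\Theta(\|h^{(2)}\|/\sqrt{k_2})$. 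Since the sparsity condition $k_2<k_1<k_0<k_2^2$ of Theorem~\ref{thm:twolayer} is part of our hypotheses, applying that theorem to the bottom two layers produces offsets $b_0,b_1$ and $\tilde h^{(2)}=h^{(2)}+e^{(2)}$ with $\E[|e^{(2)}_i|^2]\le\tilO(k_2/k_1)\,\tau_2^2$ for every $i\in[n_2]$. Re-examining that proof --- where $b_1$ and the layer-$1$ $\relu$ denoise the off-support coordinates --- one finds that $e^{(2)}$ is, up to a negligible leakage, supported on $\supp(h^{(2)})$, so $\E[\|e^{(2)}\|^2]\le\tilO(k_2/k_1)\,\|h^{(2)}\|^2=\tilO(\|h^{(3)}\|^2/k_1)$.

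\textbf{Step 2 (one more inversion with a perturbed input).} Analyze $\tilde h^{(3)}=\relu(W_2^T\tilde h^{(2)}+b_2)$ through the exact split $W_2^T\tilde h^{(2)}=W_2^T h^{(2)}+W_2^T e^{(2)}$. For the first term, $h^{(2)}$ is a bona fide single-layer generative sample of $h^{(3)}$ (weight $W_2$, sampling level $k_2$) obeying the flatness bound of Step~1, so the argument of Lemma~\ref{lem:beforerelu} with $k=k_3$, $t=k_2$ (using $k_3<k_2$) gives $\|W_2^T h^{(2)}-h^{(3)}\|_\infty\le\tilO(\|h^{(3)}\|/\sqrt{k_2})$ with high probability. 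For the second term, condition on $h^{(2)}$ and on all randomness strictly below it ($W_0,W_1$ and the dropout masks generating $x$): conditionally, $e^{(2)}$ is a fixed vector independent of $W_2$, and for $i\notin\supp(h^{(2)})$ the entry $(W_2)_{ji}$ is still standard Gaussian, so $(W_2^T e^{(2)})_i\sim\N(0,\|e^{(2)}\|^2)$ up to a lower-order posterior correction on the $\Theta(k_2)$ coordinates of $\supp(h^{(2)})$; hence $\E[|(W_2^T e^{(2)})_i|^2]\le\tilO(1)\cdot\E[\|e^{(2)}\|^2]=\tilO(\|h^{(3)}\|^2/k_1)$. Combining the two contributions (the $1/k_1$ term being dominated by the $1/k_2$ one since $k_2<k_1$), taking $b_2$ uniform of magnitude $\Theta(\|h^{(3)}\|/\sqrt{k_2})$, and applying the final $\relu$ --- which kills the off-support error exactly as in the proof of Theorem~\ref{thm:single} --- yields $\E[|\tilde h^{(3)}_i-h^{(3)}_i|^2]\le\tilO(k_3/k_2)\,\tau_3^2$ for all $i$, which is the asserted reversibility. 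Dropout robustness follows as in Theorem~\ref{thm:dropout}: deleting half of $x$ and scaling the first feedforward layer by $2$ changes the effective bottom sparsity only by a constant factor, which $k_2<k_1<k_0<k_2^2$ and $\sqrt{k_3}\,k_2<k_0$ absorb.

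\textbf{Main obstacle.} The hard part is controlling $W_2^T e^{(2)}$: $e^{(2)}$ carries the bottom-layer recovery noise through \emph{two} successive inversions, each of which runs the net forward and backward with its own $\relu$ and sampling, so $\tilde h^{(3)}$ depends on $(W_0,W_1,W_2)$ in a genuinely nonlinear and correlated way. The conditioning steps used above to decouple $e^{(2)}$ from $W_2$ (and, inside the proof of Theorem~\ref{thm:twolayer}, $e^{(1)}$ from $W_1$) must be justified level by level, because each $h^{(j)}$ is constrained by the weight matrix above it, so conditioning on $h^{(j)}$ perturbs that matrix's posterior; one has to verify these perturbations are of lower order. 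In the resulting error accounting the extra hypothesis $\sqrt{k_3}\,k_2<k_0$ is what keeps the twice-propagated noise below the top-layer signal $\tau_3$; a careful count gives a sufficient condition of roughly this form, which we do not claim is tight. Finally, as already for Theorem~\ref{thm:twolayer}, only an expectation (not a high-probability) bound survives this composition; upgrading it would require concentration inequalities for the composed nonlinear map, left to future work.
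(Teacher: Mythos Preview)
Your overall plan---invoke the two-layer theorem on the bottom two layers, then perform one more single-layer inversion with a perturbed input at the top---is close in spirit to the paper's sketch. But there is a genuine gap in Step~1.

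You assert that ``$e^{(2)}$ is, up to a negligible leakage, supported on $\supp(h^{(2)})$,'' and from this derive $\E[\|e^{(2)}\|^2]\le\tilO(k_2/k_1)\,\|h^{(2)}\|^2$. Theorem~\ref{thm:twolayer} does \emph{not} give this. It yields only the per-coordinate bound $\E[|e^{(2)}_i|^2]\le\tilO(k_2/k_1)\,\tau_2^2$ for every $i\in[n_2]$, with no support control; the paper is explicit on precisely this point in its own proof sketch for the present theorem: since $|\hat g_r-g_r|$ is only small \emph{in expectation}, one cannot argue that $\relu(\hat g+c)$ matches the support of $g$. Summing the per-coordinate bound over all $n_2$ coordinates gives only $\E[\|e^{(2)}\|^2]\le\tilO(n_2/k_1)\,\|h^{(2)}\|^2$, which is useless unless $n_2=\tilO(k_2)$. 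Your Step~2 bound on $\E[|(W_2^Te^{(2)})_i|^2]$ then collapses, and with it the final estimate. (There is also a minor indexing slip in Step~2: ``$i\notin\supp(h^{(2)})$'' should read ``$i\notin\supp(h^{(3)})$,'' since $i$ indexes the columns of $W_2$; it is for those $i$ that the relevant column of $W_2$ remains standard Gaussian after conditioning on $h^{(2)}$.)

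The paper's workaround is not to black-box Theorem~\ref{thm:twolayer} but to open it up and decompose the top-layer error into three sources: the error of reversing $W_2$ given the true $h^{(2)}$, the error from imperfect recovery of $h^{(2)}$ given the true $h^{(1)}$, and the error from the first-layer discrepancy $e^{(1)}=\tilde h^{(1)}-h^{(1)}$ propagated through two inversions. The first two reduce to the one- and two-layer analyses already established. For the third, crucially, $e^{(1)}$ \emph{is} supported on $\supp(h^{(1)})$ with high probability by the single-layer $\ell_\infty$ result (Lemma~\ref{lem:beforerelu}), so its norm \emph{can} be controlled; bounding its two-step propagation is where the extra hypothesis $\sqrt{k_3}\,k_2<k_0$ actually enters, rather than being invoked only in passing as in your closing paragraph.
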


%




\vspace{-2mm}
\section{Experiments}
\label{sec:exp}
\vspace{-2mm}

We present experimental results that support our theory. 

\noindent\textbf{Verification of the random-like nets hypothesis.} 
Testing the fully connected layers in a  few trained networks showed 
that the edge weights are indeed random-like. For instance 
 Figure~\ref{fig:verify} in the appendix shows the statistics for the 
the second fully connected layer in AlexNet (after 60 thousand training iterations).
Edge weights fit a  Gaussian distribution, and bias in the RELU gates are essentially constant (in accord with Theorem~\ref{thm:single}) and 
 the distribution of the singular values of the weight matrix is close to the quartercircular law of random Gaussian matrices.

\noindent\textbf{Generative model.} 
We trained a 3-layer fully connected net on CIFAR-10 dataset as described below.
Given an image and a neural net our shadow distribution can be used to generate an image.
Figure~\ref{fig:syn_images} shows the generated image using the initial (random) deep net, and 
from the trained net after 100,000 iterations.

\begin{figure}
\vspace{-4mm}
\centering
\subfigure{
	\includegraphics[width=0.15\columnwidth]{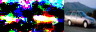} 
}
\hspace{5mm}
\subfigure{
	\includegraphics[width=0.15\columnwidth]{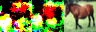}
}
\hspace{5mm}
\subfigure{
	\includegraphics[width=0.15\columnwidth]{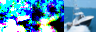}
}
\vspace{-2mm}
\caption{Some synthetic images generated on CIFAR-10 by the shadow distribution. Each subfigure contains three images: the first is generated using the random initial net at iteration 0, the second generated after extensive training to iteration 100,000, and the third is the original image.} 
\label{fig:syn_images}
\end{figure}

\noindent\textbf{Improved training using synthetic data.} 
Our suggestion is to use the shadow distribution to produce 
synthetic images together with labels and add them to the training set (named SHADOW). 
Figure~\ref{fig:syn_images} clarifies  that the generated images are some kind of
{\em noisy} version of the original. Thus the method is reminiscent of dropout, except the \textquotedblleft noise\textquotedblright\ is applied to the original image rather than at  each layer of the net.
 We trained feedforward networks with two hidden layers, each containing 5000 hidden units, using the standard datasets MNIST (handwritten digits)~\cite{lecun1998gradient}, and CIFAR-10 and CIFAR-100 (32x32 natural images)~\cite{krizhevsky2009learning}. A similar testbed was recently used in~\cite{NeyshaburSS15}, who were also testing their proposed algorithm for fully connected deep nets.
During each iteration, for each real image one synthetic image is generated from the highest hidden layer using the shadow distribution.  
The loss on the synthetic images is weighted by a regularization parameter. (This parameter and the learning rate were chosen by cross validation; see the appendix for details.) All networks were trained both with and without dropout. When training with dropout, the dropout ratio is 0.5. 

Figure~\ref{fig:shadow} shows the test errors of networks trained by backpropagation alone,  by backpropagation and dropout, and by our SHADOW regularization combined with backpropagation and dropout. 
Error drops much faster with our training, and  a small but persistent advantage is retained even at the end. In the appendix, we also show that SHADOW outperforms backpropagation alone, and the error on the synthetic data tracks that on the real data, as predicted by the theory.

\begin{figure}
\centering
\vspace{-2mm}
	\includegraphics[width=0.9\columnwidth]{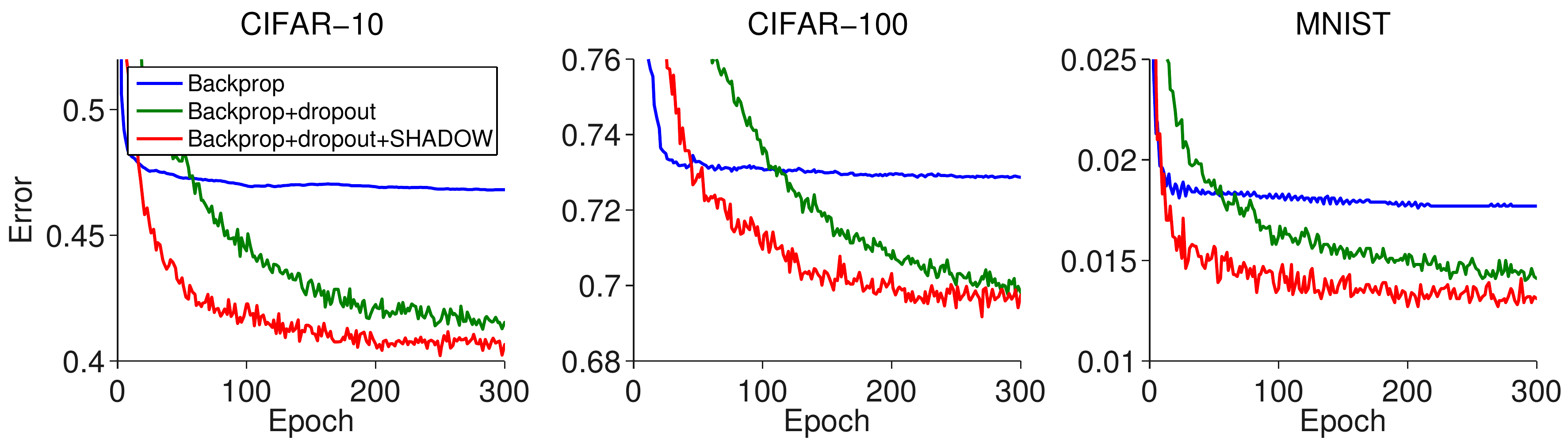}
\caption{Testing error of networks trained with and without our regularization SHADOW for three datasets. } 
\label{fig:shadow}
\end{figure}

\vspace{-.2in}
\section{Conclusions} 
\label{sec:conclu}
\vspace{-.1in}
We have highlighted an interesting empirical finding, that the weights of neural nets obtained by 
standard supervised training behave similarly to random numbers. We have given a mathematical proof that such mathematical properties can lead to a very simple explanation for why neural nets have an associated generative model, and furthermore one that is essentially the reversal of the forward computation with the same edge weights. (This associated model can also be seen as some theoretical explanation of
the empirical success of older ideas in deep nets such as {\em weight tying} and {\em dropout.}) 

The model leads to a natural new modification for training of neural nets with fully-connected layers: in addition to dropout, train also on 
{\em synthetic} data generated from the model. This is shown to give some improvement over plain dropout. Extension of these ideas for convolutional layers is left for future work.

  Theoretically explaining why deep nets satisfy the random-like
nets hypothesis is left for future work. Possibly it follows from some elementary information bottleneck consideration~(\cite{Tishby15}).

Another  problem for future work is to design {\em provably correct} algorithms for deep learning assuming the input is generated {\em exactly} according to our generative model. This could be a first step to deriving provable guarantees on backpropagation.

\bibliographystyle{plainnat}
{\small
	\bibliography{deepnet_main}
	}

\clearpage
\appendix
\begin{center}
\LARGE{\textsc{Appendix}}
\end{center}
\section{Complete proofs for One-layer reversibility}\label{app:complet-one-layer}

\begin{proof}[Proof of Lemma~\ref{lem:beforerelu}] 
	We claim that it suffices to show  that w.h.p\footnote{We use w.h.p as a shorthand for ``with high probability'' in the rest of the paper. } over the choice of ($(W,h,x)$ the network is reversible (that is, ~\eqref{eqn:reversible} holds), 
	\begin{equation}
	\Pr_{x,h,W}\left[\textrm{equation~\eqref{eqn:reversible} holds}\right] \ge 1 - n^{10}\,.\label{eqn:joint-prob}
	\end{equation}
	Indeed, suppose~\eqref{eqn:joint-prob} is true, then using a standard Markov argument, we obtain  
	$$\Pr_{W}\left[\Pr_{x,h}\left[\textrm{equation~\eqref{eqn:reversible} holds} \mid W\right] > 1-n^{-5} \right]\ge 1 - n^{5},$$ as desired. 
	
	Now we establish~\eqref{eqn:joint-prob}. Note that equation~\eqref{eqn:reversible} (and consequently statement~\eqref{eqn:joint-prob}) 
	holds for any positive scaling of $h,x$ simultaneously, since RELU has the property that $r(\beta z) = \beta \cdot r(z)$ for any $\beta \ge 0$ and any $z\in \mathbb{R}$.  Therefore WLOG we can choose a proper scaling of $h$ that is convenient to us. We assume $\|h\|_2^2 = k$. By assumption~\eqref{eqn:hdistribution}, we have that $|h|_{\infty}\le \widetilde{O}(\sqrt{\log k})$. 
	
	Define $\hat{h} = \alpha W^Tx$. Similarly to~\eqref{eqn:hi}, we fix $i$ and expand the expression for $\hat{h}_i$ by definition and obtain 
	$	\hat{h}_i  = \alpha\sum_{j=1}^n W_{ji}x_j$. 
	Suppose $n_{\drop}$ has support $T$.  Then we can write $x_j$ as 
	\begin{align}
	x_j  &=  \relu(\sum_{\ell=1}^m W_{j\ell} h_{\ell}) \cdot n_{\drop,j}=  \relu(W_{ji} h_i + \eta_j)\cdot \mathbf{1}_{j\in T} \,,\label{eqn:eqn29}
	\end{align}
	where $\eta_j \triangleq \sum_{\ell\neq j}W_{j\ell}h_{\ell}$. Though $\relu(\cdot)$ is nonlinear, it is piece-wise linear and more importantly still Lipschitz. Therefore intuitively, RHS of~\eqref{eqn:eqn29} can be ``linearized" by approximating 
	$	\relu(W_{ji}h_i + \eta_j) $ by $\mathbf{1}_{\eta_j > 0}\cdot \left(W_{ji}h_i + \relu(\eta_j)\right)$. 
	Note that this approximation is not accurate only when $|\eta_j|\le |W_{ji}h_i|$, which happens with relatively small probability, since $\eta_j$ typically dominates $W_{ji}h_{i}$ in magnitude. 
	
	Using the intuition above, we can formally calculate the expectation of $W_{ji}x_j$ via a slightly tighter and more sophisticated argument: Conditioned on $h$, we have $W_{ji}\sim\mathcal{N}(0,1)$ and  $\eta_j = \sum_{\ell\neq j}W_{j\ell}h_{\ell} \sim \mathcal{N}(0,\sigma^2)$ with $\sigma^2 = \|h\|^2 - h_j^2 = k - h_j^2 \ge \Omega(k)$. Therefore we have $\log \sigma = \frac{1}{2}\log (\Omega(k)) \ge h_j$ and then by Lemma~\ref{lem:exp} ($W_{ji}$ here corresponds to $w$ in Lemma~\ref{lem:exp}, $\eta_j$ to $\xi$ and $h_j$ to $h$), we have 
	$
	\Exp[W_{ji}x_j \mid h] = \Exp[W_{ji}r(W_{ji}h_j + \eta_j) \mid h] = \frac{1}{2} h_i \pm \widetilde{O}(1/k^{3/2})\nonumber
	$. 
	
	It follows that 
	\begin{equation}
	\Exp[\hat{h}_i\vert h,T] = \sum_{j\in T} \Exp[W_{ji}r(W_{ji}h_j + \eta_j) \mid h] = \frac{\alpha |T|}{2}  h_i \pm \widetilde{O}(\alpha t/k^{3/2})\nonumber
	\end{equation}
	
	Taking expectation over the choice of $T$ we obtain that \begin{equation}
	\Exp[\hat{h}_i\vert h] = \frac{\alpha t}{2}  h_i \pm \widetilde{O}(\alpha n/k^{3/2}) = h_i \pm \widetilde{O}(1/(k^{3/2})) \label{eqn:expectation}
	\end{equation} . (Recall that $t = \rho n$ and $\alpha = 2/(\rho n)$. Similarly, the variance of $\hat{h}_i\vert h$ can be bounded by 
	$\Var[\hat{h}_i \vert h] = O(k/t)$ (see Claim~\ref{lem:app:var}). Therefore we expect that $\hat{h}_i$ concentrate around its mean with fluctuation $\pm O(\sqrt{k/t})$. Indeed, by 
	%
	by concentration inequality, we can prove (see Lemma~\ref{lem:concentration} in Section~\ref{app:complet-one-layer}) that actually $\hat{h}_i\mid h$ is sub-Gaussian, and therefore  conditioned on $h$, with high probability ($1-n^{-10}$) we have $|\hat{h}_i -  h_i| \le \widetilde{O}((\sqrt{k/t}+ 1/k^{3/2})$, where the first term account for the variance and the second is due to the bias (difference between $\Exp[\hat{h}_i]$ and $h_i$). 
	
	Noting that $k < t < k^2$ and therefore $1/k^{3/2}$ is dominated by $\sqrt{k/t}$, take expectation over $h$, we obtain that $\Pr\left[|\hat{h}_i -  h_i| \ge \widetilde{\Omega}(\sqrt{k/t})\right] \le n^{-10}$. Taking union bound over all $i\in [n]$, we obtain that with high probability, $\|\hat{h} - h\|_{\infty} \le \widetilde{O}(\sqrt{k/t})$.  Recall that $\|h\|$ was assumed to be equal $k$ without loss of generality. Hence we obtain that $\|\hat{h} - h\|_{\infty} \le \widetilde{O}(\sqrt{1/t})\|h\|$ as desired.

\end{proof}

\begin{proof}[Proof of Theorem~\ref{thm:dropout}]
	Recall that in the generative model already many bits are dropped, 
	and nevertheless the feedforward direction maps it back to $h$. Thus dropping some more bits in $x$ doesn't make much difference. Concretely, we note that $x^{\drop}$ has the same distribution as $s_{t/2}(r(\alpha Wh))$. Therefore invoking Theorem~\ref{thm:single} with $t$ being replaced by $t/2$, we obtain that there exists $b'$ such that $\|r(2W^Tx^{\textrm{drop}} + b')- h\|^2 \le \tilO(k/t)\cdot \|h\|$.
\end{proof}

\begin{lemma}\label{lem:concentration} Under the same setting as Theorem~\ref{thm:single}, let $\hat{h} = \alpha W^Th$. Then we have
	$$\Pr\left[|\hat{h}_i -  h_i| \ge \widetilde{\Omega}((\sqrt{k/t}+ 1/k^{3/2})\mid h\right] \le n^{-10}$$
\end{lemma}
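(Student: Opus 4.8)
The plan is to condition on the hidden vector $h$ and on the support $T=\supp(n_{\drop})$ of the dropout pattern and to show that $\hat h_i=\alpha\sum_{j}W_{ji}x_j$ concentrates around its conditional mean at the scale dictated by its variance, namely $\widetilde{O}(\sqrt{k/t})$. The proof of Lemma~\ref{lem:beforerelu} (via Lemma~\ref{lem:exp} and Claim~\ref{lem:app:var}) already establishes $\E[\hat h_i\mid h]=h_i\pm\widetilde{O}(1/k^{3/2})$ and $\Var[\hat h_i\mid h]=O(k/t)$, so once concentration is in hand the triangle inequality finishes the lemma. As in that proof we rescale so that $\|h\|_2^2=k$, hence $|h_i|\le\widetilde{O}(\sqrt{\log k})$ almost surely. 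Conditioning on $(h,T)$, write $\hat h_i=\alpha\sum_{j\in T}Y_j$ with $Y_j:=W_{ji}\,r(W_{ji}h_i+\eta_j)$ and $\eta_j$ as in~\eqref{eqn:eqn29}; since $Y_j$ depends only on the $j$-th row of $W$ and the rows of $W$ are i.i.d.\ and independent of $n_{\drop}$, the $\{Y_j\}_{j\in T}$ are i.i.d.\ conditionally on $(h,T)$, each with $\eta_j\sim\mathcal{N}(0,\Theta(k))$. A Chernoff bound gives $|T|=t\,(1\pm\widetilde{O}(\sqrt{(\log n)/t}))=\Theta(t)$ outside an event of probability $n^{-\omega(1)}$, which we discard.

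The summands are unbounded but have sub-exponential tails: $|Y_j|\le|W_{ji}|\,(|W_{ji}|\,|h_i|+|\eta_j|)\le|h_i|\,W_{ji}^2+|W_{ji}|\,|\eta_j|$, where $W_{ji}^2$ is a $\chi^2_1$ variable ($\psi_1$-norm $O(1)$) and $W_{ji}\,|\eta_j|$ is a product of the independent sub-Gaussians $W_{ji}\sim\mathcal{N}(0,1)$ and $\eta_j\sim\mathcal{N}(0,\Theta(k))$ ($\psi_1$-norm $O(\sqrt{k})$); since $|h_i|\le\widetilde{O}(\sqrt{\log k})$ this gives $\|Y_j\|_{\psi_1}=O(\sqrt{k})$ uniformly over $j$ and over $h$ in the support of $D_h$. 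Hence $\|\alpha Y_j\|_{\psi_1}=O(\sqrt{k}/t)$, and, summing over the $|T|=\Theta(t)$ active coordinates only, $\sum_{j\in T}\|\alpha Y_j\|_{\psi_1}^2=O(k/t)$; it is the match between the normalization $\alpha=2/t$ and the number of active terms that prevents a $1/\rho$ blow-up. Bernstein's inequality for sums of independent sub-exponential random variables then yields, with $\lambda=C\sqrt{(k/t)\log n}$ for a large absolute constant $C$,
\[
\Pr\big[\,|\hat h_i-\E[\hat h_i\mid h,T]|>\lambda\mid h,T\,\big]\ \le\ 2\exp\big(-c\min(\lambda^2 t/k,\ \lambda t/\sqrt{k})\big)\ \le\ n^{-10},
\]
where $\min(\lambda^2 t/k,\ \lambda t/\sqrt{k})=\lambda^2 t/k=C^2\log n$ because $t\gg\log n$, so the bound is $2\exp(-cC^2\log n)\le n^{-10}$ once $cC^2\ge 11$.

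Finally we replace the conditional mean by $h_i$: on the good $T$-event, $\E[\hat h_i\mid h,T]=\tfrac{\alpha|T|}{2}h_i\pm\widetilde{O}(\alpha|T|/k^{3/2})=h_i\,(1\pm\widetilde{O}(\sqrt{(\log n)/t}))\pm\widetilde{O}(1/k^{3/2})$, and since $|h_i|=\widetilde{O}(1)$ the multiplicative error contributes only $\widetilde{O}(1/\sqrt{t})=o(\sqrt{k/t})$; combining with the previous display via the triangle inequality gives $|\hat h_i-h_i|\le\widetilde{O}(\sqrt{k/t}+1/k^{3/2})$ outside probability $n^{-10}$, conditionally on every $h$ in the support of $D_h$, as claimed. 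The only genuinely delicate point is obtaining a \emph{high-probability} ($n^{-10}$) bound at the variance scale $\sqrt{k/t}$ rather than at the much larger scale a crude tail estimate would force: the summands $W_{ji}\,r(W_{ji}h_i+\eta_j)$ are products of Gaussians, hence only sub-exponential, so one must condition on the dropout support (to match the $2/t$ scaling to the number of active terms) and verify that the target deviation lies inside Bernstein's Gaussian regime, both of which rely on $t$ being large with $t\gg\log n$. The RELU nonlinearity causes no extra difficulty, since the tail and variance estimates use only the Lipschitz bound $|r(u)|\le|u|$; the delicate linearization of $r$ is confined to the mean computation already carried out via Lemma~\ref{lem:exp}.
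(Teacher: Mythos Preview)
Your proof is correct and follows essentially the same approach as the paper: condition on $(h,T)$, bound the $\psi_1$-norm of each summand $\alpha W_{ji}x_j$ by $O(\alpha\sqrt{k})$ via the product-of-sub-Gaussians argument, apply Bernstein's inequality for sub-exponential variables to obtain concentration at scale $\widetilde{O}(\sqrt{k/t})$, and then combine with the mean bias $\widetilde{O}(1/k^{3/2})$ already established. The paper's version is slightly terser (it invokes Lemma~\ref{lem:psi-norm-mean-shift} to center and does not spell out the Bernstein-regime check or the $|T|$ fluctuation in the conditional mean as carefully as you do), but the structure is identical.
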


\begin{proof}[Proof of Lemma~\ref{lem:concentration}]
		Recall that $\hat{h}_i =  \sum_{j\in T} \alpha W_{ji}r(W_{ji}h_j + \eta_j)$. For convenience of notation, we condition on the randomness $h$ and $T$ and only consider the randomness of $W$ implicitly and omit the conditioning notation. Let $Z_j$ be the random variable $\alpha W_{ji}x_j = \alpha W_{ji}r(W_{ji}h_j + \eta_j)$. Therefore $Z_j$ are independent random variables (conditioned on $h$ and $T$). Our plan is to show that $Z_j$ has bounded Orlicz norm and therefore $\hat{h}_i = \sum_j Z_j $ concentrates around its mean. 
 Since $W_{ji}$ and $x_j$ are Gaussian random variables with variance 1 and $\|h\|=  \sqrt{k}$, we have that $\|W_{ji}\|_{\psi_2} = 1$ and $\psinorm{x_j}{2}\le \sqrt{k}$. This implies that $W_{ji}x_j$ is sub-exponential random variables, that is, $\psinorm{W_{ji}x_j}{1}\le O(\sqrt{k})$. That is, $Z_j$ has $\psi_1$ orlicz norm at most $O(\alpha \sqrt{k})$. Moreover by Lemma~\ref{lem:psi-norm-mean-shift}, we have that $\psinorm{Z_j-\E[Z_j]}{1} \le O(\alpha \sqrt{k})$. 
 Then we are ready to apply bernstein inequalities for sub-exponential random variables (Theorem~\ref{thm:berstein-psi1}) and obtain that for some universal constant $c$, 
	
	\begin{equation*}
		\Pr\left[\left|\sum_{j\in T} Z_j - \Exp\left[\sum_{j\in T} Z_j\right]\right| > c\sqrt{|T|}\alpha \sqrt{k}\log n\right] \le n^{-10}\,.
	\end{equation*}
	
	Equivalently, w obtain that 
	
		\begin{equation*}
		\Pr\left[\left|\hat{h}_i - \Exp\left[\hat{h}_i\right]\right| > c\sqrt{|T|k/t^2}\log n\mid h,T\right] \le n^{-10}\,.
		\end{equation*}
		
		Note that with high probability, $|T| = (1\pm o(1)) t$. Therefore taking expectation over $T$ and taking union bound over $i\in [n]$ we obtain that for some absolute constant $c$,  
	\begin{equation*}
	\Pr\left[\forall i\in [n], \left|\hat{h}_i - \Exp\left[\hat{h}_i\right]\right| > c\sqrt{k/t}\log n\mid h\right] \le n^{-8}\,.
	\end{equation*}
	
	Finally note that as shown in the proof of Theorem~\ref{thm:single}, we have $\Exp[\hat{h}_i\mid h] = h_i \pm 1/k^{3/2}$. Combining with the equation above we get the desired result. 
	

	
\end{proof}

\begin{claim}\label{lem:app:var}
	Under the same setting as in the proof of Theorem~\ref{thm:single}. The variance of $\hat{h}_i$ can be bounded by 
	\begin{equation}
	\Var[\hat{h}_i \vert h]  \le O(k/t) \nonumber
	\end{equation}
\end{claim}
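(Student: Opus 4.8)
\textbf{Proof proposal for Claim~\ref{lem:app:var}.}
The plan is to exploit independence across the rows of $W$ so that the variance of $\hat{h}_i$ decomposes into a sum of one-term second moments. In the notation of the proof of Lemma~\ref{lem:beforerelu}, set $Z_j \triangleq \alpha W_{ji}x_j = \alpha W_{ji}\, r(W_{ji}h_i+\eta_j)\cdot\mathbf{1}_{j\in T}$, so that $\hat{h}_i = \sum_{j=1}^n Z_j$. The key observation is that, with $h$ held fixed, the variables $\{Z_j\}_{j\in[n]}$ are mutually independent: $Z_j$ is a measurable function only of the $j$-th row $(W_{j1},\dots,W_{jm})$ of $W$ together with the $j$-th dropout coordinate $n_{\drop,j}$, while the rows of $W$ are i.i.d., the coordinates of $n_{\drop}$ are i.i.d., and $W$ is independent of $n_{\drop}$. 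Hence $\Var[\hat{h}_i\mid h] = \sum_{j=1}^n \Var[Z_j\mid h] \le \sum_{j=1}^n \E[Z_j^2\mid h]$.

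Next I would bound a single second moment. Since $Z_j=0$ whenever $j\notin T$, and $n_{\drop,j}$ is independent of $(W_{ji},\eta_j)$ with $\Pr[j\in T]=\rho$, we get $\E[Z_j^2\mid h] = \rho\,\alpha^2\,\E[W_{ji}^2\, r(W_{ji}h_i+\eta_j)^2\mid h]$. As in the proof of Lemma~\ref{lem:beforerelu}, conditioned on $h$ the variable $W_{ji}\sim\mathcal{N}(0,1)$ is independent of $\eta_j\sim\mathcal{N}(0,\sigma^2)$ with $\sigma^2\le \|h\|^2 = k$. Using the crude bound $r(a)^2\le a^2$, expanding the square, and noting the cross term vanishes because $\E[W_{ji}^3]=0$,
\[
\E[W_{ji}^2\, r(W_{ji}h_i+\eta_j)^2\mid h] \;\le\; h_i^2\,\E[W_{ji}^4] + \E[W_{ji}^2]\,\E[\eta_j^2] \;=\; 3h_i^2 + \sigma^2.
\]
Under the WLOG normalization $\|h\|^2 = k$ from the proof of Lemma~\ref{lem:beforerelu}, assumption~\eqref{eqn:hdistribution} gives $h_i^2 \le |h|_\infty^2 = \widetilde{O}(\log k) = O(k)$, so the right-hand side is $O(k)$ and $\E[Z_j^2\mid h] = O(\rho\,\alpha^2 k)$.

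Finally, summing over $j$ and plugging in $t = \rho n$ and $\alpha = 2/t$,
\[
\Var[\hat{h}_i\mid h] \;\le\; n\cdot\rho\cdot\alpha^2\cdot O(k) \;=\; t\cdot\frac{4}{t^2}\cdot O(k) \;=\; O(k/t),
\]
as claimed. I do not expect any serious obstacle here: the only points that need a moment's care are the conditional-independence step that makes the variance split across $j$ (so that no covariance terms survive), and observing that the $\widetilde{O}(\log k)$ factor coming from $|h|_\infty$ is harmless since $\log k = O(k)$ for large $k$.
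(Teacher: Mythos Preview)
Your proof is correct and follows essentially the same skeleton as the paper: use independence across $j$ (conditioned on $h$) to split the variance, bound each per-term second moment by $O(k)$, and sum to get $O(k/t)$. The only tactical differences are that the paper bounds $\E[W_{ji}^2 x_j^2\mid h]$ via Cauchy--Schwarz, $\E[W_{ji}^4]^{1/2}\E[x_j^4]^{1/2}\le O(k)$, whereas you expand $\E[W_{ji}^2(W_{ji}h_i+\eta_j)^2]$ directly (this is exactly the second-moment bound in Lemma~\ref{lem:exp}, which the paper itself remarks gives a tighter route); and you absorb the indicator $\mathbf{1}_{j\in T}$ into $Z_j$ rather than conditioning on $T$ and then averaging, which is arguably cleaner since it makes the independence across $j$ manifest without a separate law-of-total-variance step.
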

\begin{proof}
	Using Cauchy-Schwartz inequality we obtain 
	
	\begin{align*}
	\Var[W_{ji}x_j\vert h]  & \le \Exp[W_{ji}^2x_j]^2\vert h] \\
	& \le \Exp[W_{ji}^4\vert h]^{1/2}\Exp[x_j^4\vert h]^{1/2}\le O(k)
	\end{align*}
	It follows that 
	\begin{align*}
	\Var[\hat{h}_i \vert h] & = \Exp_T\left[\sum_{j\in T}\alpha^2 \Var[W_{ji}x_j\vert h]\right] \le \alpha^2 t\cdot O(k) = O(k/t),  \nonumber
	\end{align*}
	
	
	We note that a tighter calculation of the variance can be obtained using Lemma~\ref{lem:exp}). 
\end{proof}
\section{Auxiliary lemmas for one layer model }\label{sec:onelayertechnical}

We need a series of lemmas for the one layer model, which will be further used in the proof of two layers and three layers result. 

As argued earlier, the particular scaling of the distribution of $h$ is not important, and we can assume WLOG that $\|h\|^2 = k$. Throughout this section, we assume that for $h\sim D_h$, 
\begin{eqnarray}
h \in \mathbb{R}_{\ge 0}^n, \quad |h|_0 \le k \, ,\,\|h\| =k\,,\, \textrm{ and }\, |h|_{\infty} \le O(\sqrt{\log N})\quad \textrm{almost surely} \label{eqn:scaledassumption}
\end{eqnarray}
\begin{lemma}\label{lem:exp}
	Suppose $w\in \mathcal{N}(0,1)$ and $\xi\sim \mathcal{N}(0,\sigma^2)$ are two independent variables. For $\sigma = \Omega(1)$ and $0\le h\le \log(\sigma)$, we have that $\Exp\left[w\cdot \relu(wh+ \xi)\right] = \frac{h}{2} \pm \tilO(1/\sigma^3)$ and $\Exp[w^2\cdot \relu(wh+ \xi)^2] \le 3h^2+\sigma^2$. 
\end{lemma}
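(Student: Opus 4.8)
\textbf{Proof proposal for Lemma~\ref{lem:exp}.}

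The plan is to compute the two expectations by conditioning on $w$ and exploiting the fact that, for fixed $w$, the quantity $r(wh+\xi)$ is a rectified Gaussian with mean $wh$ and variance $\sigma^2$. First I would recall the elementary formula for a rectified Gaussian: if $\xi\sim\mathcal N(0,\sigma^2)$ then $\E[r(a+\xi)] = a\,\Phi(a/\sigma) + \sigma\,\phi(a/\sigma)$, where $\phi,\Phi$ are the standard normal density and CDF. Applying this with $a = wh$ gives
\[
\E[w\, r(wh+\xi)] = \E\!\left[w\cdot\bigl(wh\,\Phi(wh/\sigma) + \sigma\,\phi(wh/\sigma)\bigr)\right]
= h\,\E[w^2\,\Phi(wh/\sigma)] + \sigma\,\E[w\,\phi(wh/\sigma)].
\]
For the main term, the idea is that since $h\le\log\sigma$ (so $h/\sigma$ is tiny) and $w$ is $O(\sqrt{\log N})$-subgaussian with overwhelming probability, the argument $wh/\sigma$ is small, hence $\Phi(wh/\sigma)\approx \tfrac12$; more precisely $\Phi(wh/\sigma) = \tfrac12 + \tfrac{1}{\sqrt{2\pi}}(wh/\sigma) + O((wh/\sigma)^3)$. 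Plugging in, the linear-in-$w$ correction $\frac{h}{\sqrt{2\pi}\sigma}\E[w^3]$ vanishes by symmetry of the Gaussian, leaving $h\cdot\tfrac12\E[w^2] = h/2$ plus an error controlled by $h\,\E[|w|^2\cdot|wh/\sigma|^3] = O(h^4/\sigma^3)\E[|w|^5] = \tilO(1/\sigma^3)$ (using $h\le\log\sigma$ and absorbing polylog factors into $\tilO$). For the second term, $\E[w\,\phi(wh/\sigma)]$: expand $\phi(wh/\sigma) = \tfrac{1}{\sqrt{2\pi}}(1 - \tfrac12(wh/\sigma)^2 + \cdots)$; the leading term $\E[w]=0$, the quadratic term gives $\E[w^3]=0$, and the first surviving term is $O((h/\sigma)^4)\E[|w|^5]$, so $\sigma\cdot\E[w\,\phi(wh/\sigma)] = \tilO(\sigma\cdot(h/\sigma)^4) = \tilO(1/\sigma^3)$ as well. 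Collecting the pieces yields $\E[w\,r(wh+\xi)] = h/2 \pm \tilO(1/\sigma^3)$.

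For the second bound, I would just use crude inequalities since only an upper bound (not a sharp constant) is needed. Since $r(z)^2\le z^2$ for all $z$, we have $\E[w^2 r(wh+\xi)^2]\le \E[w^2(wh+\xi)^2] = h^2\E[w^4] + 2h\,\E[w^3]\E[\xi] + \E[w^2]\E[\xi^2]$. Here $\E[w^4]=3$, $\E[w^3]=0$, $\E[w^2]=1$, and $\E[\xi^2]=\sigma^2$, giving exactly $3h^2 + \sigma^2$. (Independence of $w$ and $\xi$ is what lets the cross term factor.)

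The main technical nuisance — not a deep obstacle, but the place requiring care — is making the Taylor-expansion error bounds for $\E[w^2\Phi(wh/\sigma)]$ rigorous: one must either (i) control the tail of $w$, splitting into the event $|w|\le C\sqrt{\log N}$ where the expansion with explicit Lagrange remainder applies and the complementary event which contributes negligibly because $|\Phi|\le 1$ and Gaussian tails decay super-polynomially, or (ii) use that $|\Phi(u)-\tfrac12-\tfrac{u}{\sqrt{2\pi}}|\le C|u|^3$ uniformly in $u$ (a clean global bound, avoiding any truncation), which I would prefer. With the global cubic bound in hand, everything reduces to computing a few low moments of a standard Gaussian, and the stated $\tilO(1/\sigma^3)$ (where $\tilO$ hides the $\log\sigma$ and $\log N$ factors coming from $h\le\log\sigma$) follows immediately. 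I would also double-check that the hypothesis $h\le\log\sigma$ is exactly what is needed to convert powers of $h$ into polylogarithmic factors and keep the $1/\sigma^3$ scaling clean.
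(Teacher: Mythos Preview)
Your proposal is correct and follows the same overall strategy as the paper (condition on $w$, reduce to a one-dimensional rectified-Gaussian computation, then Taylor expand in the small parameter $wh/\sigma$ and use that odd moments of $w$ vanish), but your bookkeeping is a bit different and arguably cleaner. The paper does not invoke the closed form $\E[r(a+\xi)]=a\,\Phi(a/\sigma)+\sigma\,\phi(a/\sigma)$; instead it writes $\E[w\,r(wh+\xi)\mid w]=\tfrac{C_1\sigma}{2}w+\tfrac{h}{2}w^2+G(w)$ with the residual integral $G(w)=w\int_0^{wh}\phi(y)(wh-y)\,dy$, then computes $G',G'',G''',G^{(4)}$ explicitly, Taylor-expands $G$ at $0$, and truncates $|w|\le 10\log\sigma$ to control the remainder. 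Your route avoids all of that derivative computation by Taylor-expanding $\Phi$ and $\phi$ directly; the global cubic bound $|\Phi(u)-\tfrac12-\tfrac{u}{\sqrt{2\pi}}|\le C|u|^3$ you prefer is valid and lets you skip the truncation step entirely. One small simplification you missed: since $\phi$ is even, $w\,\phi(wh/\sigma)$ is an odd function of $w$, so $\E[w\,\phi(wh/\sigma)]=0$ exactly --- no expansion needed for that term. The second-moment bound via $r(z)^2\le z^2$ is identical to the paper's argument. (Minor quibble: in this lemma the $\tilO$ hides powers of $\log\sigma$, not $\log N$; there is no $N$ in the statement.)
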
 

\begin{proof}
	Let $\phi(x) =\frac{1}{\sigma\sqrt{2\pi}}\exp(-\frac{x^2}{2\sigma^2})$ be the density function for random variable $\xi$. Let $C_1 = \Exp[|\xi|/\sigma] =\frac{2}{\sigma}\int_{0}^{\infty} \phi(x)xdx$. We start by calculating $\Exp\left[\relu(wh+\xi)\mid w\right]$ as follows: 
	
	\begin{align*}
	\Exp\left[w\cdot \relu(wh+\xi)\mid w\right] &= w\int_{-wh}^{\infty}\phi(x)(wh+x)dx \\
	& = w\int_{0}^{\infty}\phi(x)(wh+x)dx + w\int_{-wh}^{0}\phi(x)(wh+x)dx \\
	& = \frac{C_1\sigma w}{2} + \frac{w^2h}{2}  + \underbrace{w\int_{0}^{wh}\phi(y)(wh-y)dy}_{G(w)}\\
	\end{align*}
	Therefore we have that 
	\begin{align*}
	\Exp\left[\Exp\left[w\cdot \relu(wh+\xi)\mid w\right]\right] & 
	= \frac{h}{2}  + \Exp[G(w)]
	\end{align*}
	Thus it remains to understand $G(w)$ and bound its expectation. We calculate the derivative of $G(w)$: We have that $G(w)$ can be written as 
	
	$$G(w) = w^2h\int_{0}^{wh}\phi(y)dy - w\int_{0}^{wh}\phi(y)ydy$$ 
	and its derivative is 
	\begin{align*}
	G(w)' & = 2wh\int_{0}^{wh}\phi(y)dy + w^2h^2\phi(wh) - \int_{0}^{wh}\phi(y)ydy - w^2h^2\phi(wh) \\
	& = 2wh\int_{0}^{wh}\phi(y)dy  - \int_{0}^{wh}\phi(y)ydy 
	\end{align*}
	The it follows that 
	\begin{align*}
	G(w)'' 
	& = 2h\int_{0}^{wh}\phi(y)dy + 2wh^2\phi(wh)  - wh^2\phi(wh) \\
	& = 2h\int_{0}^{wh}\phi(y)dy + wh^2\phi(wh)  
	\end{align*}
	Moreover, we can get the third derivative and forth one 
	\begin{align*}
	G(w)''' 
	& = 3h^2\phi(wh)+ wh^3\phi'(wh)  
	\end{align*}
	and 
	\begin{align*}
	G(w)^{(4)} 
	& = 4h^3\phi'(wh)+ wh^4\phi''(wh)
	\end{align*}
	Therefore for $h \le 10\log(\sigma)$ and $w$ with $|w|\le 10\log(\sigma)$, using the fact that $\phi'(wh) = \frac{1}{\sigma\sqrt{2\pi}}\exp(-\frac{w^2h^2}{2\sigma^2})\frac{wh}{\sigma^2}\le \frac{2wh}{\sigma^3\sqrt{2\pi}}$ and similarly, $\phi''(wh) \le \frac{1}{\sigma^3\sqrt{2\pi}}$. It follows that $G(w)^{(4)}\le O(\frac{wh^4}{\sigma^3})$ for any $w\le 10\log(\sigma)$. 
	
	Now we are ready to bound $\Exp[G(w)]$: By Taylor expansion at point 0,  we have that $G(w) = \frac{h^2\phi(0)}{2}w^3 + \frac{G(\zeta)^{(4)}}{24}w^4$ for some $\zeta$ between $0$ and $w$. It follows that for $|w|\le 10\log(\sigma)$, $|G(w) - \frac{h^2\phi(0)}{2}w^3|\le \tilO(1/\sigma^3)$, and therefore we obtain that 
\begin{align*}
	\left|\Exp\left[G(w)\mid |w|\le 10\log \sigma\right] - \Exp\left[\frac{h^2\phi(0)}{2}w^3\mid  |w|\le 10\log \sigma\right]\right| & \le \Exp\left[\left|G(w)- \frac{h^2\phi(0)}{2}w^3\right|\mid |w|\le 10\log \sigma\right]  \\
	& \	\le \tilO(1/\sigma^3)
\end{align*}

	Since $w| |w|\le 10\log \sigma$ is symmetric,  we have $\Exp\left[\frac{h^2\phi(0)}{2}w^3\mid  |w|\le 10\log \sigma\right] =0$, and it follows that $\Exp[G(w)\mid |w|\le 10\log \sigma] \le \tilO(1/\sigma^3)$. Moreover, note that $|G(w)|\le O(w^3h^2)\le O(\log^2\sigma w^3)$, then we have that 
	$$\Exp[G(w)\mid |w|\ge 10\log\sigma] \Pr[|w|\ge 10\log\sigma] \le \sigma^{-4}\int_{|w|\ge 10\log\sigma}O(\log^2\sigma w^3) \exp(-w^2/2)dw\le O(1/\sigma^3)$$
	Therefore altogether we obtain 
	\begin{align*}
	\Exp[G(w)] &=\Exp[G(w)\mid |w|\ge 10\log\sigma ] \Pr[|w|\ge 10\log\sigma]+\Exp[G(w)\mid |w|\le 10\log \sigma]\Pr[ |w|\le 10\log \sigma] \\
	& = \tilO(1/\sigma^3) + O(1/\sigma^3) = \tilO(1/\sigma^3). 
	\end{align*}
	
	Finally we bound the variance of $w\cdot\relu(wh+\xi)$. We have that 
	
	\begin{align*}
	\Exp[w^2\cdot\relu(wh+\xi)^2] & \le \Exp[w^2\cdot(wh+\xi)^2]  = h^2\Exp[w^4] + \Exp[w^2]\Exp[\xi^2] = 3h^2 + \sigma^2\,.\\
	\end{align*}
	as desired. 
	
\end{proof}

\begin{lemma}\label{lem:two-correlation}
	For any $a,b\ge 0$ with $a,b\le 5\log \sigma$ and random variable $u,v\sim \mathcal{N}(0,1)$ and $\xi\in \mathcal{N}(0,\sigma^2)$, we have that
	$\left|\Exp[uv\cdot\relu(au+bv+\xi)^2]  - \Exp[u\relu(au+bv+\xi)]\Exp[v\relu(au+bv+\xi)]\right|=  \tilO(1)$. 
	
\end{lemma}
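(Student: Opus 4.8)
The plan is to reduce Lemma~\ref{lem:two-correlation} to the one-dimensional computations already carried out in Lemma~\ref{lem:exp}. The quantity $\Exp[uv\cdot\relu(au+bv+\xi)^2]$ depends only on the joint Gaussian vector $(u,v,\xi)$, so I would condition on $\xi$ throughout, or better, absorb $bv+\xi$ into a single Gaussian: given $u$, the variable $\zeta \triangleq bv+\xi$ is Gaussian with variance $b^2+\sigma^2 = \Theta(\sigma^2)$, and $v$ and $\zeta$ are jointly Gaussian with $v = \frac{b}{b^2+\sigma^2}\zeta + \text{(indep.\ Gaussian)}$. This lets me write $\Exp[v\,\relu(au+\zeta)^2 \mid u,\zeta]$ as a linear function of $\zeta$, reducing the first term to expressions of the form $\Exp[u\cdot\zeta\cdot\relu(au+\zeta)^2]$ and $\Exp[u\cdot\relu(au+\zeta)^2]$.

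First I would establish the ``decorrelation'' heuristic precisely: since $a,b \le 5\log\sigma$ are tiny compared to $\sigma$, the term $au+bv$ is a lower-order perturbation of $\xi$ inside the RELU. Concretely I would Taylor-expand $\relu(au+bv+\xi)$ around $\relu(\xi)$ (using that $\relu$ is Lipschitz and piecewise linear, as in the proof of Lemma~\ref{lem:beforerelu}), write $\relu(au+bv+\xi)^2 = \relu(\xi)^2 + 2\relu(\xi)\mathbf{1}_{\xi>0}(au+bv) + (\text{error})$, and track how each piece contributes. The leading term $\Exp[uv\relu(\xi)^2]$ factors as $\Exp[u]\Exp[v]\Exp[\relu(\xi)^2] = 0$ by independence; the cross term contributes something of size $O(ab\cdot\Exp[\relu(\xi)\mathbf{1}_{\xi>0}]) = O(ab\sigma) = \tilO(\sigma)$ — wait, that is too big, so I would instead keep $au+bv$ together with one factor of $u$ or $v$ rather than expanding blindly. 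The right bookkeeping is: $\Exp[uv\relu(au+bv+\xi)^2]$ and the product $\Exp[u\relu(\cdots)]\Exp[v\relu(\cdots)]$ should each be computed to enough accuracy that their $\sigma^2$-order and $\sigma$-order parts cancel, leaving an $\tilO(1)$ remainder. By Lemma~\ref{lem:exp}, $\Exp[u\,\relu(au+bv+\xi)] = \frac{a}{2} \pm \tilO(1/\sigma^3)$ and similarly for $v$, so the product is $\frac{ab}{4} + \tilO(1/\sigma^3) = \tilO(1)$ on its own (since $ab = \tilO(1)$). Hence it suffices to show $\Exp[uv\relu(au+bv+\xi)^2] = \tilO(1)$ directly.

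For that bound, I would condition on $u$ and $v$, so that $\relu(au+bv+\xi)$ has the form $\relu(c+\xi)$ with $c = au+bv$, and compute $\Exp[\xi$-part$]$ exactly: $\Exp[\relu(c+\xi)^2] = \int_{-c}^\infty (c+x)^2\phi(x)\,dx$, which is a smooth function $F(c)$ of $c$ with $F(0) = \frac{\sigma^2}{2}$, $F'(0) = \Exp[|\xi|] = \Theta(\sigma)$, $F''(0) = \Theta(1)$, and higher derivatives bounded (the analysis mirrors the $G(w)$ computation in Lemma~\ref{lem:exp}). Then $\Exp[uv\,F(au+bv)]$: expanding $F$ to second order around $0$, the constant term gives $\Exp[uv]F(0) = 0$, the linear term $F'(0)\Exp[uv(au+bv)] = F'(0)(a\Exp[u^2v] + b\Exp[uv^2]) = 0$ by oddness, and the quadratic term $\frac{F''(0)}{2}\Exp[uv(au+bv)^2] = \frac{F''(0)}{2}(a^2\Exp[u^3v] + 2ab\Exp[u^2v^2] + b^2\Exp[uv^3]) = \frac{F''(0)}{2}\cdot 2ab = F''(0)ab = \tilO(1)$; the cubic-and-higher remainder terms carry powers of $(a+b)/\sigma = \tilO(1/\sigma)$ and are negligible, with the tail event $|u|$ or $|v| > 10\log\sigma$ handled by a crude moment bound exactly as at the end of the proof of Lemma~\ref{lem:exp}.

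The main obstacle I anticipate is the truncation/tail control: $F(c)$ is only ``nice'' for $|c|$ bounded, and one must carefully split $\Exp[uv\,F(au+bv)]$ over the event $\{|u|,|v| \le 10\log\sigma\}$ and its complement, bounding the latter using $|F(c)| = O(c^2+\sigma^2)$ together with Gaussian tail bounds so that the contribution is $O(1/\poly(\sigma))$. Everything else is the routine Taylor-expansion bookkeeping that parallels Lemma~\ref{lem:exp}; the only genuinely new ingredient is noticing the odd-moment cancellations $\Exp[u^2v] = \Exp[uv^2] = \Exp[u^3v] = \Exp[uv^3] = 0$ and $\Exp[u^2v^2] = 1$ that kill the $\sigma^2$- and $\sigma$-order contributions and leave the claimed $\tilO(1)$ bound.
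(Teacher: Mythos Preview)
Your argument is correct and lands on the same main term $\Exp[uv\,\relu(au+bv+\xi)^2]=ab+\tilO(1/\sigma)$, but the execution differs from the paper's. The paper first performs an orthogonal change of variables $(u,v)\to(x,y)$ with $t=\sqrt{a^2+b^2}$, $x=(au+bv)/t$, so that the RELU depends only on $x$ and $\xi$; writing $uv=\tfrac{ab}{t^2}(x^2-y^2)+\tfrac{b^2-a^2}{t^2}xy$ and integrating out the independent $y$ reduces the problem to the one-dimensional quantity $\tfrac{ab}{t^2}\Exp[(x^2-1)\relu(tx+\xi)^2]$, whose remainder is controlled via a separate derivative bound (Lemma~\ref{lem:H(z)}). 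You instead keep the two variables, Taylor-expand $F(c)=\Exp_\xi[\relu(c+\xi)^2]$ at $c=0$, and rely on the vanishing of the mixed odd moments $\Exp[u^2v]=\Exp[uv^2]=\Exp[u^3v]=\Exp[uv^3]=0$ together with $\Exp[u^2v^2]=1$ to kill the $\sigma^2$- and $\sigma$-order pieces. The rotation buys a clean reduction to a single Gaussian variable and packages the error as one function $H(x)$; your route avoids the coordinate change and the auxiliary lemma at the price of tracking a few more moments, but is no harder and gives the identical bound. One small wording fix: the cubic remainder is controlled not by ``powers of $(a+b)/\sigma$'' per se but by the uniform bound $|F'''(\zeta)|=2\phi(\zeta)\le O(1/\sigma)$, which combined with $|c|=\tilO(1)$ on the truncated event gives the $\tilO(1/\sigma)$ you need.
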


\begin{proof}
	Let $t = \sqrt{a^2+b^2}$ in this proof. 
	We first represent $u = \frac{1}{t}(ax+by)$ and $v = \frac{1}{t}(bx-ay)$, where $x = \frac{1}{t}(au+bv)$ and $y = \frac{1}{t}(bu-av)$ are two independent Gaussian random variables drawn from $\mathcal{N}(0,1)$. We replace $u,v$ by the new parameterization, 
	\begin{align*}
	\Exp[uv\cdot\relu(au+bv+\xi)^2] &= \Exp\left[\frac{1}{t^2}(ax+by)(bx-ay)\relu(tx+\xi)^2\right] \\
	&= \Exp\left[\frac{1}{t^2} abx^2\relu(tx+\xi)^2\right] -\Exp\left[\frac{1}{t^2} ab\relu(tx+\xi)^2\right]  \\
	&= \Exp\left[\frac{1}{t^2} ab(x^2-1)\relu(tx+\xi)^2\right] 
	\end{align*}
	
	where we used the fact that $\Exp[y] =0$ and $y$ is independent with $x$ and $\xi$. 
	
	Then we expand the expectation by conditioning on $x$ and taking expectation over $\xi$: 
	
	\begin{align*}
	(x^2-1)\Exp[\relu(tx+\xi)^2\mid x] 
	&  = (x^2-1)\int_{-tx}^{\infty} (tx+y)^2\phi(y)dy\\
	&  = (x^2-1)\int_{-tx}^{0} (tx+y)^2\phi(y)dy+ (x^2-1)\int_{0}^{\infty} (tx+y)^2\phi(y)dy\\
	&  = (x^2-1)\int_{0}^{tx} (tx-y)^2\phi(y)dy+ (x^2-1) (C_0t^2x^2+ 2C_1tx + C_2)\\
	& \triangleq H(x) +  (x^2-1) (C_0t^2x^2+ 2C_1tx + C_2)
	\end{align*}
	
	where $C_0 = \int_0^{\infty}\phi(y)dy = \frac{1}{2}$, and $C_1$, $C_2$ are two other constants (the values of which we don't care) that don't depend on $x$. 
	Therefore by Lemma~\ref{lem:H(z)}, we obtain that $|\Exp[H(x)]|\le \tilO(1/\sigma)$ and therefore taking expectation of the equation above we obtain that
	
	\begin{align*}
	\Exp[(x^2-1)\Exp[\relu(tx+\xi)^2\mid x]] 
	& =\Exp[H(x)] +  \Exp[(x^2-1) (C_2t^2x^2+ 2C_1tx + C_0)]\\
	& = 2C_2t^2 \pm \tilO(1/\sigma) = t^2\pm \tilO(1/\sigma)
	\end{align*}
	
	Therefore we have that $\Exp[uv\cdot\relu(au+bv+\xi)^2] = \Exp\left[\frac{1}{t^2} ab(x^2-1)\relu(tx+\xi)^2\right]  = \tilO(1)$. Using the fact that $\Exp[u\relu(au+bv+\xi)]\le \tilO(1)$ and $ \Exp[v\relu(au+bv+\xi)]\le \tilO(1)$, we obtain that 
	
	$\left|\Exp[uv\cdot\relu(au+bv+\xi)^2]  - \Exp[u\relu(au+bv+\xi)]\Exp[v\relu(au+bv+\xi)]\right|=  \tilO(1)$. 
\end{proof}

\begin{lemma}\label{lem:H(z)}
	Suppose $z\sim\mathcal{N}(0,1)$, and let $\phi(x)=\frac{1}{\sigma\sqrt{2\pi}}\exp(-\frac{x^2}{2\sigma^2})$ be the density function of $\mathcal{N}(0,\sigma^2)$ where $\sigma =\Omega(1)$. For $r \le 10\log \sigma$, define $G(z) = \int_{0}^{rz} (rz-y)^2\phi(y)dy$, and $H(z) = (z^2-1)G(z)$. Then we have that $\Exp[|H(z)|]\le \tilO(1/\sigma)$
\end{lemma}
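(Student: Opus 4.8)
The plan is to show that $G(z) = \int_0^{rz}(rz-y)^2\phi(y)\,dy$ is uniformly small whenever $|rz|$ is moderate, and that the tail contributions where $|rz|$ is large are negligible thanks to the Gaussian weight on $z$. Concretely, first observe that $G(z)$ is an even-in-sign-of-$rz$ nonnegative quantity, and for $z \ge 0$ (the case $z<0$ being symmetric up to obvious modifications, or handled by noting $G(z)=0$ when $rz\le 0$ since the integral is over an empty or sign-reversed range) we bound the integrand crudely: $(rz-y)^2 \le (rz)^2$ on $y\in[0,rz]$, while $\phi(y) \le \phi(0) = \frac{1}{\sigma\sqrt{2\pi}}$. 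Hence $G(z) \le (rz)^2 \cdot \frac{1}{\sigma\sqrt{2\pi}} \cdot (rz) = \frac{(rz)^3}{\sigma\sqrt{2\pi}}$, i.e. $G(z) = O(r^3 z^3/\sigma)$ for $z\ge 0$. More generally $|G(z)| = O(r^3|z|^3/\sigma)$.

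Next, split the expectation $\Exp[|H(z)|] = \Exp[|z^2-1|\cdot|G(z)|]$ according to whether $|z| \le 10\log\sigma$ or not. On the bulk event $|z|\le 10\log\sigma$: since $r\le 10\log\sigma$, we have $r|z| \le 100\log^2\sigma = \tilO(1)$, so the bound above gives $|G(z)| = O(r^3|z|^3/\sigma) = \tilO(1/\sigma)$ uniformly, and $\Exp[|z^2-1|\,\indicator{|z|\le 10\log\sigma}] \le \Exp[z^2+1] = 2$, so this part contributes $\tilO(1/\sigma)$. On the tail event $|z| \ge 10\log\sigma$: I would use the crude polynomial bound $|G(z)| = O(r^3|z|^3/\sigma) = \tilO(|z|^3/\sigma)$ (or even just $|G(z)| \le O(r^2 z^2)\cdot O(r|z|) = O(\log^2\sigma\cdot|z|^3)$ without the $1/\sigma$ gain if that's cleaner), multiply by $|z^2-1| = O(z^2)$, and integrate against the standard Gaussian density: $\int_{|z|\ge 10\log\sigma} \poly(|z|)\, e^{-z^2/2}\,dz$ decays faster than any polynomial in $\sigma$ — in fact it is at most $\sigma^{-c}$ for large $c$ — so this term is also $\tilO(1/\sigma)$, even $O(\sigma^{-10})$, say. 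Adding the two pieces gives $\Exp[|H(z)|] \le \tilO(1/\sigma)$ as claimed.

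The only mild subtlety — and the place I would be most careful — is the case analysis on the sign of $rz$ and the precise meaning of $\int_0^{rz}$ when $rz<0$: presumably the convention is that $G(z)=0$ there (consistent with how $G$ arose in Lemma~\ref{lem:two-correlation}, where the relevant range was $[-tx,0]$ reparametrized), so that $|G(z)|$ is genuinely supported on $z\ge 0$ (for $r>0$) and the $|z|^3$ bound I wrote is really $(rz)^3$ for $z\ge 0$. This is bookkeeping rather than a real obstacle; there is no delicate cancellation to exploit here, since we are bounding $\Exp[|H(z)|]$ and not $\Exp[H(z)]$, so crude pointwise bounds followed by splitting bulk from tail suffice. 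The whole argument is elementary once the pointwise estimate $|G(z)| = O(r^3|z|^3/\sigma)$ is in hand.
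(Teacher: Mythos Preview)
Your proof is correct and actually more elementary than the paper's. The paper arrives at the same bulk/tail split, but reaches the bulk estimate $|H(z)|\le \tilO(1/\sigma)$ for $|z|\le 10\log\sigma$ by computing the first three derivatives of $G$ and of $H$, observing $H(0)=H'(0)=H''(0)=0$, and bounding $|H'''(z)|\le \tilO(1/\sigma)$ to invoke Taylor's theorem. You short-circuit all of that calculus with the single pointwise observation $\phi(y)\le \phi(0)=1/(\sigma\sqrt{2\pi})$ and $(rz-y)^2\le (rz)^2$, giving $|G(z)|\le (r|z|)^3/(\sigma\sqrt{2\pi})$ directly; the rest of your argument (multiplying by $|z^2-1|$ and controlling the Gaussian tail) matches the paper's. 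Your route is shorter and avoids derivative bookkeeping; the paper's route perhaps makes the vanishing-to-third-order structure of $H$ at the origin more visible, but that structure is not actually needed here since you are bounding $\Exp[|H(z)|]$ rather than exploiting cancellation in $\Exp[H(z)]$.

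One small correction to your sign discussion: under the standard convention $\int_0^{a}=-\int_a^0$ for $a<0$, one has $G(z)<0$ when $rz<0$, not $G(z)=0$. But your bound $|G(z)|\le (r|z|)^3\phi(0)$ holds for all $z$ regardless, so this does not affect the argument; it is, as you say, bookkeeping.
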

\begin{proof}
	Our technique is to approximate $H(z)$ using taylor expansion at point close to 0, and then argue that since $z$ is very unlikely to be very large, so the contribution of large $z$ is negligible. We calculate the derivate of $H$ at point $z$ as follows: 
	\begin{align*}
	H(z)'& = 2zG(z)+ (z^2-1)G'(z) \\
	H(z)'' & = 2G(z) +4zG'(z)+ (z^2-1)G''(z) \\
	H(z)''' & = 6G'(z) + 6zG''(z) + (z^2-1)G'''(z)\\
	\end{align*}
	Moreover, we have that the derivatives of $G$
	\begin{align*}
	G'(z) & = \int_{0}^{rz}2r(rz-y)\phi(y)dy \\
	G''(z) &= \int_0^{rz}2r^2\phi(y)dy \\
	G'''(z) & = 2r^3\phi(rz) \\
	\end{align*}
	
	Therefore we have that $G'(0) = G''(0) = 0$ and $H'(0) = H''(0) = 0$. Moreover, when $r \le 10\log \sigma$ and $z \le 10\log \sigma$ and we have that bound that $|H'''(z)|\le \tilO(1/\sigma)$. Therefore, we have that for $z\le 10\log \sigma$, $|H(z)|\le \tilO(1/\sigma)$, and therefore we obtain that 
	$\Exp[H(z)\mid |z|\le 10\log \sigma] \Pr[|z|\le 10\log \sigma]\le  \tilO(1/\sigma)$. Since $|H(z)|\le \tilO(1)\cdot z^4$ for any $z$, we obtain that for $z\ge 10\log\sigma$, the contribution is negligible: 	$\Exp[H(z)\mid |z| > 10\log \sigma] \Pr[|z|>  10\log \sigma]\le \tilO(1/\sigma^2)$. Therefore altogether we obtain that $|\Exp[|H(z)|]|\le \tilO(1/\sigma)$. 
\end{proof}

The following two lemmas are not used for the proof of single layer net, though they will be useful for proving the result for 2-layer net. The following lemma bound the correlation between $\hat{h}_i$ and $\hat{h}_j$. 
\begin{lemma}\label{lem:pairwise-correlation}
	Let $\hat{h} = W^Tx$, and $D_h$ satsifies~\eqref{eqn:scaledassumption},   then	for any $i,j$, we have that $|\Exp[\hat{h}_i\hat{h}_j] - \Exp[\hat{h_i}]\Exp[\hat{h_j}]|\le \tilO(1/t)$. 
\end{lemma}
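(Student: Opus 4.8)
\textbf{Proof plan for Lemma~\ref{lem:pairwise-correlation}.}

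The plan is to fix $i\neq j$ and expand the product $\hat h_i\hat h_j=\left(\sum_{p} W_{pi}x_p\right)\left(\sum_{q} W_{qj}x_q\right)=\sum_{p,q}W_{pi}W_{qj}x_px_q$, and compare its expectation (over $W$, the dropout support $T$, and $h$) with the product $\Exp[\hat h_i]\Exp[\hat h_j]$, which by~\eqref{eqn:expectation} is $h_ih_j\pm\tilO(1/t)$ after accounting for the scaling (note $\hat h$ here is $W^Tx$, not the $\alpha$-scaled version, but since $\alpha=2/t$ is a deterministic scalar the two normalizations differ only by a known factor, so I would either carry $\alpha$ along or rescale $h$ so $\|h\|^2=k$ as in~\eqref{eqn:scaledassumption} and absorb constants). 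First I would condition on $h$ and on the dropout support $T$, so that the only remaining randomness is $W$, and recall $x_p=r(W_{pi}h_i+\eta_p)\cdot\mathbf 1_{p\in T}$ with $\eta_p=\sum_{\ell\neq p}W_{p\ell}h_\ell$; the key structural fact is that $x_p$ and $x_q$ for $p\neq q$ depend on disjoint rows of $W$ and hence are independent given $h,T$.

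The main step is a case split of the double sum $\sum_{p,q}W_{pi}W_{qj}x_px_q$ according to whether $p=q$. For the off-diagonal terms $p\neq q$: conditioned on $h,T$, the factors $(W_{pi},x_p)$ and $(W_{qj},x_q)$ are independent, so $\Exp[W_{pi}W_{qj}x_px_q\mid h,T]=\Exp[W_{pi}x_p\mid h,T]\cdot\Exp[W_{qj}x_q\mid h,T]$, and Lemma~\ref{lem:exp} (applied with $\sigma^2=\Omega(k)$, using that $\log\sigma\ge h_p$ from assumption~\eqref{eqn:scaledassumption}) gives each factor equal to $\tfrac12 h_i\pm\tilO(1/k^{3/2})$ and $\tfrac12 h_j\pm\tilO(1/k^{3/2})$ respectively. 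Summing over the roughly $|T|^2\approx t^2$ off-diagonal pairs and rescaling by $\alpha^2=4/t^2$ reproduces exactly $h_ih_j\pm\tilO(1/t)$ — i.e.\ the off-diagonal contribution matches $\Exp[\hat h_i]\Exp[\hat h_j]$ up to the allowed error, with the cross terms between the $\tfrac12 h$ ``signal'' and the $\tilO(1/k^{3/2})$ ``bias'' contributing at most $t^2\cdot\tilO(h_i/k^{3/2})\cdot\alpha^2=\tilO(1/\sqrt k)\le\tilO(1/t)$ since $t<k^2$. For the diagonal terms $p=q$, we get $\sum_{p\in T}\Exp[W_{pi}W_{pj}x_p^2\mid h,T]$; here $i\neq j$ so $W_{pi},W_{pj}$ are distinct standard Gaussians appearing inside $x_p=r(W_{pi}h_i+W_{pj}h_j+\tilde\eta_p)$ where $\tilde\eta_p=\sum_{\ell\neq p,\,\ell\neq i,j}W_{p\ell}h_\ell\sim\mathcal N(0,\sigma^2)$ with $\sigma^2=\Omega(k)$. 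This is precisely the setup of Lemma~\ref{lem:two-correlation} (with $u=W_{pi}$, $v=W_{pj}$, $a=h_i$, $b=h_j$), which bounds $\bigl|\Exp[uv\,r(au+bv+\xi)^2]-\Exp[u\,r(\cdots)]\Exp[v\,r(\cdots)]\bigr|=\tilO(1)$; but the product-of-expectations piece is itself $\tilO(1)$ by the single-coordinate estimates, so each diagonal term is $\tilO(1)$, and after summing over $|T|\approx t$ indices and multiplying by $\alpha^2=4/t^2$ the whole diagonal contribution is $\tilO(t)\cdot\tilO(1)/t^2=\tilO(1/t)$.

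Finally I would collect the two contributions: the off-diagonal part equals $\Exp[\hat h_i]\Exp[\hat h_j]\pm\tilO(1/t)$ and the diagonal part is $\pm\tilO(1/t)$, so $\bigl|\Exp[\hat h_i\hat h_j\mid h,T]-\Exp[\hat h_i\mid h,T]\Exp[\hat h_j\mid h,T]\bigr|\le\tilO(1/t)$; then I take expectation over $T$ (using that $|T|=(1\pm o(1))t$ w.h.p., exactly as in the proof of Lemma~\ref{lem:concentration}, and that the rare event $|T|$ atypical contributes negligibly since all quantities are polynomially bounded) and then over $h$, which only averages the bound and does not increase it. The main obstacle I anticipate is the bookkeeping in the off-diagonal term: one must verify that the ``bias$\times$signal'' and ``bias$\times$bias'' cross terms, accumulated over $\Theta(t^2)$ pairs, stay below $\tilO(1/t)$ after the $\alpha^2$ rescaling — this uses $k<t<k^2$ crucially — and one must be careful that removing coordinates $i,j$ from the definition of $\eta_p$ changes the variance $\sigma^2$ only by $O(\log N)$, which is absorbed into the $\Omega(k)$ lower bound so that Lemmas~\ref{lem:exp} and~\ref{lem:two-correlation} still apply verbatim.
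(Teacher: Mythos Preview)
Your overall structure matches the paper's: split $\sum_{p,q}W_{pi}W_{qj}x_px_q$ into diagonal ($p=q$) and off-diagonal ($p\neq q$) terms, use row-independence to factor the off-diagonal expectations, and invoke Lemma~\ref{lem:two-correlation} to bound each diagonal term by $\tilO(1)$. The diagonal part is fine.

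The gap is in your off-diagonal bookkeeping. You approximate both the off-diagonal sum and $\Exp[\hat h_i]\Exp[\hat h_j]$ separately by $h_ih_j$ via Lemma~\ref{lem:exp}, and then assert that the residual cross terms satisfy $t^2\cdot\tilO(h_i/k^{3/2})\cdot\alpha^2 = \tilO(1/\sqrt k) \le \tilO(1/t)$ ``since $t<k^2$.'' That inequality is false: since $k<t$ we have $1/\sqrt k > 1/k > 1/t$, and even the tighter bound $\tilO(1/k^{3/2})$ (which is what your arithmetic actually gives, using $|h_i|\le\tilO(1)$) is not $\le\tilO(1/t)$ in the regime $k^{3/2}<t<k^2$. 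So as written the off-diagonal error does not fit under the target $\tilO(1/t)$.

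The fix --- and this is exactly what the paper does --- is to skip the detour through $h_ih_j$ altogether. By the independence you already noted, for $p\neq q$ one has $\Exp[W_{pi}W_{qj}x_px_q]=\Exp[W_{pi}x_p]\,\Exp[W_{qj}x_q]$, which is \emph{identically} the $(p,q)$ summand in the expansion $\Exp[\hat h_i]\Exp[\hat h_j]=\alpha^2\sum_{p,q}\Exp[W_{pi}x_p]\,\Exp[W_{qj}x_q]$. Hence the off-diagonal contributions cancel exactly and
\[
\Exp[\hat h_i\hat h_j]-\Exp[\hat h_i]\Exp[\hat h_j]
=\alpha^2\sum_{p}\Bigl(\Exp[W_{pi}W_{pj}x_p^2]-\Exp[W_{pi}x_p]\,\Exp[W_{pj}x_p]\Bigr),
\]
a sum of $|T|\approx t$ terms each bounded by $\tilO(1)$ via Lemma~\ref{lem:two-correlation}, giving $\alpha^2\cdot\tilO(t)=\tilO(1/t)$ directly. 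No separate estimation of the two sides, and no ``bias $\times$ signal'' cross terms, are needed.
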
 

\begin{proof}
	We expand the definition of $\hat{h}_i$ and $\hat{h}_j$ directly. 
	\begin{align*}
	\Exp[\hat{h}_i\hat{h}_j] &= \alpha^2 \Exp\left[\left(\sum_{u\in S}W_{ui}x_u\right)\left(\sum_{v\in S}W_{vj}x_v\right)\right]\\	
	&= \alpha^2 \sum_{u\in S}\Exp\left[W_{ui}W_{vi}x_u^2\right]  + \alpha^2\sum_{u\neq v}\Exp\left[W_{ui}W_{vj}x_ux_v\right]\\
	& = \alpha^2 \sum_{u\in S}\Exp\left[W_{ui}W_{vi}x_u^2\right]  + \alpha^2\sum_{u\neq v}\Exp\left[W_{ui}x_u\right]\Exp\left[W_{vj}x_ux_v\right]\\
	& = \alpha^2 \sum_{u\in S}\left(\Exp\left[W_{ui}W_{vi}x_u^2\right] - \Exp[W_{ui}x_u]\Exp[W_{vj}x_v]\right)+ \alpha^2\sum_{u, v}\Exp\left[W_{ui}x_u\right]\Exp\left[W_{vj}x_ux_v\right]\\
	& = \alpha^2 \sum_{u\in S}\left(\Exp\left[W_{ui}W_{vi}x_u^2\right] - \Exp[W_{ui}x_u]\Exp[W_{vj}x_v]\right) + \Exp[\hat{h}_i]\Exp[\hat{h}_j]
	\end{align*}
	where in the third line we use the fact that $W_{ui}x_u$ is independent with $W_{vj}x_v$, and others are basic algebra manipulation. Using Lemma~\ref{lem:pairwise-correlation}, we have that 
	$$\left|\Exp\left[W_{ui}W_{vi}x_u^2\right] - \Exp[W_{ui}x_u]\Exp[W_{vj}x_v]\right|\le \tilO(1)$$
	Therefore we obtain that 
	\begin{align*}
	\left|\Exp[\hat{h}_i\hat{h}_j] - \Exp[\hat{h}_i]\Exp[\hat{h}_j] \right|
	& = \alpha^2 \tilO(t) = \tilO(1/t)
	\end{align*}
\end{proof}
The following lemmas shows that the 
\begin{lemma}\label{lem:linearcom}
	Under the single-layer setting of Lemma~\ref{lem:pairwise-correlation}, let $K$ be the support of $h$.  Then for any $k$-dimensional vector $u_K$ such that $|u_K|_{\infty}\le \tilO(1)$, we have that $$\Exp[|u_K^T(\hat{h}_K-\Exp \hat{h}_K)|^2] \le \tilO(k^2/t)$$
\end{lemma}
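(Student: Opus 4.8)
\textbf{Proof proposal for Lemma~\ref{lem:linearcom}.}

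The plan is to reduce the bound on $\Exp[|u_K^T(\hat{h}_K - \Exp\hat{h}_K)|^2]$ to the second-moment estimates already available from the single-layer analysis, by expanding the quadratic form and controlling both the diagonal (variance) and off-diagonal (covariance) contributions. Writing $v_i \triangleq \hat{h}_i - \Exp[\hat{h}_i]$ for $i \in K$, we have
\begin{equation}
\Exp[|u_K^T(\hat{h}_K - \Exp\hat{h}_K)|^2] = \sum_{i\in K} u_i^2\, \Var[\hat{h}_i] + \sum_{i\neq j,\, i,j\in K} u_i u_j \left(\Exp[\hat{h}_i\hat{h}_j] - \Exp[\hat{h}_i]\Exp[\hat{h}_j]\right).
\end{equation}
For the diagonal terms, Claim~\ref{lem:app:var} gives $\Var[\hat{h}_i \mid h] \le O(k/t)$, and since $|u_i| \le \tilO(1)$ and $|K| \le k$, the diagonal sum is at most $\tilO(1) \cdot k \cdot O(k/t) = \tilO(k^2/t)$. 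For the off-diagonal terms, Lemma~\ref{lem:pairwise-correlation} gives $|\Exp[\hat{h}_i\hat{h}_j] - \Exp[\hat{h}_i]\Exp[\hat{h}_j]| \le \tilO(1/t)$ for each pair, and there are at most $k^2$ such pairs, each weighted by $|u_i u_j| \le \tilO(1)$; hence the off-diagonal sum is also bounded by $\tilO(k^2/t)$. Adding the two contributions yields the claimed bound $\tilO(k^2/t)$.

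The main subtlety is that the covariance bound from Lemma~\ref{lem:pairwise-correlation} is, for a single pair, only $\tilO(1/t)$, while naively summing $k^2$ pairs each of this size gives $\tilO(k^2/t)$ --- which is exactly the target, so no cancellation among off-diagonal terms is needed. This is fortunate: proving a cancellation (e.g.\ bounding the spectral norm of the covariance matrix of $\hat{h}_K$ by something better than its trace plus $k$ times the max off-diagonal entry) would require understanding higher-order dependence structure among the $W_{ji}$'s across different $i$, which is exactly the kind of complication the paper tries to avoid. So the step I expect to watch most carefully is verifying that the per-pair covariance bound in Lemma~\ref{lem:pairwise-correlation} holds \emph{uniformly} over $i,j \in K$ (including $i=j$, where it should reduce to the variance bound, consistent with Claim~\ref{lem:app:var}) and that the hidden logarithmic factors do not accumulate badly when summing $k^2$ terms; here one uses that $\tilO(\cdot)$ suppresses $\poly\log N$ factors, so $k^2$ copies still give $\tilO(k^2/t)$.

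One remaining technical point is whether the statement is meant conditionally on $h$ (with the support $K$ fixed) or in full expectation over $h \sim D_h$. Lemma~\ref{lem:pairwise-correlation} and Claim~\ref{lem:app:var} are both stated for the randomness of $W$ (and the dropout set $T$) with $h$ essentially fixed up to the scaling normalization $\|h\| = k$ of~\eqref{eqn:scaledassumption}; since the bound $\tilO(k^2/t)$ does not depend on which coordinates lie in $K$, taking a further expectation over $h$ (hence over $K$) preserves it. So I would phrase the argument conditionally on $h$ and then remark that the bound is uniform, which immediately gives the unconditional version. No concentration inequality is needed --- this is purely a second-moment computation --- which keeps the proof short.
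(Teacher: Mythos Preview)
Your proposal is correct and follows essentially the same approach as the paper: expand the quadratic form, bound the $k$ diagonal terms using the variance estimate of Claim~\ref{lem:app:var} and the $k^2$ off-diagonal terms using the covariance bound of Lemma~\ref{lem:pairwise-correlation}, then sum. Your remarks on conditioning on $h$ and on the fact that no cancellation among off-diagonal terms is needed are accurate and match the paper's (implicit) reasoning.
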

\begin{proof}
	We expand our target and obtain, 
	\begin{align*}
	\Exp[|u_K^T(\hat{h}_K-\E \hat{h}_K)|^2]  & = \Exp[|\sum_{i\in K}u_i(\hat{h}_i-\E\hat{h}_i)|^2] \\
	& = \sum_{i\in K}\Exp[u_i^2(h_i-\E\hat{h}_i)^2] + \Exp\left[\sum_{i\neq j} u_iu_j(\hat{h}_i-\E\hat{h}_{i})(\hat{h}_j-\E\hat{h}_{j})\right]\\
	& = \sum_{i\in K}\Exp[(h_i-\E\hat{h}_i)^2]\cdot \tilO(1) + \max_{i\neq j}\{|\Exp[\hat{h}_i\hat{h}_j]-\Exp[\hat{h}_i]\Exp[\hat{h}_j]|\} \cdot \tilO(1) \cdot k^2
	\end{align*}
	By Lemma~\ref{lem:app:var}, we have that $\Exp[(h_i-\E\hat{h}_i)^2]\le \tilO(\|h\|^2/t)$. 
	By Lemma~\ref{lem:pairwise-correlation}, therefore we obtain that $\max_{i\neq j}\{|\Exp[\hat{h}_i\hat{h}_j]-\Exp[\hat{h}_i]\Exp[\hat{h}_j]|\}\le \tilO(1/t) $. Using the fact that $\|h\|^2\le \tilO(k)$,  
	
	$$	\Exp[|u_K^T(\hat{h}_K-\E\hat{h}_K)|^2] \le \tilO(k^2/t)$$
\end{proof}

\section{Multilayer Reversibility and Dropout Robustness}
We state the formal version of Theorem~\ref{thm:twolayer} here.  Recall that we assume the distribution of $h^{(\ell)}$ satisfies that 
\begin{equation}
h^{(\ell)} \in \R_{\ge 0}^{n_{\ell}}\,, \,|h^{(\ell)}|_0 \le k_{\ell} \textrm{ and } |h^{(\ell)}|_{\infty} \le O\left(\sqrt{\log N/(k_{\ell})}\right)\|h\|\quad \textrm{a.s.} \label{eqn:multi-hdistribution}
\end{equation}
\begin{theorem}[2-Layer Reversibility and Dropout Robustness]
	For $\ell = 2$, and $k_{2} < k_{1} < k_0 < k_2^2$, there exists constant offset vector $b_0, b_1$ such that with probability .9 over the randomness of the weights $(W_0,W_1)$ from prior~\eqref{eqn:multi-prior}, and $h^{(2)}\sim D_2$ that satisfies~\eqref{eqn:multi-hdistribution}, and observable $x$ generated from the 2-layer generative model~\eqref{eqn:mult-generative}, the hidden representations obtained from running the network feedforwardly  
	$$\tilde{h}^{(1)} = r(W_0^Tx+b_1), \textrm{ and }\tilde{h}^{(2)} = r(W_1^Th^{(1)}+b_2)$$
	are entry-wise close to the original hidden variables 
	\begin{equation}
	\forall i\in [n_2], \quad \Exp\left[|\tilde{h}_i^{(2)} - h_i^{(2)}|^2\right] \le \tilO(k_2/k_1)\cdot \bar{h}^{(2)}\label{eqn:entry-wise-errorh2-app}
	\end{equation}
	\begin{equation}
	\forall i\in [n_1], \quad \Exp\left[|\tilde{h}_i^{(1)} - h_i^{(1)}|^2\right] \le \tilO(k_1/k_0)\cdot \bar{h}^{(1)}\label{eqn:entry-wise-errorh1}	
	\end{equation}
	
	where $\bar{h}^{(2) }= \frac{1}{k_{2}}\sum_{i}h^{(2)}_i$ and $\bar{h}^{(1)}= \frac{1}{k_{1}}\sum_{i}h^{(1)}_i$ are the average of the non-zero entries of $h^{(2)}$ and $h^{(1)}$. 
	
	Moreover,  there exists offset vector $b_1'$ and $b_2'$, such that when hidden representation is calculated feedforwardly with dropping out a random subset $G_0$ of size $n_0/2$ in the observable layer and $G_2$ of size $n_1/2$ in the first hidden layer:
	$$\tilde{h}^{(1)\drop} = r(2W_0^T(x\odot n^0_{\drop}) +b_1'), \textrm{ and }\tilde{h}^{(2)} = r(2W_1^T(h^{(1)}\odot n^1_{\drop})+b_2')$$
	are entry-wise close to the original hidden variables in the same form as equation~\eqref{eqn:entry-wise-errorh2-app} and~\eqref{eqn:entry-wise-errorh1}. (Here $n^0_{\drop}$ and $n^{1}_{\drop}$ are uniform random binary vector of size $n_0$ and $n_1$, which governs which coordinates to be dropped). 
	
\end{theorem}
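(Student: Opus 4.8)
The plan is to peel the two layers one at a time, reusing the single-layer machinery (Theorem~\ref{thm:single}, Lemma~\ref{lem:beforerelu}, Claim~\ref{lem:app:var}, Lemma~\ref{lem:pairwise-correlation}, Lemma~\ref{lem:linearcom}), and to isolate the one genuinely new phenomenon: the reconstruction error produced by the first feedforward step is correlated with the weight matrix $W_1$ used in the second step, so the second step cannot be analyzed by a naive application of the one-layer result. I handle this by conditioning on the intermediate variable $h^{(1)}$, which decouples $W_1$ from the bottom-layer noise. Throughout I absorb the $\alpha_j$ scalings using the scale-invariance of $r$, as in the single-layer proofs.

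\emph{Step 1 (bottom layer).} First I verify that the marginal law of $h^{(1)} = s_{k_1}(r(\alpha_1 W_1 h^{(2)}))$ obeys assumption~\eqref{eqn:multi-hdistribution}: it is $k_1$-sparse and nonnegative by construction, and on its support its entries are distributed like $\alpha_1$ times half-Gaussians of variance $\Theta(\|h^{(2)}\|^2)$, so w.h.p.\ $|h^{(1)}|_\infty \le \tilO(\sqrt{\log N})\,\alpha_1\|h^{(2)}\| = \tilO(\sqrt{\log N/k_1})\,\|h^{(1)}\|$. Since $W_0$ is independent of $h^{(1)}$ and $k_1 < k_0 < k_2^2 < k_1^2$, Theorem~\ref{thm:single} applies verbatim to the bottom layer and yields an offset $b_1$ with $\|\tilde h^{(1)} - h^{(1)}\|^2 \le \tilO(k_1/k_0)\,\|h^{(1)}\|^2$ w.h.p.; this is~\eqref{eqn:entry-wise-errorh1} (the high-probability bound upgrades to the stated expectation bound via a crude a.s.\ polynomial bound on the error). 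From Lemma~\ref{lem:beforerelu} and the choice of $b_1$ one also gets, on the good event, that the error $e := \tilde h^{(1)} - h^{(1)}$ is supported on $K_1 := \supp(h^{(1)})$ (off $K_1$ the pre-activation $\alpha_0 W_0^T x + b_1$ is nonpositive, so the RELU kills it), and by $1$-Lipschitzness of $r$ that on $K_1$, $|e_j|$ is controlled by the one-layer noise $|\hat h^{(1)}_j - \Exp[\hat h^{(1)}_j \mid h^{(1)}]|$ plus the negligible bias plus $|b_1|$.

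\emph{Step 2 (top layer and denoising).} The feedforward output is $\tilde h^{(2)}_i = r(\alpha_1 [W_1^T \tilde h^{(1)}]_i + b_2)$, and I split $\alpha_1 W_1^T \tilde h^{(1)} = \alpha_1 W_1^T h^{(1)} + \alpha_1 W_1^T e$. For the clean term, observe that $(h^{(2)}, h^{(1)})$ is literally an instance of the one-layer generative model with weights $W_1$, hidden sparsity $k_2$ and visible sparsity $k_1$ (and $k_2 < k_1 < k_2^2$), so Lemma~\ref{lem:beforerelu} gives, w.h.p.\ over $W_1, T_1, h^{(2)}$, an offset $b_2$ with $\|\alpha_1 W_1^T h^{(1)} - h^{(2)}\|_\infty \le \tilO(1/\sqrt{k_1})\,\|h^{(2)}\|$. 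For the error term I condition on $h^{(1)}$, which freezes $W_1$ and $K_1$ and makes $e$ a function of the bottom-layer randomness $(W_0,T_0,n_{\drop})$ alone; then $[W_1^T e]_i = \sum_{j\in K_1} W_{1,ji}e_j$ is a \emph{fixed} linear functional of the bottom-layer reconstruction, with coefficient vector $u=(W_{1,ji})_{j\in K_1}$ satisfying $\|u\|_\infty \le \tilO(1)$ and $\|u\|^2 \le \tilO(k_1)$ w.h.p.\ over $W_1$. Its second moment I bound either crudely by Cauchy--Schwarz, $\Exp[(\alpha_1[W_1^Te]_i)^2\mid h^{(1)}] \le \alpha_1^2\|u\|^2\,\Exp[\|e\|^2\mid h^{(1)}] = \tilO(\|h^{(1)}\|^2/(k_1 k_0))$, or sharply by feeding $u$ into Lemma~\ref{lem:linearcom} (with Claim~\ref{lem:app:var} and Lemma~\ref{lem:pairwise-correlation} supplying the variance and covariance of $\hat h^{(1)}$) together with the $|b_1|\cdot|\sum_{j\in K_1}W_{1,ji}| = \tilO(|b_1|\sqrt{k_1})$ contribution of the constant part of the error. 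Using $\|h^{(1)}\|^2 \asymp \|h^{(2)}\|^2/k_1$ and $k_1<k_0$, in either case this is $\le \tilO(k_2/k_1)\,(\bar h^{(2)})^2$, in fact much smaller, hence subdominant to the clean term. Combining, $\alpha_1[W_1^T\tilde h^{(1)}]_i + b_2 = h^{(2)}_i + \xi_i$ with $|\xi_i| \le \tilO(\|h^{(2)}\|/\sqrt{k_1}) + \alpha_1|[W_1^Te]_i|$; applying the RELU exactly as in the proof of Theorem~\ref{thm:single} ($b_2$ zeroes out coordinates outside $\supp(h^{(2)})$, $r$ is $1$-Lipschitz and $r(h^{(2)}_i)=h^{(2)}_i$ since $h^{(2)}\ge 0$) gives $|\tilde h^{(2)}_i - h^{(2)}_i| \le \tilO(\|h^{(2)}\|/\sqrt{k_1}) + \alpha_1|[W_1^Te]_i|$. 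Squaring, taking expectation over the bottom-layer randomness conditioned on $h^{(1)}$ (Step~1 and the error bound above) and then over $h^{(1)}$ yields~\eqref{eqn:entry-wise-errorh2-app}; a Markov/averaging argument over $(W_0,W_1)$ converts the joint statement into the claimed $0.9$ measure of weights. The dropout-robustness claims follow exactly as Theorem~\ref{thm:dropout} follows from Theorem~\ref{thm:single}: dropping a further half of $x$ (resp.\ of $\tilde h^{(1)}$) is distributionally the same as running the generative model~\eqref{eqn:mult-generative} with $k_0$ (resp.\ $k_1$) halved and inserting the compensating factor $2$, so Steps~1--2 apply with those sparsities scaled by $1/2$ and only hidden constants changing.

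\emph{Main obstacle.} The crux is Step~2's error term. Because $e=\tilde h^{(1)}-h^{(1)}$ depends on $W_1$ through $h^{(1)}$, the vector $W_1^T e$ is not a sum of independent mean-zero terms, and a careless bound introduces a spurious $\sqrt{n_1}$ factor. Conditioning on $h^{(1)}$ to decouple $W_1$ from the bottom-layer noise and then invoking the fixed-coefficient second-moment bound of Lemma~\ref{lem:linearcom} is exactly what forces (i) the extra logarithmic slack, (ii) an expectation-only guarantee rather than a high-probability one, since we control only the second moment of a linear combination of correlated sub-exponential terms, and (iii) the $0.9$-measure-of-weights statement; it is also the step that does not obviously iterate past two layers without the stronger sparsity hypothesis $\sqrt{k_3}k_2 < k_0$ used in Theorem~\ref{thm:threelayer}.
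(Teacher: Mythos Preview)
Your proposal is correct and follows essentially the same route as the paper's proof (Theorem~\ref{thm:twolayer-tech}): handle the bottom layer by the single-layer result, then at the top layer split $W_1^T\tilde h^{(1)}$ into the clean term $W_1^T h^{(1)}$ (controlled by Lemma~\ref{lem:beforerelu}) and the error $W_1^T e$, exploit that $e$ is supported on $K_1$, condition on $h^{(1)}$ so that the column $u=(W_{1,ji})_{j\in K_1}$ acts as a fixed coefficient vector independent of the $W_0$-randomness, and bound $\Exp[(u_K^Te)^2]$ via the bias/variance/constant-shift decomposition and Lemma~\ref{lem:linearcom}. Two minor remarks: conditioning on $h^{(1)}$ does not literally ``freeze $W_1$''---what you actually need (and use) is that $e\mid h^{(1)}$ is a function of $(W_0,T_0)$ alone and hence independent of $u$; and at the top layer you only have a second-moment bound, so the RELU step should rely just on $1$-Lipschitzness and $r(h^{(2)}_i)=h^{(2)}_i$, not on $b_2$ zeroing out the complement of $\supp(h^{(2)})$ (which would require a high-probability bound you do not have).
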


For the ease of math, we use a cleaner notation and setup as in Section~\ref{sec:multilayer}. 
Suppose there are two hidden layers $g\in \mathbb{R}^p$ and $h\in \mathbb{R}^m$ and one observable layer $x$. We assume the sparsity of top layer is $q$,and we assume in our generative model that $h = s_{k}(\relu(\beta Ug))$ and $x =s_t(\relu(\alpha Wh))$ where $U$ and $W$ are two random matrices with standard normal entries. 

Let $\tilde{h} = \relu( W^Tx + b)$ and $\tilde{g} = \relu(U^Th+ c)$, where 
$b$ and $c$ two offset vectors. Our result in the last section show that $\tilde{h} \approx h$, and in the section we are going to show $\tilde{g} \approx g$. The main difficulty here is that the value of $h$ and $\tilde{h}$ depends on the randomness of $U$ and therefore the additional dependency and correlation introduced makes us hard to write $\tilde{g}$ as a linear combination of independent variables. Here we aim for a weaker result and basically prove that $\Exp[|g_{r}-\tilde{g}_{r}|^2]$ is small for any index $r\in [p]$.

\begin{theorem}\label{thm:twolayer-tech}
	When $q< k< t < q^2$, and any fixed $g\in \mathbb{R}^p$ with $|g|_0 \le q$ and $\|g\|_2 =\Theta(\sqrt{q})$ and $|g|_{\infty}\le \tilO(1)$, for $\alpha = \frac{2}{t}$, $\beta = 2/k$ and some $b$, and $c$, with high probability over the randomness of $W$, $\tilde{g}$ satisfies that for any $r\in [p]$, 
	$$\Exp[\|\tilde{g}_r - g_r\|^2] \le \tilO(q/k)$$  
\end{theorem}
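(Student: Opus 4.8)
The plan is to invoke the single-layer reversibility results of Section~\ref{sec:singlereverse} twice --- once for $W$ and once for $U$ --- and then absorb the error that propagates between the two layers. Here the feedforward estimates are $\tilde h = r(W^T x + b)$ and $\tilde g = r(U^T\tilde h + c)$. The one genuine difficulty, as the authors flag, is that $\tilde h$ depends on $U$ (through $h = s_k(r(\beta U g))$), so $\beta(U^T\tilde h)_r = \sum_u \beta U_{ur}\tilde h_u$ is \emph{not} a sum of independent terms and the one-layer concentration arguments do not apply to it directly; this is why the conclusion is only in expectation. The fix is a decoupling of column $r$ of $U$, exploiting that $h$ depends on that column only through the perturbation $U_{ur} g_r$, which is negligible since $|g_r|\le\tilO(1)\ll\|g\| = \Theta(\sqrt q)$.

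\textbf{Two single-layer applications.} Condition on $h$. Since $h$ is a function of $(g, U, n^1_{\drop})$ alone, it is independent of $(W, n^0_{\drop})$, hence given $h$ the residual $e := \tilde h - h$ (a function of $h,W,n^0_{\drop}$) is independent of $U$. A Gaussian-concentration check shows that w.h.p.\ over $(U,n^1_{\drop})$, $h$ is $\Theta(k)$-sparse with $\|h\|^2 = \Theta(q/k)$ (using $\beta = 2/k$, $\|g\|^2 = \Theta(q)$) and $|h|_\infty\le\tilO(\sqrt{\log N/k})\|h\|$, so $h$ meets the hypotheses of Theorem~\ref{thm:single}; applying that theorem to the $W$-layer with its offset $b$ gives, w.h.p.\ over $W$, that $e$ is supported on $\supp(h)$ and $\|e\|^2\le\tilO(k/t)\|h\|^2 = \tilO(q/t)$. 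Separately, since $h$ is \emph{exactly} the one-layer generative output of $g$ through $U$, Lemma~\ref{lem:beforerelu} (with $U,g,k,q$ playing the roles of $W,h,t,k$) gives, w.h.p.\ over $(U,n^1_{\drop})$, that $\|\beta U^T h - g\|_\infty\le\tilO(1/\sqrt k)\|g\| = \tilO(\sqrt{q/k})$.

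\textbf{Decoupling column $r$ (the heart of the argument).} Write $\beta(U^T\tilde h)_r = \beta(U^T h)_r + \beta(U^T e)_r$; the first term is controlled above. For the second, decompose $U_{\cdot r} = \mu + \nu$ with $\mu := \E[U_{\cdot r}\mid h]$. For $u\in\supp(h)$ one has $(Ug)_u = h_u/\beta$, and since $(U_{ur},(Ug)_u)$ is jointly Gaussian with covariance $g_r$ and $\Var[(Ug)_u] = \|g\|^2$, this forces $\mu_u = g_r h_u/(\beta\|g\|^2)$ and $\Var[U_{ur}\mid h]\le1$; for $u\notin\supp(h)$ we have $e_u = 0$, so those coordinates drop out, giving
\begin{equation}
\beta(U^T e)_r \;=\; \frac{g_r}{\|g\|^2}\langle h,e\rangle \;+\; \beta\langle\nu,e\rangle.\nonumber
\end{equation}
The first summand is at most $(\tilO(1)/\Theta(q))\cdot\|h\|\,\|e\|\le(\tilO(1)/q)\cdot\tilO(\sqrt{q/k})\cdot\tilO(\sqrt{q/t}) = \tilO(1/\sqrt{kt})$. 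For the second, the conditioning structure of the previous paragraph makes $\nu$ (given $h$) a vector of independent mean-zero $O(1)$-subgaussian entries that is independent of $e$; so $\langle\nu,e\rangle\mid(h,e)$ is $O(\|e\|)$-subgaussian, whence $|\beta\langle\nu,e\rangle|\le\tilO(\sqrt q/(k\sqrt t))$ w.h.p.\ and $\E[\beta^2\langle\nu,e\rangle^2\mid h]\le\beta^2\E[\|e\|^2\mid h] = \tilO(q/(k^2 t))$. Since $q<k<t$, all contributions to $\beta(U^T\tilde h)_r - g_r$ are dominated by $\tilO(\sqrt{q/k})$.

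\textbf{Denoising and expectation.} Thus, on an event of probability $1 - p^{-\omega(1)}$, $\beta(U^T\tilde h)_r = g_r + \epsilon_r$ with $|\epsilon_r|\le\tilO(\sqrt{q/k})$ (and $g_r = 0$, $\mu = 0$ when $r\notin\supp(g)$, since then $U_{\cdot r}\perp h$). With a uniform offset $c$ of magnitude $\Theta(\tilO(\sqrt{q/k}))$, the $1$-Lipschitzness of $r$ and $g_r\ge0$ give $|\tilde g_r - g_r| = |r(\beta(U^T\tilde h)_r + c) - r(g_r)|\le\tilO(\sqrt{q/k})$ on that event (and $\tilde g_r = 0 = g_r$ when $r\notin\supp(g)$); the complementary event contributes $o(q/k)$ to $\E[|\tilde g_r - g_r|^2]$ after bounding $|\tilde g_r - g_r|^2$ crudely by a polynomial in the parameters. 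Hence $\E[|\tilde g_r - g_r|^2]\le\tilO(q/k)$. Dropout robustness follows as in Theorem~\ref{thm:dropout}: dropping half of $x$ (resp.\ of $\tilde h^{(1)}$) together with the compensating factor $2$ is distributionally the same as running the generative model with $t$ (resp.\ $k$) halved, so the argument above applies verbatim. The main obstacle remains the decoupling step; the rest --- controlling the pushforward distribution of $h$, intersecting the various high-probability events, and the subgaussian moment bounds --- is routine bookkeeping.
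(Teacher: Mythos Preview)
Your approach is correct but takes a genuinely different route from the paper. Both proofs write $\hat g_r = U_{\cdot r}^T\tilde h = U_{\cdot r}^T h + U_{\cdot r}^T e$ and control the first term via the single-layer Lemma~\ref{lem:beforerelu}; they diverge on how to handle the cross term $U_{\cdot r}^T e$.

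The paper \emph{fixes} $u=U_{\cdot r}$ (conditioning on the event $\|u\|_\infty\le\tilO(1)$ and $|u^Th-g_r|\le\tilO(\sqrt{q/k})$) and analyzes the randomness of $e$ over $(W,n^0_{\drop})$. Concretely it writes $u_K^T(h_K-\tilde h_K)=u_K^T(h_K-\E[\hat h_K])+u_K^T(\E[\hat h_K]-\hat h_K)+u_K^T(\hat h_K-\tilde h_K)$; the first piece is the bias from~\eqref{eqn:expectation}, the third is the constant offset $b$, and the middle piece is bounded in second moment by Lemma~\ref{lem:linearcom}, which in turn rests on the pairwise-covariance estimate $|\E[\hat h_i\hat h_j]-\E[\hat h_i]\E[\hat h_j]|\le\tilO(1/t)$ (Lemma~\ref{lem:pairwise-correlation}, itself relying on the delicate Gaussian integral of Lemma~\ref{lem:two-correlation}). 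Summing over $K$ gives $\E_W[|u_K^T(\hat h_K-\E\hat h_K)|^2]\le\tilO(k^2/t)$, which after rescaling is $\tilO(q/t)\le\tilO(q/k)$.

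You instead exploit the randomness in $u$: conditioning on $h$, you split $u=\mu+\nu$ with $\mu=\E[u\mid h]$, observe that on $\supp(h)$ the row-wise independence of $U$ makes $\nu$ conditionally Gaussian with variance $\le1$ and independent of $e$ (which is measurable w.r.t.\ $(h,W,n^0_{\drop})$), so $\langle\nu,e\rangle$ is $\|e\|$-subgaussian. This bypasses Lemmas~\ref{lem:two-correlation}--\ref{lem:linearcom} entirely and is conceptually cleaner; the price is that your bound is an expectation that also averages over $U_{\cdot r}$, so the ``for most $W$'' conclusion comes only after a Markov step (which the paper also needs, just for different reasons). The paper's route, by contrast, works for any fixed bounded $u$ and hence is closer to a per-$U$ statement.

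Two minor remarks: (i) your factor of $\beta$ in front of $U^T\tilde h$ is spurious in the paper's normalization ($\hat g_r=u^T\tilde h$, no $\beta$), but this does not affect the argument since the dominant $\tilO(\sqrt{q/k})$ term comes from $u^Th-g_r$ either way; (ii) the offset $b$ in Theorem~\ref{thm:single} is $-\delta\|h\|$, so to take it fixed you implicitly use that $\|h\|$ concentrates, which is indeed routine (LLN over the $\Theta(k)$ surviving coordinates).
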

\begin{proof}
	We fix an index $r \in [p]$, and consider $g_{r}$ and $\tilde{g}_{r}$. Moreover, we fix the choice of random sampling function $s_{k}(\cdot)$ and $s_t(\cdot)$ to the function $s_k(x) = [x_K,0]$ and $s_t(x) = [x_T,0]$, where $K$ and $T$ be subsets of $[m]$ and $[n]$, with size $k$ and $t$, respectively. Let $\hat{g}_{r} =  U_{r}^T\tilde{h}$ where $U_{r}$ is the $r$-th column of $U$, that is, $\hat{g}_{r}$ is the version before shift and rectifier linear. Further more, let's assume $U_{r}^T = u^T = \left[u_1,\dots, u_m\right]$ and $u_K$ is the its restriction to subset $K$. Therefore, $\hat{g}_{r}$ can be written as
	\begin{equation}
		\hat{g}_{r} = u^T\tilde{h} = u^Th + u^T(h-\tilde{h})\label{eqn:decompose}
	\end{equation}
	By Lemma~\ref{lem:beforerelu} (applying to the layer between $g$ and $h$), we know that with high probability over the randomness of $U$, $|u^T h - g_{r}|\le \tilO\left(\sqrt{q/k}\right)$. We conditioned on the event $\mathcal{E}$ that $|u^T h - g_{r}|\le \tilO\left(\sqrt{q/k}\right)$, and that $\|u\|_{\infty}\le \tilde{O}(1)$ in the rest of the proof (note that event $\mathcal{E}$ happens with high probability). 
	
	Now it suffices to prove that $u^T(h-\tilde{h})$ is small. First of all, we note that with high probability over the randomness of $W$, $\tilde{h}$ matches $h$ on the support, therefore we can write $ u^T(h-\tilde{h})= u_K^T(h_K-\tilde{h}_K)$, which turns out to be essential for bounding the error. We further decompose it into \begin{equation}
	u_K^T(h_K-\tilde{h}_K) =  u_K^T(h_K-\Exp[\hat{h}_K]) + u_K^T(\Exp[\hat{h}_K]-\hat{h}_K) +  u^T(\hat{h}_K-\tilde{h}_K), \label{eqn:split}
	\end{equation}and bound them individually. 
	
	First of all, we note that $h_i$ is typically of magnitude $\beta \sqrt{q}$ where $q$ is the sparsity of $g$,and $\|h\| \approx \beta \sqrt{kq}$. We first scale down $h$ by $\beta \sqrt{q}$ and then $h/(\beta\sqrt{q})$ meets the scaling of equation~\eqref{eqn:scaledassumption}. We are going to apply Lemmas in Section~\ref{sec:onelayertechnical} with $h/(\beta \sqrt{q})$. 
	
	by equation~\ref{eqn:expectation}, we have that $|\Exp[\hat{h}_i]/(\beta \sqrt{q})- h_i/(\beta \sqrt{q})|\le \tilO(1/(q^{3/2})) $, and therefore $|\Exp[\hat{h}_K]- h_K|_1 \le \tilO(\beta \sqrt{q}\cdot 1/(q^{3/2}k) \cdot k) = \widetilde{O}(1/q) $. 
	
	Therefore $u_K^T(h_K-\Exp[\hat{h}_K]) \le  |u|_{\infty}|\Exp[\hat{h}_K]- h_K|_1 \le \widetilde{O}(1/q)  $. Moreover, we note that $\tilde{h}_K - \hat{h}_K = b\mathbf{1}_K$ is a constant vector and therefore $u^T(\tilde{h}_K - \hat{h}_K) = b'$ for a constant $b$ (which depends on $r$ and $u$ implicitly). 
	
	Finally we bound the term $u_K^T(\hat{h}_K- \Exp[\hat{h}_K]) $. We invoke Lemma~\ref{lem:linearcom} (with $h/(\beta\sqrt{q})$) and obtain that 
	
	\begin{equation}
	\|\Exp[|u_K^T(\hat{h}_K/(\beta\sqrt{q})- \Exp[\hat{h}_K]/(\beta\sqrt{q}))|^2 \mid h,\mathcal{E}]\|\le \tilO(k^2/t) \nonumber
	\end{equation}
	
	It follows that 
		\begin{equation}
		\|\Exp[|u_K^T(\hat{h}_K- \Exp[\hat{h}_K])|^2 \mid h,\mathcal{E}]\|\le \tilO(q/t) \nonumber
		\end{equation}
	
	 Note that the small probability event has only negligible contribution to the the expectation, therefore bounding the difference and marginalize over $h$, we obtain that $\Exp[\|u_K^T(h_K-\Exp[\hat{h}_K])\|^2]\le \tilO(q/t)$. Therefore we have bounded the three terms in RHS of~\eqref{eqn:split}. 
	 $\Exp[|\hat{g}_r- u^T\tilde{h}|^2]\le \tilO(1/q+q/k) = \tilO(q/k)$. Note that $g = r(\hat{g} + b)$, therefore for any $b = \epsilon \mathbf{1}$  with $\epsilon \le \tilO(q/k)$, we obtain that $\Exp\left[\|\hat{g}-g\|_{\infty}^2\right] \le \tilO(q/k)$.
	
\end{proof}
\subsection{Proof Sketch of Theorem~\ref{thm:threelayer}}
As argued in Section~\ref{sec:multilayer}, the drawback of not getting high probability bound in the 2-layer Theorems is that we lose the denoising property of rectifier linear. Note that since $|\hat{g}_r - g_r|$ is only small in expectation, passing through the rectifier linear we obtain $r(\hat{g} +c)$, and we can't argue that $r(\hat{g}+c)$ matches the support $g$ theoretically. (Though experimentally and intuitively, we believe that choosing $c$ to be proportional to the noise would remove the noise on all the non-support of $g$. ) 

However, bounding the error in a weaker way we could obtain Theorem~\ref{thm:threelayer}. The key idea is that the in a three layer network described as in Section~\ref{sec:multilayer}, the inference error of $h^{(3)}$ come from three sources: the error caused caused by incorrectly recover $h^{(1)}$, $h^{(2)}$ and the error of reversing the third layer $W_2$. The later two sources error reduces to two and one layer situation and therefore we can bound them. For the first source of error, we note that by Theorem~\ref{thm:twolayer-tech}, the 
\section{Toolbox}

\begin{definition}[Orlicz norm $\|\cdot\|_{\psi_{\alpha}}$] For $1 \le \alpha < \infty$, let $\psi_{\alpha}(x) = \exp(x^{\alpha})-1$. For $0 <\alpha < 1$, let $\psi_{\alpha}(x) = x^{\alpha}-1$ for large enough $x\ge x_{\alpha}$, and $\psi_{\alpha}$ is linear in $[0,x_{\alpha}]$. Therefore $\psi_{\alpha}$ is convex. The Orlicz norm $\psi_{\alpha}$ or a random variable $X$ is defined as 
	\begin{equation}
		\|X\|_{\psi_{\alpha}} \triangleq \inf\{c\in (0,\infty) \mid \Exp\left[\psi_{\alpha}(|X|/c) \le 1\right]
	\end{equation}
\end{definition}


\begin{theorem}[Bernstein' inequality for subexponential random variables~\cite{lecue09}]\label{thm:berstein-psi1}
	There exists an absolute constant $c >0$ for which the following holds: Let $X_1,\dots,X_n$ be $n$ independent mena zero $\psi_1$ random variables. Then for every $t > 0$, 
	$$\Pr\left[\left|\sum_{i=1}^n X_i \right| > t \right]\le 2\exp\left(-c\min\left(\frac{t^2}{n\bar{v}},\frac{t}{M}\right)\right)$$
	where $M = \max_i \|X_i\|_{\psi_1}$ and $\bar{v} = n^{-1}\sum_i \|X_i\|_{\psi_1}^2$. 
\end{theorem}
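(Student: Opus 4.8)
The plan is to follow the classical route for exponential tail bounds: control the moment generating function (MGF) of each summand in a neighbourhood of the origin, multiply these bounds using independence, and then optimize the resulting Chernoff bound over the free parameter $\lambda$. The characteristic feature of the $\psi_1$ (sub-exponential) case is that the single-variable MGF bound holds only on a bounded interval $|\lambda| \lesssim 1/\|X\|_{\psi_1}$, rather than for all $\lambda$; this is exactly what forces the $\min\!\big(t^2/(n\bar v),\, t/M\big)$ in the conclusion — the quadratic (sub-Gaussian) term governs the regime where the unconstrained optimal $\lambda$ lies inside the interval, and the linear term governs the regime where it does not.

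\textbf{Step 1 (MGF of one summand).} Fix $i$ and set $K = \|X_i\|_{\psi_1}$. From the definition of the Orlicz norm, $\E[\exp(|X_i|/K)] \le 2$, so Markov's inequality gives the tail bound $\Pr[|X_i| > s] \le 2 e^{-s/K}$ for all $s \ge 0$. Integrating this tail via $\E[|X_i|^p] = \int_0^\infty p\,s^{p-1}\Pr[|X_i|>s]\,ds$ yields $\E[|X_i|^p] \le 2\,p!\,K^p$ for every integer $p\ge 1$. Since $\E[X_i]=0$, substituting these moment bounds into the power series $\E[e^{\lambda X_i}] = 1 + \sum_{p\ge 2} \lambda^p \E[X_i^p]/p!$ bounds it by a geometric series in $|\lambda|K$, giving, for absolute constants $C_0>0$ and $c_0\in(0,1)$,
$$\E[e^{\lambda X_i}] \le \exp\!\big(C_0\lambda^2 K^2\big) \qquad\text{whenever } |\lambda| \le c_0/K .$$

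\textbf{Step 2 (Chernoff bound, optimization, symmetrization).} For $\lambda>0$ with $\lambda \le c_0/M$, the bound of Step 1 applies simultaneously to all $i$ (since $K_i \le M$), so independence gives
$$\Pr\!\Big[\sum_{i=1}^n X_i > t\Big] \le e^{-\lambda t}\prod_{i=1}^n \E[e^{\lambda X_i}] \le \exp\!\Big(-\lambda t + C_0\lambda^2 \sum_{i=1}^n K_i^2\Big) = \exp\!\big(-\lambda t + C_0\lambda^2 n\bar v\big),$$
using $n\bar v = \sum_i K_i^2$. The unconstrained minimizer of the exponent is $\lambda^\star = t/(2C_0 n\bar v)$. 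If $\lambda^\star \le c_0/M$ (equivalently $t \le 2C_0 c_0\, n\bar v/M$), take $\lambda=\lambda^\star$ to get exponent $-t^2/(4C_0 n\bar v)$; otherwise take $\lambda = c_0/M$, and since then $t > 2C_0 c_0\, n\bar v/M$ the quadratic term is at most half the linear one, so the exponent is at most $-c_0 t/(2M)$. In either case the exponent is $\le -c\min\!\big(t^2/(n\bar v),\,t/M\big)$ for a universal constant $c$. Running the identical argument on $-X_1,\dots,-X_n$ bounds $\Pr[\sum_i X_i < -t]$ the same way, and a union bound over the two tails produces the factor $2$.

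The only genuinely substantive step is Step 1 — establishing that the $\psi_1$ norm controls every moment with the factorial growth $\E[|X_i|^p] \lesssim p!\,K^p$, which is precisely what makes the MGF series converge on an interval rather than on all of $\R$; once that is in hand the remainder is standard Bernstein-type bookkeeping. I expect the main (mild) difficulty to be keeping the absolute constants honest through the chain Orlicz norm $\to$ exponential tail $\to$ moments $\to$ MGF, so that the final $c$ is genuinely universal and independent of $n$, $t$, and the distributions of the $X_i$.
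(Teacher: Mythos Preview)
Your proof is correct and follows the standard route (Orlicz norm $\Rightarrow$ exponential tail $\Rightarrow$ factorial moment growth $\Rightarrow$ MGF bound on a bounded interval $\Rightarrow$ Chernoff optimization). Note, however, that the paper does not actually prove this theorem: it is stated in the Toolbox section as a cited result from~\cite{lecue09} and used as a black box, so there is no paper-proof to compare against. Your argument is the classical one underlying that reference, and the constants propagate cleanly to give a universal~$c$ as you claim.
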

%
%
%

\begin{lemma}\label{lem:psi-norm-mean-shift}
		Suppose random variable $X$ has $\psi_{\alpha}$ orlicz norm $a$, then $X-\Exp[X]$ has $\psi_{\alpha}$ orlicz norm at most $2a$. 
\end{lemma}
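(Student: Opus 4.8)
The plan is to obtain this as a routine consequence of two standard properties of Orlicz norms: the triangle inequality for $\|\cdot\|_{\psi_\alpha}$, and the fact that convexity of $\psi_\alpha$ lets one move an expectation inside via Jensen's inequality. Write $a=\|X\|_{\psi_\alpha}$ and regard $\E[X]$ as a constant random variable. Since, by construction in the definition, $\psi_\alpha$ is convex and nondecreasing on $[0,\infty)$ with $\psi_\alpha(0)=0$ (this is precisely why the definition linearizes $\psi_\alpha$ near the origin in the regime $0<\alpha<1$), the usual convexity computation shows that $\|\cdot\|_{\psi_\alpha}$ satisfies the triangle inequality, so in particular $\|X-\E[X]\|_{\psi_\alpha}\le \|X\|_{\psi_\alpha}+\|\E[X]\|_{\psi_\alpha}$. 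It therefore suffices to bound $\|\E[X]\|_{\psi_\alpha}$ by $a$.

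First I would compute the Orlicz norm of a constant. For a real number $c$ one has $\E[\psi_\alpha(|c|/t)]=\psi_\alpha(|c|/t)\le 1$ exactly when $|c|/t\le \psi_\alpha^{-1}(1)$, so $\|c\|_{\psi_\alpha}=|c|/\psi_\alpha^{-1}(1)$; here $\psi_\alpha^{-1}(1)$ is a fixed positive constant because $\psi_\alpha$ is continuous, strictly increasing, satisfies $\psi_\alpha(0)=0$, and tends to $\infty$. Taking $c=\E[X]$ gives $\|\E[X]\|_{\psi_\alpha}=|\E[X]|/\psi_\alpha^{-1}(1)$.

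Next I would bound $|\E[X]|$ by $a$. By the infimum in the definition of the Orlicz norm (and monotone convergence, which makes the infimum attained whenever it is finite) we have $\E[\psi_\alpha(|X|/a)]\le 1$. Applying Jensen's inequality to the convex function $\psi_\alpha$ and the nonnegative random variable $|X|/a$ yields $\psi_\alpha(\E[|X|]/a)\le \E[\psi_\alpha(|X|/a)]\le 1$, hence $\E[|X|]\le \psi_\alpha^{-1}(1)\,a$, and therefore $|\E[X]|\le \E[|X|]\le \psi_\alpha^{-1}(1)\,a$. Substituting into the previous paragraph, $\|\E[X]\|_{\psi_\alpha}\le a$, and combining with the triangle inequality gives $\|X-\E[X]\|_{\psi_\alpha}\le a+a=2a$, as claimed. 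To sidestep any concern about whether the infimum is attained, one can instead run the whole argument with an arbitrary $c>a$ in place of $a$ and then let $c\downarrow a$.

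There is essentially no obstacle here: the statement is a textbook fact about Orlicz spaces, and the only two ``moving parts'' of the argument — the triangle inequality and the Jensen step — rely on exactly the convexity and monotonicity of $\psi_\alpha$ already built into the definition. The only points deserving a sentence of care are that $\psi_\alpha^{-1}(1)$ is a fixed positive constant (so it cancels between the numerator bound $\E[|X|]\le\psi_\alpha^{-1}(1)a$ and the denominator of $\|\E[X]\|_{\psi_\alpha}$), and that $\|\cdot\|_{\psi_\alpha}$ really does obey the triangle inequality — which is the reason the definition takes pains to make $\psi_\alpha$ convex. The constant $2$ is not optimal, but it is all that is needed in the applications (e.g.\ Lemma~\ref{lem:concentration}).
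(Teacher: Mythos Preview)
Your proof is correct and follows essentially the same approach as the paper: both use Jensen's inequality on the convex $\psi_\alpha$ to control $|\E X|$ in terms of $a$, and then use convexity to split $X-\E[X]$ into the $X$ and $\E[X]$ pieces. The only difference is packaging: you route through the abstract triangle inequality for $\|\cdot\|_{\psi_\alpha}$ and an explicit computation of $\|\E[X]\|_{\psi_\alpha}$, whereas the paper carries out the same convexity step directly on the defining expectation, showing in one line that $\E\bigl[\psi_\alpha(|X-\E X|/(2a))\bigr]\le \tfrac12\E[\psi_\alpha(|X|/a)]+\tfrac12\psi_\alpha(|\E X|/a)\le \E[\psi_\alpha(|X|/a)]\le 1$.
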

\begin{proof}
	First of all, since $\psi_{\alpha}$ is convex and increasing on $[0,\infty)$, we have that $\Exp\left[\psi_{\alpha}(|X|/a)\right]\ge \psi_{\alpha}(\Exp[|X|]/a)\ge \psi_{\alpha}(|\Exp[X]|/a)$. Then we have that $$\Exp\left[\psi_{\alpha}(\frac{|X-\Exp X|}{2a})\right]\le \Exp[\psi_{\alpha}(\frac{|X|}{2a}+\frac{|\Exp X|}{2a})] \le \Exp\left[\frac{1}{2}\psi_{\alpha}(|X|/a) + \frac{1}{2}\psi_{\alpha}(|\Exp X|/a)\right]\le \Exp[\psi_{\alpha}(|X|/a)]\le 1$$
	where we used the convexity of $\psi_{\alpha}$ and the fact that $\Exp\left[\psi_{\alpha}(|X|/a)\right]\ge \psi_{\alpha}(|\Exp[X]|/a)$.
\end{proof}

\section{Experimental Details and Additional Results}
\label{sec:add_exp}

\noindent\textbf{Verification of the random-like nets hypothesis.} 
 Figure~\ref{fig:verify} plots some statistics for the 
the second fully connected layer in AlexNet (after 60 thousand training iterations).
Figure~\ref{fig:alexnet_fc7_w} shows the histogram of the edge weights, which is close to that of a Gaussian distribution. Figure~\ref{fig:alexnet_fc7_b} shows the bias in the RELU gates. The bias entries are essentially constant (in accord with Theorem~\ref{thm:single}, mostly within the interval [0.9, 1.1]. 
 Figure~\ref{fig:alexnet_fc7_eigen} shows that the distribution of the singular values of the weight matrix is close to the quartercircular law of random Gaussian matrices.

\begin{figure*}
\centering
\subfigure[Histogram of the entries in the weight matrix]{
	\includegraphics[width=0.3\textwidth]{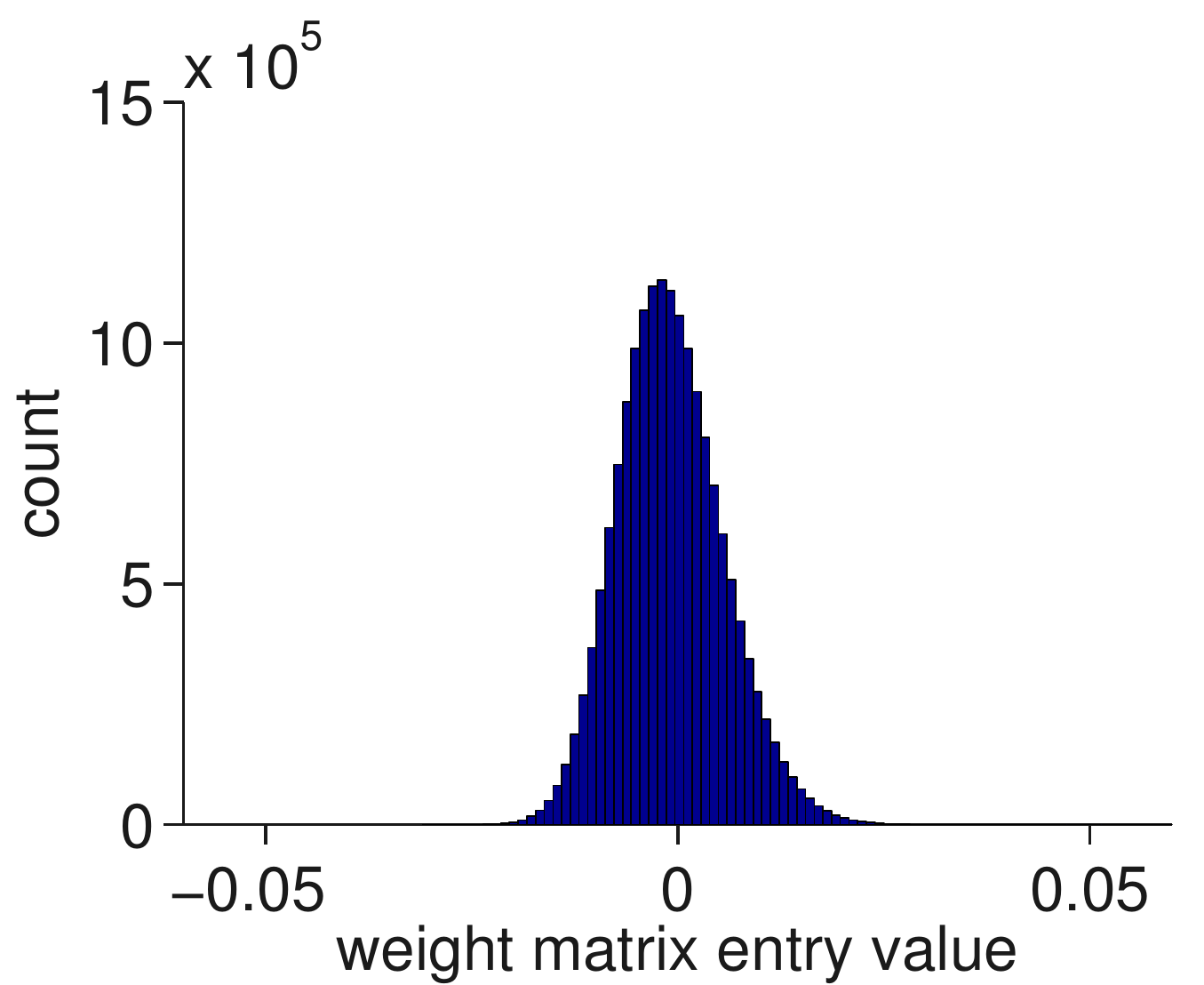} \label{fig:alexnet_fc7_w}
}
\hspace{2mm}
\subfigure[Histogram of the entries in the bias vector ]{
	\includegraphics[width=0.3\textwidth]{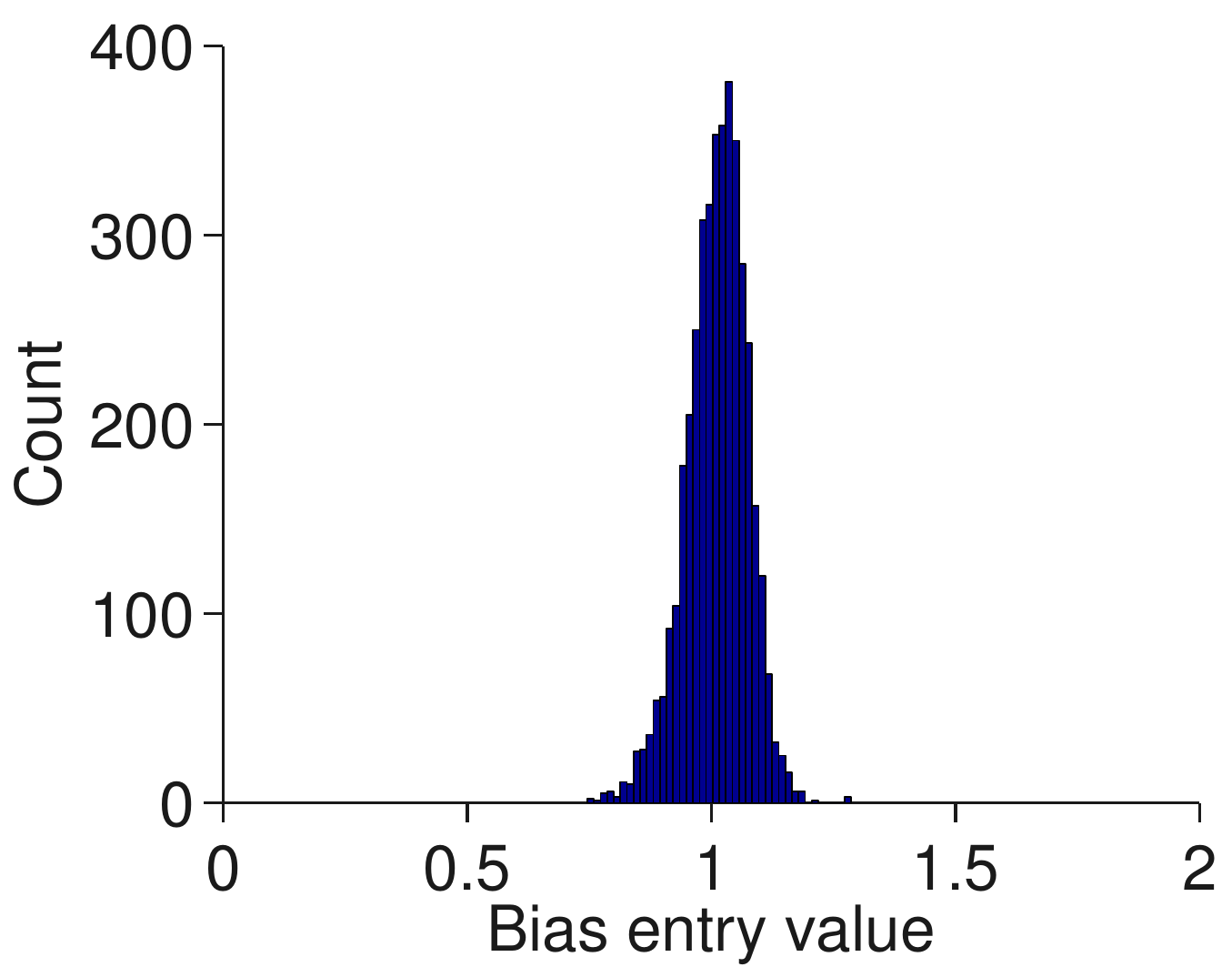} \label{fig:alexnet_fc7_b}
}
\hspace{2mm}
\subfigure[Singular values of the weight matrix]{
	\includegraphics[width=0.3\textwidth]{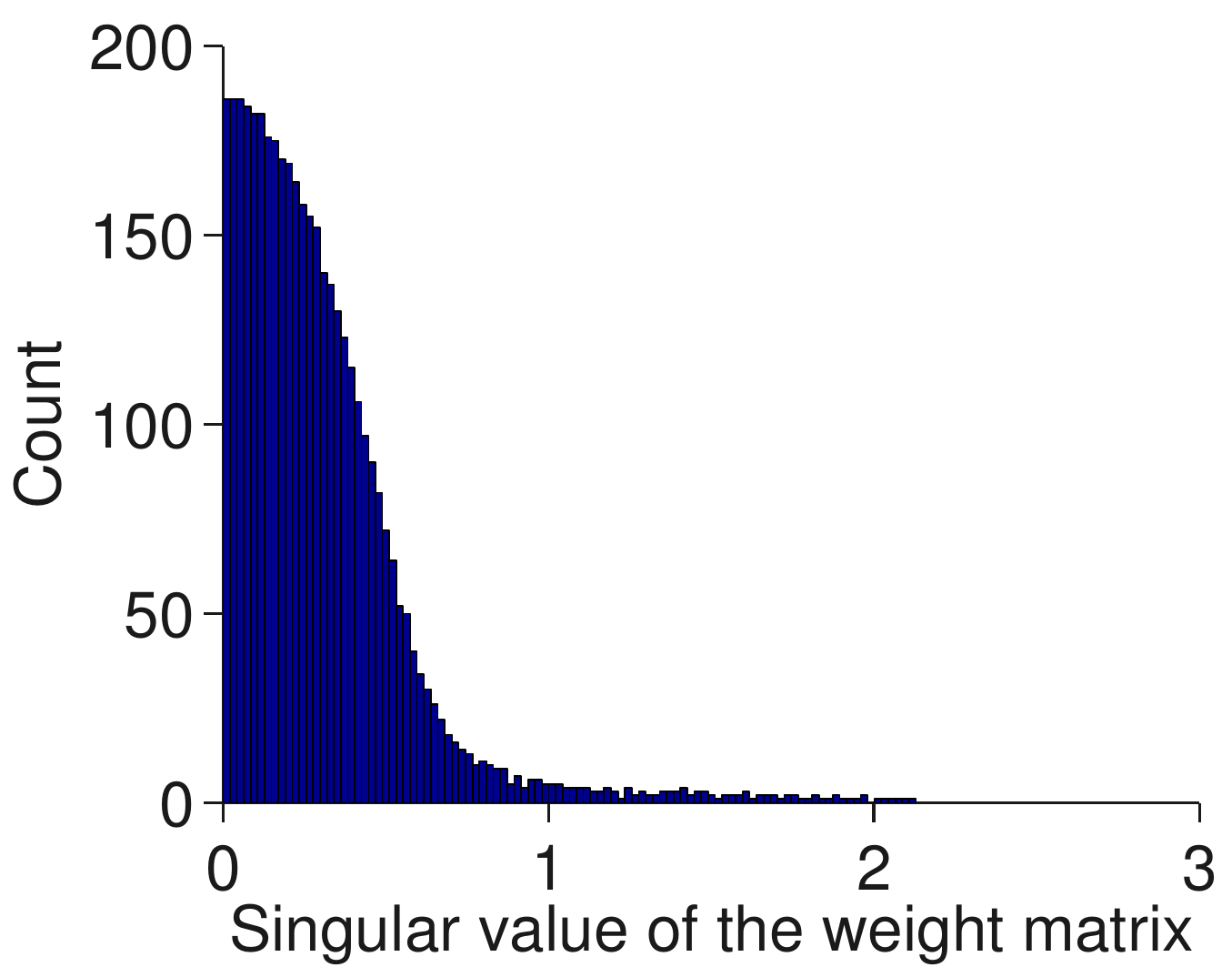} \label{fig:alexnet_fc7_eigen}
}
\caption{Some statistics of the parameters in the second fully connected layer of AlexNet.}
\label{fig:verify}
\end{figure*}

\noindent\textbf{Improved training using synthetic data.} 
The network we trained takes images $x$ as input and computes three layers $h_1, h_2$, and $h_3$ by
$$
	\tilde{h}_1 = r(W_1^\top x + b_1), ~~\tilde{h}_2 = r(W_2^\top h_1 + b_2), ~~\tilde{h}_3 = W_3^\top \tilde{h}_2 + b_3, 
$$
where $h_1$ and $h_2$ have 5000 nodes, and $h_3$ have 10 for CIFAR-10 and MNIST or 100 nodes for CIFAR-100 (corresponding to the number of classes in the dataset).  We generated synthetic data $\tilde{x}$ from $h_2$ by 
$$
	h_1= r(W_2 \tilde{h}_2 ), ~~x' = r(W_1 \tilde{h}_1 ).
$$
Then $x'$ was used for training along with $x$. 

For training the networks, we used mini-batches of size 100 and the learning rate of $4^\alpha \times 0.001$, where $\alpha$ is an integer between $-2$ and $2$. When training with our regularization, we used a regularization weight $2^\beta$, where $\beta$ is  an integer between $-3$ and $3$. To choose $\alpha$ and $\beta$, 10000 randomly chosen points that are kept out during the initial training as the validation set, and we picked the ones that reach the minimum validation error at the end. The network was then trained over the entire training set. All the networks were trained both with and without dropout. When training with dropout, the dropout ratio is 0.5. 

Figure~\ref{fig:shadow} in the main text shows the test errors with SHADOW combined with backpropagation and dropout.
Here we additionally show the performance with SHADOW combined with backpropagation alone, and also show the test error on the synthetic data generated from the test set. 
The rror drops much faster with SHADOW, and a significant advantage is retained even at the end. The test error on the synthetic data tracks that on the real data, which aligns with the theory.

\begin{figure}
\centering
\subfigure[CIFAR10]{
	\includegraphics[width=0.3\columnwidth]{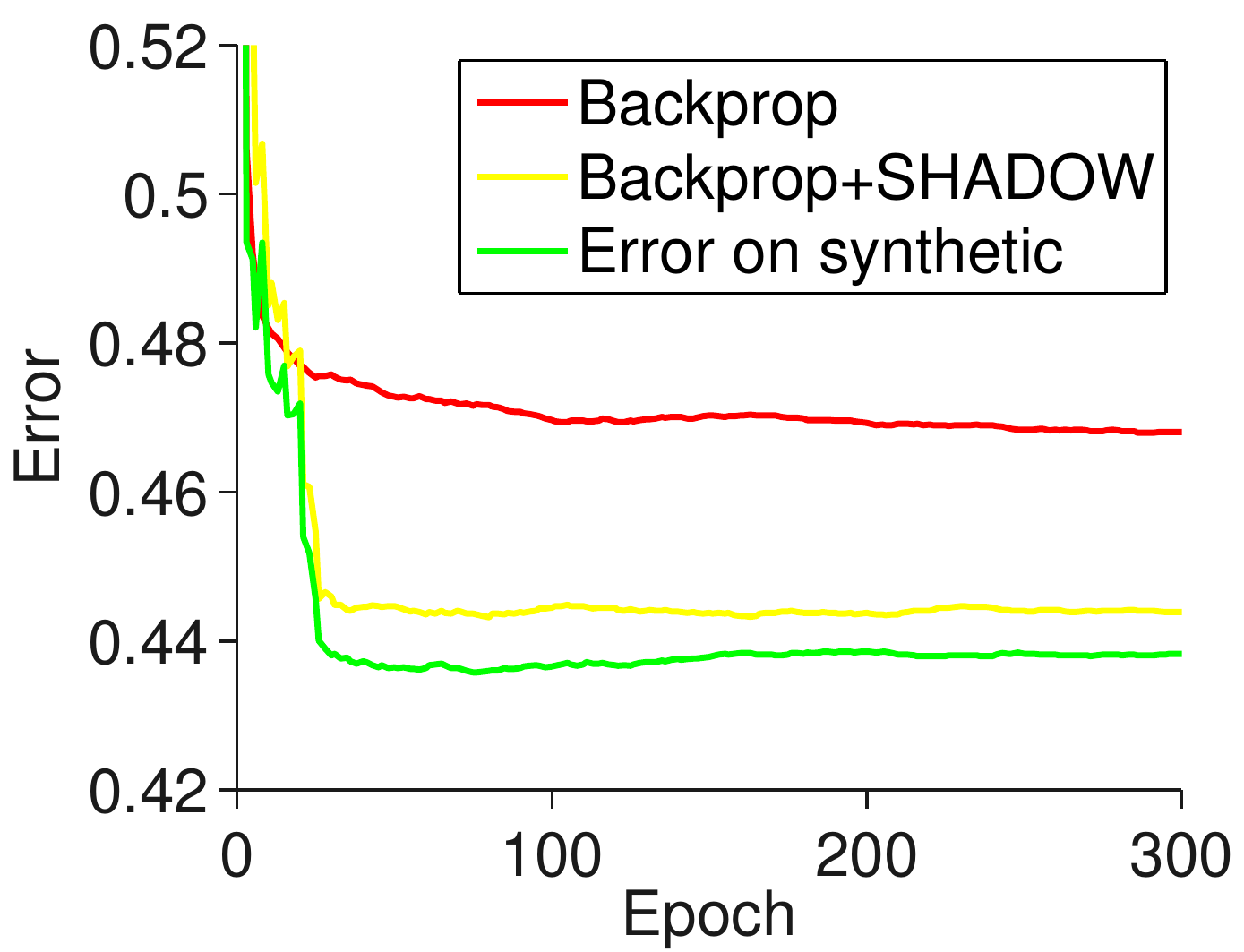}
}
\hspace{2mm}
\subfigure[CIFAR100]{
	\includegraphics[width=0.3\columnwidth]{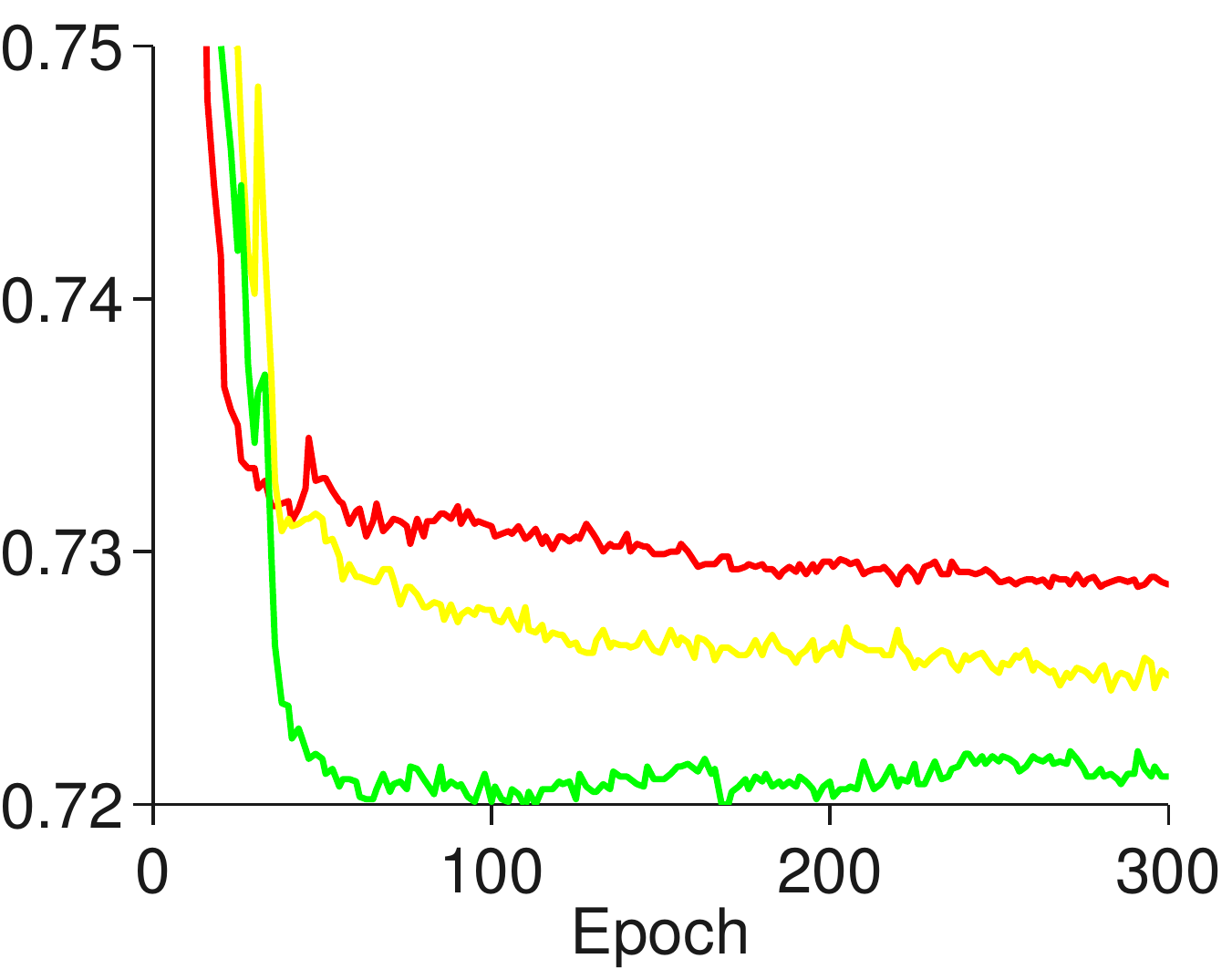}
}
\hspace{2mm}
\subfigure[MNIST]{
	\includegraphics[width=0.3\columnwidth]{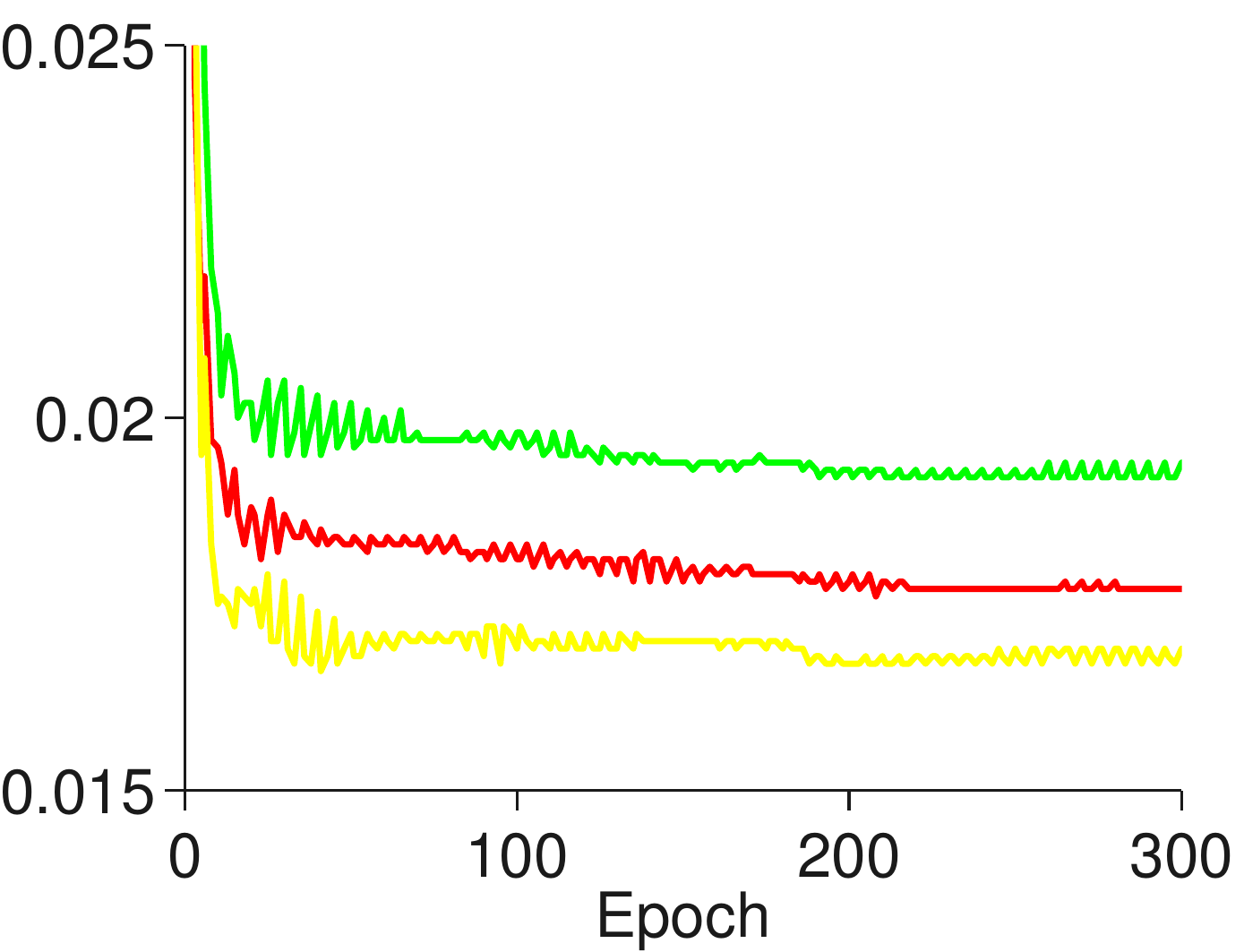}
}
\caption{Testing error of networks trained with and without our regularization for three datasets. Dropout was not used. ``error on synthetic'' is the testing error on the synthetic data generated from the test set.} 
\label{fig:vanilla_reg20}
\end{figure}

\noindent\textbf{Variants of the SHADOW regularization} 
Instead of using $\tilde{h}_2$ to generate the synthetic data, one can also use other layers. Figure~\ref{fig:reg_diff_layer} compares the results when using $\tilde{h}_3$ and $\tilde{h}_2$.  The error when using $\tilde{h}_3$ is similar or better than that using $\tilde{h}_2$. This indicates that the images generated from $\tilde{h}_3$ can play a similar role as those generated from $\tilde{h}_2$. 

One can also use sampling when generating synthetic images from the hidden layers. More precisely, one can use 
\begin{equation}
h_1= r(W_2 \tilde{h}_2 ) ~\odot n_{\drop} , ~~x' = r(W_1 \tilde{h}_1) ~\odot n_{\drop}
\end{equation}
where the sampling ratio is 0.5.
This adds more noise to the synthetic data and can also act as a regularization, since the true distribution should be robust to such noise, as suggested by the success of dropout regularization. 
Other prior knowledge about the true distribution can also be incorporated, such as smoothness of the images:
\begin{equation}
h_1= r(W_2 \tilde{h}_2 ) , ~~x' = r(W_1 \tilde{h}_1) , ~~x''=\text{Smooth}(x')
\end{equation}
where \text{Smooth} operator sets each pixel of $x''$ to be the average of the pixels in its $3\times 3$ neighborhood.
The performances of these variants are shown in Figure~\ref{fig:r20_sample_smooth}. With sampling, there is larger variance but the error is similar to that without sampling, in accord with our theory.  With smoothing, the error has larger variance at the beginning but the final error is lower than that without smoothing. 


\begin{figure}
\centering
\subfigure[without dropout]{
\includegraphics[width=0.4\columnwidth]{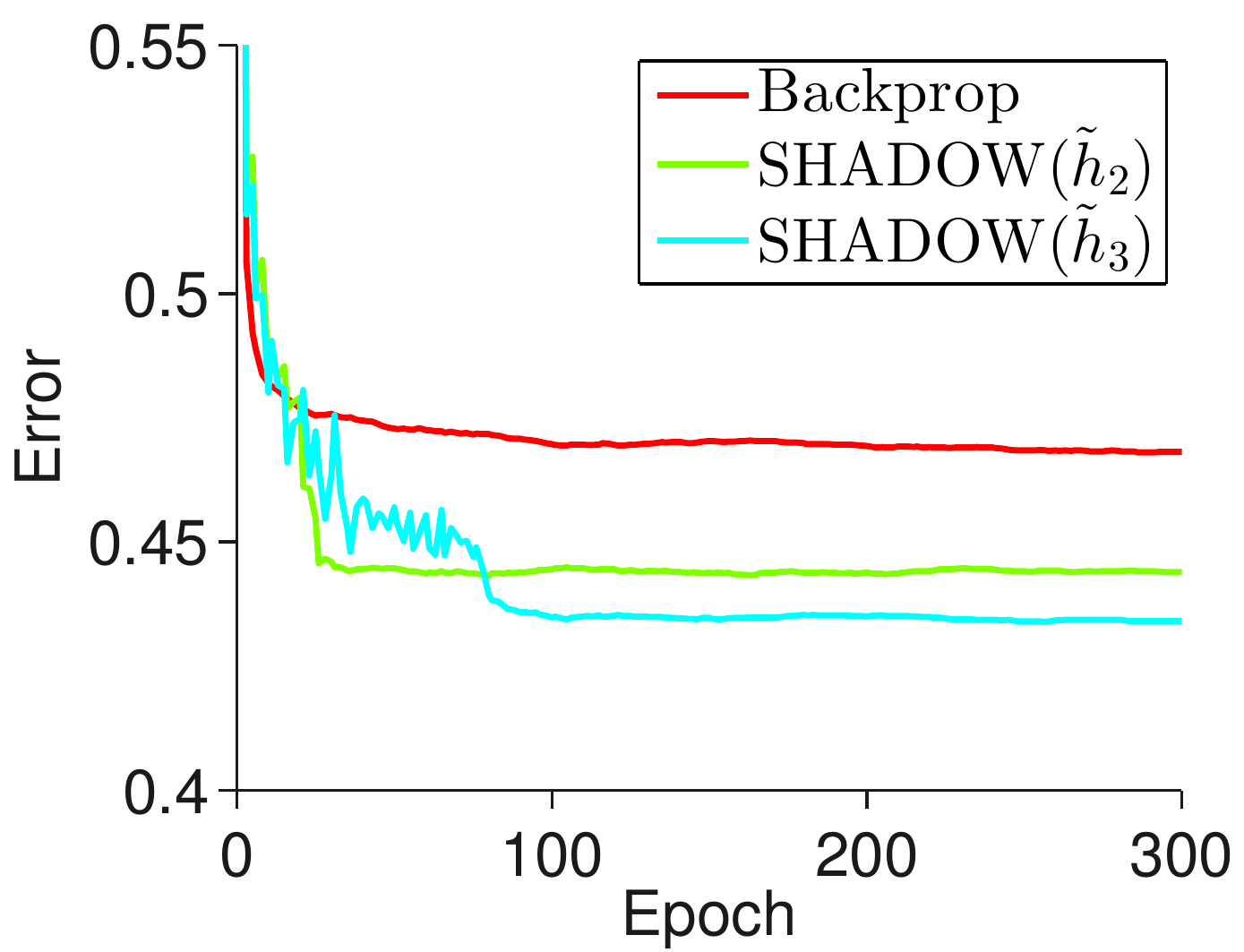} \label{fig:reg20_30}
}
\hspace{2mm}
\subfigure[with dropout]{
\includegraphics[width=0.4\columnwidth]{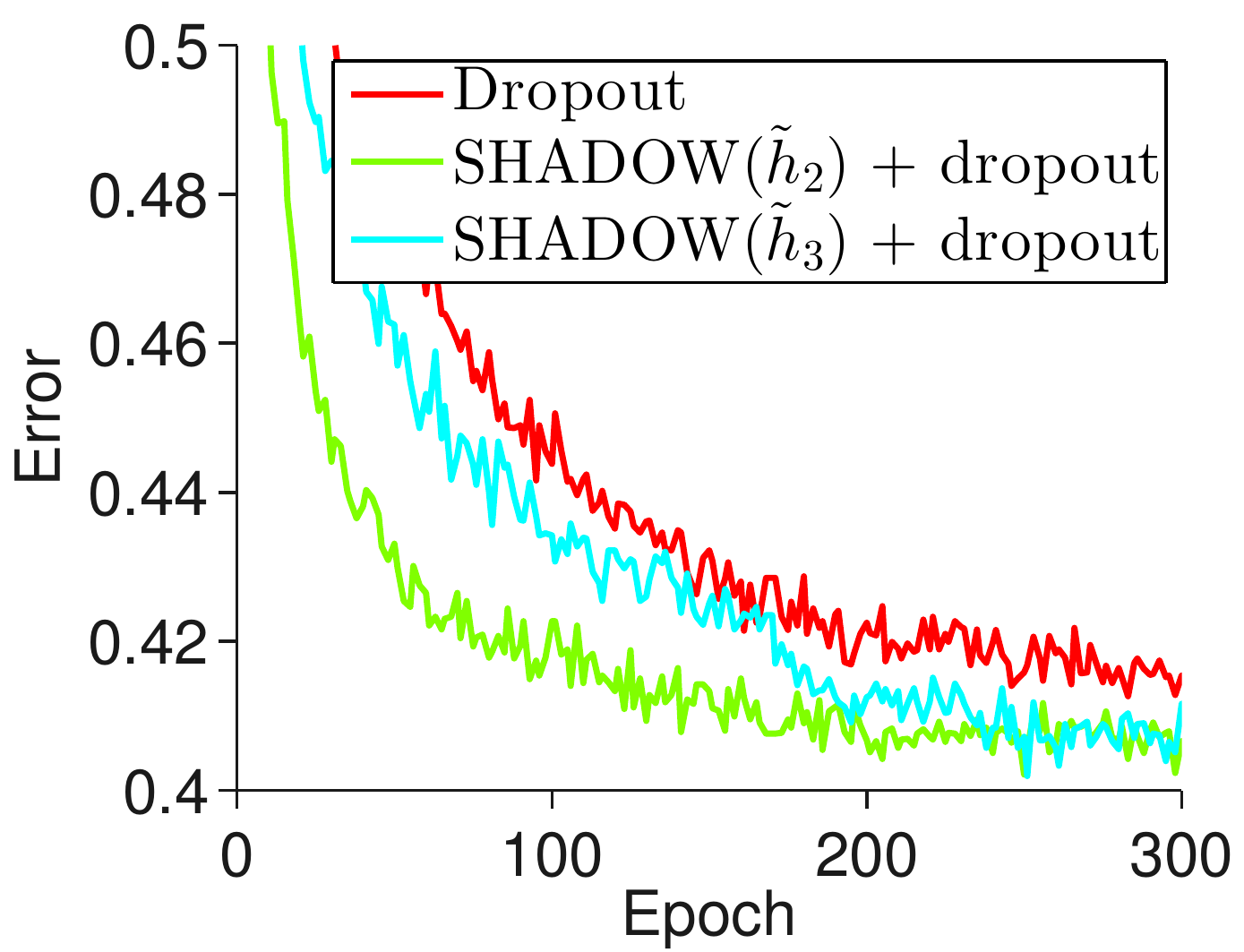} \label{fig:rd20_30}
}
\caption{Testing error of networks trained with our regularization on CIFAR-10 using the hidden layer $\tilde{h}_2$ or $\tilde{h}_3$ to generate synthetic data.} 
\label{fig:reg_diff_layer} 
\end{figure}

\begin{figure}
\centering
\includegraphics[width=0.5\columnwidth]{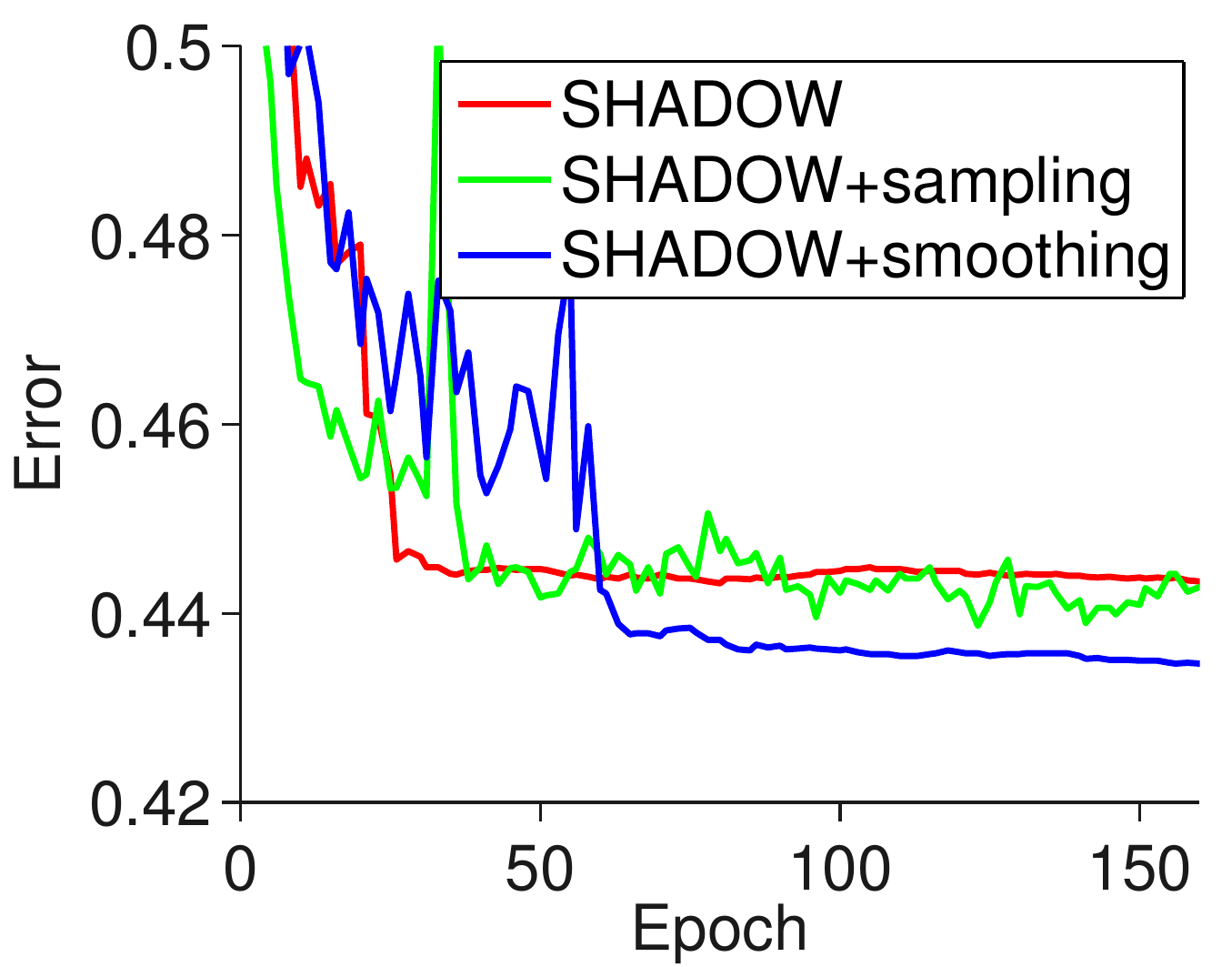} 
\caption{Testing error of networks trained with variants of our regularization on CIFAR-10. SHADOW: uses the hidden layer $h_2$ to generate synthetic data for training. SHADOW+sampling: also uses $\tilde{h}_2$ but randomly sets half of the entries to zeros when generating synthetic data. SHADOW+smoothing: also uses $\tilde{h}_2$ but smooths the synthetic data before using them for training. } \label{fig:r20_sample_smooth}
\end{figure}

\end{document}